\title{Byzantine-Robust Online and Offline Distributed Reinforcement Learning}
\author{%
  Yiding Chen\\
  University of Wisconsin-Madison\\
  \texttt{yiding@cs.wisc.edu} \\
  \And
  Xuezhou Zhang\\
  Princeton University\\
  \texttt{xz7392@princeton.edu} \\
  \And
  Kaiqing Zhang\\
  Massachusetts Institute of Technology\\
  \texttt{kaiqing@mit.edu}
  \And
  Mengdi Wang\\
  Princeton University\\
  \texttt{mengdiw@princeton.edu}
  \And
  Xiaojin Zhu\\
  University of Wisconsin-Madison\\
  \texttt{jerryzhu@cs.wisc.edu} \\
}
\begin{document}

\maketitle

\begin{abstract}
We consider a distributed reinforcement learning setting where multiple agents separately explore the environment and communicate their experiences through a central server. However,
$\alpha$-fraction of agents are adversarial and can report arbitrary fake information.
Critically, these adversarial agents can collude and their fake data can be of any sizes.
We desire to robustly identify a near-optimal policy for the underlying Markov decision process in the presence of these adversarial agents.
Our main technical contribution is \textsc{Weighted-Clique}, a novel algorithm for the \textit{robust mean estimation from batches} problem, that can handle arbitrary batch sizes.
Building upon this new estimator, in the offline setting, we design a Byzantine-robust distributed  pessimistic value iteration algorithm;
in the online setting, we design a Byzantine-robust distributed optimistic value iteration algorithm.
Both algorithms obtain near-optimal sample complexities and achieve superior robustness guarantee than prior works.
\end{abstract}

\section{Introduction}
Distributed learning systems have been one of the main driving force to recent successes of deep learning \citep{verbraeken2020survey,goyal2017accurate,abadi2016tensorflow}.  
Advances in designing efficient distributed optimization algorithms\citep{horgan2018distributed} and deep learning infrastructures \citep{espeholt2018impala} have enables training powerful models with hundreds of billions of parameters \citep{brown2020language}. However, with the outsourcing of computation and data collection, new challenges emerges.
In particular, distributed system has been found vulnerable to Byzantine failure \citep{lamport1982byzantine}, meaning there could be agents with failure that may send arbitrary information to the central server.
Even a small number of Byzantine machines who send out moderate value can lead to a significant loss in performance \citep{yin2018byzantine,ma2019policy, zhang2020adaptive}, which raise security concern in real world applications such as chatbot \citep{neff2016automation} and autonomous vehicles \citep{eykholt2018robust, ma2021adversarial}.
In addition, other desired properties are chased after, such as protecting data privacy of individual data contributors \citep{sakuma2008privacy, liu2019privacy} and and reducing communication cost\citep{dubey2021provably}.
These challenges requires new algorithmic design on the server side, which is the main focus of this paper.

When it comes to reinforcement learning (RL), distributed learning has been prevalent to many large scale decision making problems even before the deep learning era, such as cooperative learning in robotics systems \citep{ding2020distributed}, power grids optimization \citep{yu2014multi}, automatic traffic control \citep{bazzan2009opportunities}. Different from supervised learning where the data distribution of interest is often fixed a prior, reinforcement learning requires active exploration on the agent's side to discover the optimal policy for the current task, thus creating new challenges in achieving the above desiderata while exploring in an unknown environment. 

This paper studies this precise problem:
\begin{center}\emph{Can we design a distributed RL algorithm that is sample efficient and robust to Byzantine agents, while having small communication costs and promote data privacy?}\end{center}

We study Byzantine-robust RL in both online and offline reinforcement learning settings:
in the online setting, a central server is designed to outsource the exploration task to $m$ agents, and the agents collect experiences and send back to the server, and the server use the data to update its policy;
in the offline setting, a central server collects logged data from $m$ agents and use the data to identify a good policy, without the additional interaction to the environment. 
Importantly, among the $m$ agents, $\alpha$ fraction are Byzantine, meaning they are allowed to send out arbitrary data in both the online and offline setting.
We summarize our contributions as following:
\begin{enumerate}[leftmargin=*,itemsep=0pt]
\item We design \textsc{Weighted-Clique}, a robust mean estimation algorithm for learning from batches. 
By utilizing the batch structure, the estimation error of our algorithm vanishes with more data.
Compared to prior works \citep{qiao2017learning,chen2020efficiently,jain2021robust, yin2018byzantine}, our algorithm adapts to arbitrary batch sizes, which is desired in many applications of interests.
\item We design \textsc{Byzan-UCBVI}, a Byzantine-Robust variant of optimistic value iteration for online RL by calling \textsc{Weighted-Clique} as a subroutine.
We show that \textsc{Byzan-UCBVI} achieves near-optimal regret with $\alpha$-fraction Byzantine agents.
Meanwhile, \textsc{Byzan-UCBVI} also enjoys a logarithmic communication cost and 
switching cost \citep{bai2019provably,zhang2020almost,gao2021provably}, and preserves data privacy of individual agents.
\item We design \textsc{Byzan-PEVI}, a Byzantine-Robust variant of pessimistic value iteration for offline RL again utilizing \textsc{Weighted-Clique} as a subroutine.
Despite of the presence of Byzantine agents, 
we show that \textsc{Byzan-PEVI} can learn a near-optimal policy with polynomial name of samples, when certain good coverage properties are satisfied \citep{zhang2021corruption}.
\end{enumerate}

\section{Related Work}
\paragraph{Reinforcement learning:}
Reinforcement learning studies the optimal strategy in a Markov Decision Process (MDP) \citep{sutton2018reinforcement}. 
\citep{azar2017minimax, dann2017unifying} show that UCB style algorithm achieves minimax regret bound in tabular MDPs. 
Recent work extend the theoretical understanding to RL with function approximation \citep{jin2020provably, yang2020reinforcement}.
\citep{jin2021pessimism, rashidinejad2021bridging} use pessimistic strategy to efficiently learn a nearly optimal policy in offline setting. 
Recently, \citep{bai2019provably,zhang2020almost,gao2021provably} study low switching cost RL algorithm, meaning the learning agent has small budget for policy changes.

\paragraph{Distributed reinforcement learning:}
Parallel RL deploys large-scale models in distributed system \citep{kretchmar2002parallel}. 
\citep{horgan2018distributed, espeholt2018impala} provide distributed architecture for deep reinforcement learning by parallelizing the data generating process.
\citep{dubey2021provably, agarwal2021communication, chen2021communication} provide the first sets of theoretical guarantee for performance and communication cost in parallel RL.

\paragraph{Robust statistics:} Robust statistics has a long history \citep{huber1992robust, tukey1960survey}, which studies learning with corrupted dataset. 
In modern machine learning, models are high dimensional. 
Recent work provide sample and computationally efficient algorithms for robust mean and covariance estimation in high dimension \citep{diakonikolas2019recent, lai2016agnostic}. 
Shortly after, those robust mean estimators are applied to robust supervised learning \citep{diakonikolas2019sever, prasad2018robust} and RL \citep{zhang2021corruption,zhang2021robust}.   
\paragraph{Robust learning from batches:}
Another line of work studies robust learning from batches \citep{qiao2017learning,chen2020efficiently,jain2021robust,yin2018byzantine}.
A collection of data is generated from data sources while a fraction of the data sources are corrupted. 
By exploiting the batch structure of the data, these algorithms achieve significantly high accuracy than non-batch setting \citep{diakonikolas2019recent}.
To our best knowledge, all of these work study batches with equal size.
Our paper is the first that generalizes to the setting where the batch size varies. Similarly, these works all assume the same batch sizes from each agent, which may not be true in many crowd-sourcing applications.

\paragraph{Byzantine-robust distributed learning:}
Byzantine-Robust learning algorithm studies learning under Byzantine failure \citep{lamport1982byzantine}.
\citep{chen2017distributed} provides a Byzantine gradient descent via geometric median of mean estimation for the gradients.
\citep{yin2018byzantine} provides robust distributed gradient descent algorithms with optimal statistics rates.

\paragraph{Corruption robust RL and Byzantine-robust RL:}
There is a line of work studying adversarial attack against reinforcement learning \citep{ma2019policy,zhang2020adaptive, huang2017adversarial}, and corruption robust  reinforcement RL for online \citep{zhang2021robust, lykouris2021corruption} and offline \citep{zhang2021corruption} settings.
\citep{jadbabaie2022byzantine} studies Byzantine-Robust linear bandits in federated setting. Unlike our setting, they allow different agents to be subject to Byzantine attack in different episodes. Our algorithm enjoys a better regret bound and communication cost.
\citep{fan2021fault} provides a Byzantine-robust policy gradient algorithm that is guaranteed to converge to an approximate stationary point while we focus the regret of the algorithm.
\citep{dubey2020private} studies Byzantine-Robust multi-armed bandit, where the corruption can only come from a fixed distribution.
We study a more difficult MDP setting and allow the corruption to be arbitrary.

\section{Robust Mean Estimation from Untruthful Batches}
\label{sec:robust_mean}
We first present our novel algorithm, \textsc{Weighted-Clique}, for the \textit{robust mean estimation from batches} problem, which we define below. \textsc{Weighted-Clique} will be the main workhorse later in our algorithms for both offline and online Byzantine-robust RL problems.

\begin{definition}[Robust mean estimation from batches]\label{def:rb ln fm bt}
There are $m$ data providers indexed by: $\{1, 2, \ldots, m\} =: [m]$. 
Among these providers, we denote the indexes of uncorrupted providers by $\Gcal$
and the indexes of corrupted providers by $\Bcal$,
where $\Bcal \cup \Gcal = [m]$, $\Bcal \cap \Gcal = \emptyset$,  $\left|\Bcal\right| = \alpha m$. 
Any uncorrupted providers has access to a sub-Gaussian distribution $\Dcal$ with mean $\mu$ and variance proxy $\sigma^2$ 
(i.e. $
\EE_{X\sim \mathcal{D}}[X] = \mu
$
and
$
\EE_{X \sim \mathcal{D}}\left[\exp\left(s \left(X - \mu\right)\right)\right] \le    \exp\left({\sigma^2 s^2}/{2}\right)
$,
$
\forall s \in \RR.
$).
For each $j \in [m]$, a data batch 
$\left\{x_j^i\right\}_{i=1}^{n_j}$
is sent to the learner, where $n_j$ can be arbitrary.
For $j\in\Gcal$, 
$\left\{x_j^i\right\}_{i=1}^{n_j}$
are i.i.d. samples drawn from $\Dcal$;
for $j \in \Bcal$, 
$\left\{x_j^i\right\}_{i=1}^{n_j}$
can be arbitrary.
\end{definition}
\pref{def:rb ln fm bt} considers a robust learning problem from batches where we allow arbitarily different batch sizes.  In contrast, prior works  \citep{qiao2017learning,chen2020efficiently,jain2021robust} have only studied the setting with (roughly) equal batch sizes, which is much more restricted.
For this problem, we propose the \textsc{Weighted-Clique} algorithm (\pref{alg:int}). 
Given the batch datasets $\left\{x_1^i\right\}_{i=1}^{n_1}, \left\{x_2^i\right\}_{i=1}^{n_2}, \ldots, \left\{x_m^i\right\}_{i=1}^{n_m}$, parameter $\sigma$ of the sub-Gaussian distribution, corruption level $\alpha$,
and confidence level $\delta$,
\textsc{Weighted-Clique} first performs a clipping step (\pref{ln:clip}) to clip the sizes of the largest $2\alpha m$ batches to the size of the $(2\alpha+1) m$-th largest batch. This is to reduce the impact of corrupted batch on the weighted average in \pref{ln:wted mean}.
Next, a set confidence intervals for the true mean $\mu$ is constructed in \pref{ln:CI} based on the data of each batch, where $I_j = \RR$ if $n_j = 0$.
In order to remove the outliers, the algorithm find the largest set of batches whose confidence intervals all intersect. This can be formulated as a maximum-clique problem, and thus the name \textsc{Weighted-Clique}.
The largest clique can be found efficiently by sorting and scanning endpoints of all $I_j$'s. 
This algorithm returns the weighted average of empirical means of the maximum clique, where the weights are given by clipped sample size, $\tilde n_j$.
\begin{algorithm}
\caption{\textsc{Weighted-Clique}} \label{alg:int}
\begin{algorithmic}[1]
\REQUIRE $\left\{\left\{x_j^i\right\}_{i=1}^{n_j}\right\}_{j \in [m]}$, $\sigma$, $\alpha$, $\delta > 0$
\STATE $\hat x_j \gets \frac{1}{n_j}\sum_{i=1}^{n_j}x_j^i$, for all $j \in [m]$
\STATE $\ncut \gets \mbox{$(2\alpha m+1)$-th largest value in $\{n_j\}_{j\in[m]}$}$  \label{ln:n cut}
\FOR{$j = 1,2,\ldots,m$}
\STATE \label{ln:clip} 
$\tilde n_{j} \gets \min(n_j, \ncut)$ 
\STATE \label{ln:CI} $I_j \gets \left[\hat x_j - \frac{\sigma}{\sqrt{\tilde n_j}}\sqrt{2\log \frac{2m}{\delta}},\hat x_j + \frac{\sigma}{\sqrt{\tilde n_j}}\sqrt{2\log \frac{2m}{\delta}}\right]$
\ENDFOR
\STATE \label{ln:cliq} $U^* \gets \argmax_{U \mbox{ s.t. }  \emptyset\neq\bigcap_{j \in U}I_j }|U|$
\RETURN \label{ln:wted mean}$
\hat x \gets \frac{1}{\sum_{j \in U^*}\tilde n_j}\sum_{j \in U^*} \tilde n_j \hat x_j, \quad \operatorname{Error} \gets \mbox{RHS of \pref{eq:err bound}}    
$
\end{algorithmic}
\end{algorithm}

Intuitively, by choosing the maximum clique in \pref{ln:cliq}, \pref{alg:int} finds a cluster of good data batches and drops extreme batches. 
We show that \pref{alg:int} achieves the following guarantee.
\begin{theorem}\label{thm:weighted int}
Under \pref{def:rb ln fm bt}, if $\ncut > 0$, $\alpha < \frac{1}{2}$, with probability at least $1-\delta$, $\hat x$ returned by \pref{alg:int} satisfies:
\begin{align}
\left|\hat x - \mu\right| 
\le & \frac{2}{\sqrt{\sum_{j \in [m]}\tilde n_j}}
\sigma\sqrt{2\log\frac{2}{\delta}} 
+ \frac{8\alpha m\sqrt{\ncut}}{\sum_{j \in [m]}\tilde n_j}
\sigma\sqrt{2\log\frac{2m}{\delta}}
\label{eq:err bound}
\end{align}
where $\ncut$ and $\tilde n_j$'s are defined in \pref{ln:n cut} and \pref{ln:clip} in \pref{alg:int}.
\end{theorem}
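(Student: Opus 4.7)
The plan is a concentration-plus-combinatorics argument built around the maximum clique returned by the algorithm.

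First, I would set up two high-probability events via sub-Gaussian tails and a union bound. (i) For every good batch $j \in \Gcal$, the sub-Gaussian tail yields $|\hat x_j - \mu| \leq \sigma\sqrt{2\log(2m/\delta)}/\sqrt{n_j}$; since $\tilde n_j \leq n_j$, this forces $\mu \in I_j$, so the good intervals form a valid clique at $\mu$. (ii) The weighted deviation $A := \sum_{j\in\Gcal}\tilde n_j(\hat x_j - \mu)$ is a sum of independent sub-Gaussians with variance proxy $\sigma^2\sum_{j\in\Gcal}\tilde n_j^2/n_j \leq \sigma^2\sum_j\tilde n_j$, so $|A| \leq \sigma\sqrt{\sum_j\tilde n_j}\,\sqrt{2\log(2/\delta)}$ with high probability. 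Absorbing factors of two via standard $\delta$-splitting, both events hold with probability at least $1-\delta$.

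Working on this event, the clique combinatorics fall out cleanly. Since $\Gcal$ is a clique, $|U^*| \geq (1-\alpha)m$, hence $|U_g| := |U^* \cap \Gcal| \geq (1-2\alpha)m$, $|U_b| := |U^* \cap \Bcal| \leq \alpha m$, and $|\Gcal \setminus U_g| \leq \alpha m$. A pigeonhole argument then produces an ``anchor'' $j_0 \in U_g$ with $\tilde n_{j_0} = \ncut$: at least $2\alpha m + 1$ indices satisfy $n_j \geq \ncut$, of which at most $\alpha m$ lie in $\Bcal$ and at most $\alpha m$ more in $\Gcal \setminus U_g$, leaving at least one. Picking any $x^* \in \bigcap_{j\in U^*}I_j$, the facts $\mu \in I_{j_0}$ and $x^* \in I_{j_0}$ together give $|x^* - \mu| \leq 2\sigma\sqrt{2\log(2m/\delta)}/\sqrt{\ncut}$.

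Next I decompose the error as
\begin{align*}
\hat x - \mu \;=\; \frac{1}{N}\Bigl[\,A \;-\!\!\!\sum_{j\in\Gcal\setminus U_g}\!\!\tilde n_j(\hat x_j - \mu) \;+\; \sum_{j\in U_b}\tilde n_j(\hat x_j - \mu)\,\Bigr],
\end{align*}
with $N := \sum_{j\in U^*}\tilde n_j$. The $A$-term is controlled by (ii). Each missing-good term satisfies $\tilde n_j|\hat x_j - \mu| \leq \sqrt{\tilde n_j}\,\sigma\sqrt{2\log(2m/\delta)} \leq \sqrt{\ncut}\,\sigma\sqrt{2\log(2m/\delta)}$, so their sum is at most $\alpha m\sqrt{\ncut}\,\sigma\sqrt{2\log(2m/\delta)}$. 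For each $j \in U_b$, triangle inequality through $x^*$ and the anchor gives $|\hat x_j - \mu| \leq \sigma\sqrt{2\log(2m/\delta)}(1/\sqrt{\tilde n_j} + 2/\sqrt{\ncut})$, hence $\tilde n_j|\hat x_j - \mu| \leq 3\sqrt{\ncut}\,\sigma\sqrt{2\log(2m/\delta)}$, summing to at most $3\alpha m\sqrt{\ncut}\,\sigma\sqrt{2\log(2m/\delta)}$.

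The final step replaces $N$ by $\sum_j\tilde n_j$ in the denominator. Since at most $\alpha m$ batches lie outside $U^*$ and each has $\tilde n_j \leq \ncut$, we have $\sum_j\tilde n_j - N \leq \alpha m\ncut$; conversely, $\ncut$ being the $(2\alpha m + 1)$-th largest $n_j$ forces $\sum_j\tilde n_j \geq (2\alpha m + 1)\ncut > 2\alpha m\ncut$, so $N \geq \tfrac12\sum_j\tilde n_j$. Substituting $1/N \leq 2/\sum_j\tilde n_j$ into the two-term numerator bound yields \pref{eq:err bound}. The main obstacle, I expect, is recognizing that the clipping threshold $\ncut$ in \pref{ln:n cut} is doing double duty: it simultaneously caps adversarial leverage on the weighted mean and guarantees $N = \Theta(\sum_j\tilde n_j)$, which is the step that cleans up the denominator into the form stated.
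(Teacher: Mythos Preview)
Your proposal is correct and follows essentially the same route as the paper: the same two concentration events (per-batch and clipped-weighted-sum over $\Gcal$), the same three-way decomposition of $\hat x - \mu$ into the $\Gcal$-weighted sum plus corrections for $\Gcal\setminus U^*$ and $U^*\cap\Bcal$, the same anchor argument producing a good $j_0\in U^*$ with $\tilde n_{j_0}=\ncut$, and the same $N\ge \tfrac12\sum_j\tilde n_j$ step (the paper derives it via $\sum_{U^*}\tilde n_j - \sum_{[m]\setminus U^*}\tilde n_j \ge (\alpha m+1)\ncut - \alpha m\,\ncut \ge 0$, while you use $\sum_j\tilde n_j \ge (2\alpha m+1)\ncut$ directly; both are fine). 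The only cosmetic difference is that the paper bounds bad batches by intersecting $I_i$ with $I_{j^*}$ directly rather than going through a common point $x^*$, and the paper's own write-up actually concludes with probability $1-2\delta$ rather than performing the $\delta$-splitting you mention.
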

A number of immediate remarks are in order.
\begin{remark}
Note that comparing to prior works \citep{qiao2017learning,chen2020efficiently,jain2021robust}, we allow arbitrary batch sizes. Even if some agents report $n_j = 0$, as long as $\ncut >0$, i.e. there are at least $2\alpha m + 1$ agents reporting non-zero $n_j$'s, our estimator will have a well-behaved error bound. This means that the breakdown point (in the sense of fraction of bad agent) of our algorithm is $\frac{1}{2}$, which is optimal.
\end{remark}
\begin{remark}[Equal batch size case]
When $n_1 = \cdots = n_m=n$, the right hand side of \pref{eq:err bound} becomes
$
 O\left(\frac{\sigma}{\sqrt{n}}\left(\frac{1}{\sqrt{m}} + \alpha\sqrt{\log m}\right)\right).
$
This recovers the rate in \citep{yin2018byzantine}, which is optimal (up to logarithmic factors).
\end{remark}
\begin{remark}[Robust mean estimation v.s. robust mean  estimation from batches]
In classical robust mean estimation setting \citep{huber1992robust,diakonikolas2017being}, the optimal error rate is $O\left(\sigma\left(\alpha + \frac{1}{\sqrt{m}}\right)\right)$ given $m$ total samples and $\alpha$ faction corrupted samples. 
In contrast, due to having data source ID, i.e. the batch indices, the adversary are much restricted. To see this, notice that the equal batch setting can be viewed as robust mean estimation from $m$ data points $\hat x_j$'s. When the batch size $n$ becomes larger, $\hat x_j$ has a smaller variance $\frac{\sigma^2}{{n}}$ and thus the error of robust mean estimation becomes $O\left(\frac{\sigma}{\sqrt{n}}\left(\alpha + \frac{1}{\sqrt{m}}\right)\right)$, which matches the above rate (up to logarithmic factors).
\end{remark}
\begin{remark}[Impossibility result]\label{rem:imposs lg bt}
Our bound in \pref{eq:err bound} does not depend on the largest $2\alpha m$ $n_j$'s. 
This means even if some of the clean agents have infinite samples, the algorithm cannot get a very low error. 
This might look not ideal at first glance, but we show that this is inevitable information-theoretically. Interested readers are referred to \pref{thm:impsb}.
\end{remark}

\begin{remark}[Perturbation stability of the estimator and adaption to distributed setting]
When the good data batch is subject to point-wise perturbation of magnitude at most $\epsilon$, a variant of \prettyref{alg:int} (\pref{alg:int pert} \textsc{Pert-Weighted-Clique}, see \pref{sec:adp pert dist}) suffers at most a $2\epsilon$ term in the error upper bound in addition to \prettyref{eq:err bound}. 
\pref{alg:int} does not need the exact dataset as input,
but only the empirical mean and batch sizes of each data batch.
As we see later, this property is essential to achieve low communication cost and preserve data privacy.
\end{remark}

\section{Byzantine-Robust Learning in Parallel MDP}
We study the problem of Byzantine-robust reinforcement learning in the parallel \emph{Markov Decision Processes} (MDPs) setting with one central server and $m$ agents, $\alpha$ fraction of which may suffer Byzantine failure.
We postpone the precise interaction protocols between the server and agents to \pref{sec:ucb vi} and \pref{sec:pevi}.

In both online and offline settings, we consider a finite horizon episodic tabular Markov Decision Process (MDP)
$\Mcal = \left(\Scal, \Acal, \Pcal, \Rcal, H, \mu_1\right)$.
Where $\Scal$ is the finite state space with $|\Scal|=S$;
$\Acal$ is the finite action space with $|\Acal|=A$;
$\Pcal = \left\{P_h\right\}_{h=1}^H$ is the sequence of transition probability matrix, meaning
$\forall h \in [H]$, $P_h: \Scal \times \Acal \mapsto \Delta(\Scal)$ and $P_h(\cdot | s,a)$ specifies the state distribution in step $h+1$ if action $a$ is taken from state $s$ at step $h$;
$\Rcal = \left\{R_h\right\}_{h=1}^H$ is the sequence of bounded stochastic reward function, meaning $\forall h \in [H]$, $R_h(s,a)$ is the stochastic reward bounded in $[0,1]$ associated with taking action $a$ in state $s$ at step $h$;
$H$ is the time horizon;
$\mu_1$ is the initial state distribution.
For simplicity, we assume $\mu_1$ is deterministic, and has probability mass $1$ on state $s_1$.

Within each episode, the MDP starts at state $s_1$. At each step $h$, the agent observes current state $s_h$ and take an action $a_h$ and receives a stochastic reward $R_h(s_h,a_h)$. After that, the MDP transits to a next state $s_{h+1}$, which is drawn from $P_h(\cdot|s,a)$.
The episode terminates after the agent takes action $a_H$ in state $s_H$ and receives reward $R_H(s_H,a_H)$ at step $H$.

A policy $\pi$ is a sequence of functions $\left\{\pi_1, \ldots, \pi_H\right\}$, each maps from state space $\Scal$ to action space $\Acal$. 
The value function $V_h^{\pi}: \Scal \mapsto [0, H-h+1]$, is the expected sum of future rewards by taking action according policy $\pi$, i.e.
$
V_h^{\pi}(s) := \EE\left[\left.\sum_{t = h}^H R_t(s_t,\pi_t(s_t))\right|s_h = s \right],
$
where the expectation is w.r.t. to the stochasticity of state transition and reward in the MDP.
Similarly, we define the state-action value function $Q_h^{\pi}:\Scal\times\Acal \mapsto [0, H-1+1]$:
$
Q_h^{\pi}(s,a):= \EE\left[R_h(s,a)\right] + 
\EE\left[\left.\sum_{t = h+1}^H R_t(s_t,\pi_t(s_t))\right|s_h = s , a_h = a\right]
$
Let $\pi^* = \left\{\pi^h\right\}$ be an optimal policy and let $V_h^*(s) := V_h^{\pi^*}(s,a)$, $Q_h^*(s) := Q_h^{\pi^*}(s,a)$, $\forall h,s,a$. 

For any $f:\Scal \mapsto [0,H]$, We define the Bellman operator by:
$
\left(\Bell_h f\right)(s,a) = \EE\left[R_h(s,a)\right] + \EE_{s' \sim P_h(\cdot | s,a)}[f(s')]
$
Then the Bellman equation is given by:
\begin{equation}
V_h^{\pi}(s) = Q_h^{\pi}(s,\pi_h(s)), \quad
Q_h^{\pi}(s,a) = \left(\Bell_h V_{h+1}^{\pi}\right)(s,a), \quad
V_{H+1}^{\pi}(s) = 0
\end{equation}
The Bellman optimality equation is given by:
\begin{equation}
V_h^{*}(s) = \max_{a \in \Acal}Q_h^{*}(s,a), \quad
Q_h^{*}(s,a) = \left(\Bell_h V_{h+1}^{*}\right)(s,a), \quad
V_{H+1}^{*}(s) = 0
\end{equation}
We define the state distribution at step $h$ by following policy $\pi$ as 
$d_h^{\pi}(s) := P_h^{\pi}(s_h = s)$,
and the state trajectory distribution of $\pi$ as:
$d^{\pi} := \left\{d_h^{\pi}\right\}_{h=1}^H$.
The goal is to find a policy that maximizes the reward, i.e. find a $\hat \pi$, s.t. $V_1^{\hat\pi}(s_1) = V_1^{*}(s_1) = \max_{\pi}V_1^{\pi}(s_1)$.
To measure the performance of our RL algorithms,
we use suboptimality as our performance metric for offline setting and 
use regret as our performance metric for online setting.
We formalize these two measures in their corresponding sections below.

\section{Byzantine-Robust Online RL \label{sec:ucb vi}}
In the online setting, we assume that a central server and $m$ agents aim to collaboratively minimizing their total regrets.
The agents and server collaborate by following a communication protocol to decide when to synchronize and what information to communicate.
Unlike standard distributed RL setting, we assume $\alpha$ fraction of the agents are Byzantine:
\begin{definition}[Distributed online RL with Byzantine corruption]\label{def:byzan online}
There are $m$ agents consists of two types:
\begin{itemize}[leftmargin=*,itemsep=0pt]
\item \textbf{$(1-\alpha)m$ good agents, denoted by $\Gcal$}: Each of the good agents interacts with a copy of $\Mcal$ and communicates its observations to the server following the interaction protocol;
\item \textbf{$\alpha m$ bad agents, denoted by $\Bcal$}: 
The bad agents is allowed to send arbitrary observations to the server at the end of each episode.
\end{itemize}
\end{definition}

Because the server has no control over the bad agents, we only seek to minimize the error incurred by the good agents.
Formally, we use regret as our performance measure for the online RL algorithm:
\begin{equation}
\operatorname{Regret}(K) = \sum_{k=1}^K \sum_{j\in\mathcal{G}} \left(V_1^*(s_1) - V_1^{\pi_k^j }(s_1)\right), 
\end{equation}
where $\pi_k^j$ is the policy used by agent $j$ in episode $k$.
At the same time , because of the distributed nature of our problem, we want to synchronize between the servers and agents only if it is necessary to reduce the communication cost.

Based on these considerations, we propose the \textsc{Byzan-UCBVI} algorithm (\pref{alg:UCBVI}).
We highlight the following key features of \textsc{Byzan-UCBVI}:
\begin{enumerate}[leftmargin=*,itemsep=0pt]
    \item \textbf{Low-switching-cost style algorithm design}: the server will check the synchronization criteria in \pref{ln:sync check} when receiving requests from agents.
    Each good agent will request synchronization if and only if any of their own $(s,a,h)$ counts doubles (\pref{ln:agent sync}). Importantly, our agents do not need to know other agents' $(s,a,h)$ counts to decide if synchronization is necessary. This design choice reduces the number of policy switches, synchronization rounds and communication cost all from $O(T)$ to $O(\log T)$. Compared to the $O(\sqrt{T})$ communication steps in \citep{jadbabaie2022byzantine}, ours is much lower. Unlike \citep{dubey2021provably}, our agents do not need to know other agents' transition counts in order to decide whether to synchronize or not.
    \item \textbf{Homogeneous policy execution}: In any episode $k$, our algorithm ensures that all good agents are running the same policy $\pi_k$. This ensures that the robust mean estimation achieves the smallest estimation error. Recall that the samples in the large batches are wasted if the batch sizes are severely imbalanced (cf. \pref{sec:robust_mean}).
    \item \textbf{Robust UCBVI updates}: During synchronization, the central server performs policy update using a variant of the UCBVI algorithm \citep{azar2017minimax}: for $h = H, H-1, \ldots, 1$, compute:
    \begin{align}
    &\bar Q_h(\cdot, \cdot) = \left(\hat \Bell_h \hat V_{h+1}\right)(\cdot, \cdot) + \Gamma_h(\cdot,\cdot)\label{eq:ucb vi 1}, \quad \hat Q_h(\cdot, \cdot) = \min\left\{\bar Q_{h}(\cdot,\cdot), H-h+1\right\}^+\\
    &\hat \pi_h(\cdot) = \argmax_{a} \hat Q_h(\cdot, a), \quad
    \hat V_h(\cdot) = \max_{a} \hat Q_h(\cdot, a) \label{eq:ucb vi 4}
    \end{align}
    We replace the empirical mean estimation with our \textsc{Pert-Weighted-Clique} (\textsc{PWC}) algorithm (\pref{alg:int pert}) and design a new confidence bonus accordingly. Instead of estimating the transition matrix and reward function, we directly estimate the Bellman operator given an estimated value function $\hat V_{h+1}$. The server gathers the sufficient statistics from agents in \pref{ln:gather eval}.
\end{enumerate}

\begin{algorithm}
\caption{$\textsc{Byzan-UCBVI}\left(K, \delta,\alpha\right)$}
\label{alg:UCBVI}
\begin{algorithmic}[1]
\STATE [S]$\hat V_{H+1}(\cdot)\gets 0$,
$\hat Q_{H+1}(\cdot, \cdot)\gets 0$,
$\operatorname{SyncCount}_j \gets -1, \forall j \in [m]$,
$\operatorname{Sync}_j \gets \operatorname{TRUE}, \forall j \in [m]$
$\delta'\gets\frac{\delta}{(SAHKm)^{3S}}$,
$\epsilon \gets\frac{1}{SAHKm}$ \COMMENT{We use [S] to denote the action of central server}
\STATE [A]$N_h^{j}(s,a) \gets 0$, $D_h^j \gets \emptyset$, $\forall (j,h,s,a) \in [m] \times [H]\times\mathcal{S} \times \mathcal{A}$ \COMMENT{We use [A] to denote the action of agents}
\FOR{episode $k \in [K]$}
\STATE [S] Receive $\operatorname{Sync}_1, \operatorname{Sync}_2, \ldots, \operatorname{Sync}_m$
\FOR{agent $j \in [m]$}
\IF{$\operatorname{Sync}_j = \operatorname{TRUE}$
and $\operatorname{SyncCount}_j \le SAH\log_2 K$ \label{ln:sync check}}
\STATE [S] $\operatorname{SyncCount}_j \gets \operatorname{SyncCount}_j + 1$,\;\;\;
$\operatorname{SYNCHRONIZE} \gets \operatorname{TRUE}$
\ENDIF
\ENDFOR
\IF{$\operatorname{SYNCHRONIZE}$}
\STATE [A] $N_{h,j}^{\operatorname{old}}(s, a) \gets N_h^j(s,a), \forall s,a,h,j$
\FOR{$h = H, H-1,\ldots,1$}
\STATE [S] Communicate $\hat V_{h+1}(\cdot)$ to each agent
\FOR{$(s,a) \in \mathcal{S} \times \mathcal{A}$}
\STATE [A] $\forall j \in [m]$, Send 
$x_j, n_j \gets \frac{1}{N_h^j(s,a)}\sum_{(s,a, r,s') \in D_h^j}r + \hat V_{h+1}(s'), N_h^j(s,a)$ to Server.\label{ln:gather eval}
\STATE [S] \label{ln:call pwc}
$
\left(\hat \Bell_h \hat V_{h+1}\right)(s,a), \Gamma_h(s,a) 
\gets 
\textsc{PWC}\left(\left\{\left(
x_j,n_j\right)\right\}_{j \in [m]}, H-h+1, \alpha, \epsilon, \delta'\right)
$
\ENDFOR
\STATE [S] Compute $\bar Q_h, \hat Q_h, \hat \pi_h, \hat V_h$ as in \pref{eq:ucb vi 1}-\pref{eq:ucb vi 4}.
\ENDFOR
\ENDIF
\STATE [S] $\operatorname{SYNCHRONIZE} \gets \operatorname{FALSE}$
\FOR{$j \in \mathcal{G}$}
\STATE[A] $\operatorname{Sync}_j \gets \operatorname{FALSE}$, \ Sample   
$\left\{(s_{h}^{j,k}, a_{h}^{j,k}, r_{h}^{j,k}, s_{h+1}^{j,k})\right\}_{h\in[H]}$
under $\left\{\hat \pi_h\right\}_{h=1}^H$
\STATE [A] $\forall h$, $N_h^j(s_{h}^{j,k}, a_{h}^{j,k}) \gets N_h^j(s_{h}^{j,k}, a_{h}^{j,k}) + 1,\quad D_{h}^j \gets D_h^j \cup \left\{(s_{h}^{j,k}, a_{h}^{j,k}, r_{h}^{j,k}, s_{h+1}^{j,k})\right\}$
\STATE [A] Send Sync request to Server, if $\operatorname{Sync}_j  \gets \one\left\{\max_{s,a,h}\frac{N_h^j(s,a)}{N_{h,j}^{\operatorname{old}}(s,a)} \ge 2\right\}$ is $\operatorname{TRUE}$.
\ENDFOR
\ENDFOR
\RETURN $\left\{\hat \pi_h\right\}_{h=1}^H$ \label{ln:agent sync}
\end{algorithmic}
\end{algorithm}

We are now ready to present the following regret bound for \textsc{Byzan-UCBVI}.
\begin{theorem}[Regret bound]\label{thm:byzan ucb vi}
Under \pref{def:byzan online}, if $\alpha \le \frac{1}{3}\left(1-\frac{1}{m}\right)$, for all $\delta < \frac{1}{4}$, with probability at least $1-3\delta$,
the total regret of \pref{alg:UCBVI} is at most
\begin{equation}
\sum_{k=1}^K \sum_{j \in \mathcal{G}}\left(V_1^*(s_1) - V_1^{\hat\pi_k^j}(s_1)\right)
=
\tilde O\left(
(1 + \alpha \sqrt{m})H^2S\sqrt{AmK\log (1/\delta)}
\right)
\end{equation}
\end{theorem}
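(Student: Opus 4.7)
The plan is to adapt the classical UCBVI regret analysis of \citet{azar2017minimax} to the distributed Byzantine setting, substituting \textsc{Pert-Weighted-Clique} for the empirical-mean Bellman backup and carefully tracking the two error terms in \pref{eq:err bound}. The argument decomposes into four ingredients: (i) a high-probability concentration inequality for the robust Bellman backup at every synchronization round, (ii) an optimism lemma $\hat V_h \ge V^*_h$, (iii) a Bellman telescoping that bounds the per-episode suboptimality by the trajectory sum of bonuses, and (iv) an aggregation of bonuses across $|\mathcal{G}|$ good agents, $K$ episodes, and the at most $O(SAH\log K)$ synchronizations per agent triggered by the doubling rule in \pref{ln:agent sync}.

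For step (i), the Homogeneous policy execution feature guarantees that at any synchronization the good agents have played a common policy since the last sync, so the reported means $x_j$ in \pref{ln:gather eval} are i.i.d.\ averages of the $[0,H-h+1]$-bounded random variable $r+\hat V_{h+1}(s')$ with mean $(\Bell_h \hat V_{h+1})(s,a)$ and sub-Gaussian parameter $\sigma = H-h+1$. I would invoke \pref{thm:weighted int} for any \emph{fixed} $\hat V_{h+1}$ and pick $\Gamma_h(s,a)$ to be the right-hand side of \pref{eq:err bound}, exactly as \pref{alg:UCBVI} does. Because $\hat V_{h+1}$ is itself data-dependent, I would discretize it over an $\epsilon$-net of the $S$-dimensional value-function class of log-cardinality $O(S\log(SAHKm))$; this is precisely what the confidence parameter $\delta' = \delta/(SAHKm)^{3S}$ absorbs, with the perturbation slack $\epsilon=1/(SAHKm)$ negligible against the concentration rate. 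A further union bound over $(s,a,h)$ and the at most $mSAH\log K$ sync events costs only additional logarithmic factors.

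For steps (ii) and (iii), optimism follows by backward induction on $h$: on the good event, $\bar Q_h \ge Q^*_h$ whenever $\hat V_{h+1}\ge V^*_{h+1}$, and the $\min\{\cdot,H-h+1\}^+$ clipping preserves the inequality. Bellman telescoping then yields, for each good $j$ and episode $k$, $V^*_1(s_1) - V^{\hat\pi_k}_1(s_1) \le 2\sum_{h} \Gamma_h(s_h^{j,k},a_h^{j,k})$ up to an $\{s_h^{j,k}\}$-adapted martingale controlled by Azuma--Hoeffding at rate $\tilde O(H\sqrt{|\mathcal{G}|K\log(1/\delta)})$. The low-switching doubling rule keeps the stale $N_h^j$ hard-coded in the bonus within a factor of two of the trajectory-time count, so aggregation reduces to $\sum_{k,j,h} \Gamma_h$ with current counts. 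Summing the first summand of $\Gamma_h$ via the potential bound $\sum 1/\sqrt{N_h(s,a)} = \tilde O(\sqrt{SAH\cdot mK})$ produces the leading $\tilde O(H^2 S\sqrt{AmK})$ piece once the $\sqrt{S\log}$ from step (i) is folded in. The second summand is $\alpha\sqrt{m}$ times the first under the balanced regime $\ncut\asymp N_h(s,a)/m$, which delivers the multiplicative $(1+\alpha\sqrt m)$ factor in the final bound.

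The step I expect to be the main obstacle is establishing this $\ncut\asymp N_h^{\text{good}}/m$ balance on every relevant $(s,a,h)$, because the Byzantine agents can report arbitrarily inflated $n_j$ and thereby perturb the $(2\alpha m+1)$-th order statistic defining $\ncut$. The clipping in \pref{ln:clip} is exactly what makes $\ncut$ robust to such manipulation, but quantifying its effect requires a Chernoff argument over the good agents' visit counts under their common policy, together with the assumption $\alpha\le \tfrac{1}{3}(1-1/m)$, which ensures that the good agents constitute strictly more than two-thirds of $[m]$ and hence dominate the clipping quantile. Once this count balance is in place, combining the three high-probability events (robust backup concentration, count balance across good agents, and the regret martingale) via a union bound yields the stated probability $1-3\delta$ and concludes the proof.
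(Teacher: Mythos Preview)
Your proposal is correct and follows essentially the same four-part architecture as the paper's proof: the paper likewise defines three high-probability events (valid-bonus concentration via an $\epsilon$-net over value functions, an ``evenness'' event showing all good agents' counts at $(s,a,h)$ agree within a factor of two once the total count exceeds $O(m\log(SAHmK/\delta))$, and an Azuma event for the telescoping martingale), proves optimism and the regret decomposition under them, and then telescopes the bonuses exactly as you outline. The only places where the paper is slightly more careful than your sketch are (a) replacing your ``i.i.d.\ averages'' with a martingale stopping-time argument since the counts $N_h^j(s,a)$ are random, and (b) using a Freedman/Bernstein-type martingale bound rather than a plain Chernoff for the evenness event; neither changes the structure of the argument.
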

\begin{remark}[Understanding the regret bound]
In \pref{alg:UCBVI}, the good agents are using the same policy and thus for all $j \in \Gcal$, $\hat \pi_k^j = \hat \pi_k$, where $\hat \pi_k$ is the policy calculated by the server in $k$-th episode.
By utilizing the batch structure,
\pref{alg:UCBVI} achieves a regret sublinear in $K$, even under Byzantine attacks.
Our regret is only $O(\sqrt{mK}+\alpha {m}\sqrt{K}))$ compared to the $O(m\sqrt{K} + m\alpha^{1/4}K^{3/4})$ regret in \citep{jadbabaie2022byzantine}. 
When $\alpha\leq 1/\sqrt{m}$, the dominating term $\sqrt{mK}$ is optimal even in the clean setting \citep{azar2017minimax}.
\end{remark}
\begin{remark}[Communication cost]
Because each agent runs $K$ episodes in total, count of each of the $(s,a,h)$ tuples doubles at most 
$\lfloor\log_2 K\rfloor$
times during training. Thus each good agent will send at most
$SAH\lfloor\log_2 K\rfloor$ 
sync requests. 
The bad agents can only send logarithmic number of effective request because of the checking step in \pref{ln:sync check}.
As a result, there will be at most $mSAH\lfloor\log_2 K\rfloor$ synchronization episodes in total. 
The communication inside one synchronization episode includes the following:
at least one agent sends a sync request;
inside the value iteration, the server will send estimated value functions at $H$ steps to each agents; 
each of the good agents will send the estimated Bellman operator for each $(s,a)$ pairs at $H$ steps and the counts to server. 
Importantly, the agents only need to send summary statistics, instead of the raw dataset to server, this preserves the data privacy of individual agents \citep{sakuma2008privacy, liu2019privacy}. 
\end{remark}
\begin{remark}[Switching cost]
Switching cost measures the number of policy changes. 
Algorithms with low switching cost is favorable in real world applications \citep{bai2019provably, zhang2020almost, gao2021provably}.
\pref{alg:UCBVI} only performs policy updates in the synchronization episodes, its switching cost is thus at most $mSAH\lfloor\log_2 K\rfloor$.
\end{remark}

\section{Byzantine-Robust Offline RL \label{sec:pevi}}
In the offline setting, we assume the server has access to a set of data batches while some data batches are corrupted. 
The goal of the server is to find a nearly optimal policy without further interaction with the environment.
Specifically:
\begin{definition}[Distributed offline RL with Byzantine corruption]\label{def:byzan offline}
The server has access to an offline data set with $m$ data batches $\bigcup_{j \in [m]}D_j$, including $(1-\alpha) m$ good batches $\Gcal$ and $\alpha m$ bad batches $\Bcal$,
where $D_j := \bigcup_{h \in [H]}D_j^h := \bigcup_{h \in [H]} \left\{\left(s_h^{j,k}, a_h^{j,k}, r_h^{j,k},{s_h^{\prime}}^{j,
k}\right)\right\}_{k=1}^{K_j}$.
We make an assumption on the data generating process similar to \citep{wang2020statistical}.
Precisely, for all $j\in\Gcal$, $D_j$ is drawn from an unknown distribution $\left\{\nu^j_h\right\}_{h=1}^H$,
where for each $h \in [H]$, $\nu_h^j \in \Delta\left(\Scal \times \Acal\right)$.
For all $h,j,k$, $\left(s_h^{j,k}, a_h^{j,k}\right) \sim \nu_h^j$, ${s_h^{\prime}}^{j,k} \sim P_h(\cdot |s_h^{j,k}, a_h^{j,k})$
and $r_h^{j,k}$ is an instantiation of $R_h\left(s_h^{j,k}, a_h^{j,k}\right)$.
For any $j \in \Bcal$ (i.e. bad batches), $D_j$ can be arbitrary.
\end{definition}

The performance is measured by the suboptimality w.r.t. a deterministic comparator policy $\tilde \pi$ (not necessarily an optimal policy):
\begin{equation}\label{eq:sub-opt}
\subopt\left(\pi, \tilde \pi\right)  := V_1^{\tilde \pi}(s_1) - V_1^{\pi}(s_1).
\end{equation}
In the offline setting, the server cannot interact with the MDP. So our result relies heavily on the quality of the dataset.
As we will see in the analysis, the suboptimality gap \pref{eq:sub-opt} can be upper bounded by the estimation error of Bellman operator along the trajectory of $\tilde \pi$.
As a result, we do not need full coverage over the whole state-action space.
Instead, we only need the offline dataset to have proper coverage over $\{d_h^{\tilde\pi}\}_{h=1}^H$,
the state distribution of policy $\tilde \pi$ at each step $h$.
To characterize the data coverage, for any $s,a,h$, we define the counts on $(s,a,h)$ tuples by:
\begin{equation}
N_h^j(s,a) := \sum_{k \in [K_j]}\one\left\{(s_h^{j,k}, a_h^{j,k}) = (s, a)\right\}, \quad \forall j \in [m].
\end{equation}
When calling \pref{alg:int}, the large data batches might be clipped in \pref{ln:clip}.
By definition, the clipping threshold is bounded between:
$N_h^{\Gcal, \cut_1}(s,a)$,
the $(\alpha m+1)$-th largest of $\left\{N_h^j(s,a)\right\}_{j\in\Gcal}$
and $N_h^{\Gcal, \cut_2}(s,a)$, the $(2\alpha m+1)$-th largest of $\left\{N_h^j(s,a)\right\}_{j\in\Gcal}$.
We define three quantities 
$p^{\Gcal,0}, \kappa, \kappa_{\text{even}}$
to characterize the quality of the offline dataset.
The first quantity describes the density of $\tilde \pi$ trajectory that are not properly covered by the offline dataset:
\begin{definition}[Measure of insufficient coverage]\label{def:cover}
We define $p^{\Gcal,0}$ as the probability of $\tilde \pi$ visiting an $(s,h,a)$ tuple that is insufficiently covered by the logged data, namely
\begin{equation}
p^{\Gcal,0}
:=
\sum_{h=1}^H \EE_{d_h^{\tilde \pi}}
\left[\one\left\{
N_h^{\Gcal, \cut_2}(s,\tilde \pi(s)) =0
\right\}\right]
\end{equation}
\end{definition}
Recall that \pref{alg:int} requires there are at least $(2\alpha m + 1)$ non-empty data batches to make an informed decision. $p^{\Gcal,0}$ measures an upper bound on the total probability under $d^{\tilde \pi}$ to encounter an $(s,h,a)$ on which \textsc{Weighted-Clique} cannot return a good mean estimator.

We now introduce $\kappa$, the density ratio between the $d^{\tilde \pi}$ and the empirical distribution of the uncorrupted offline dataset.
$\kappa$ quantifies the portion of useful data in the whole dataset and is commonly used in the offline RL literature \citep{rashidinejad2021bridging, zhang2021corruption}.
We only focus on the $(s,a,h)$ tuples excluded by $p^{\Gcal,0}$ in \pref{def:cover}:
\begin{definition}[density ratio] \label{def:rel con num}
We use $\left\{\Ccal_h\right\}_{h=1}^H$ to denote the state space (in the support of $\left\{d_h^{\tilde \pi}\right\}_{h=1}^H$) 
that have proper clean agents coverage:
\begin{equation}
\Ccal_h = \left\{s | N_h^{\Gcal, \cut_2}(s,\tilde \pi(s)) >0\right\}    
\end{equation}

We use $\kappa$ to denote the density ratio 
between the state distribution of policy $\tilde \pi$ and the empirical distribution over the uncorrupted offline dataset:
\begin{equation}
\kappa := \max_{h\in[H]}\max_{s\in\Ccal_h} \frac{d_h^{\tilde \pi}(s)}
{\sum_{j \in \Gcal} N_h^j(s, \tilde \pi_h(s)) / \sum_{j\in\Gcal}K_j}
\end{equation}
\end{definition}
As we can see in \pref{thm:weighted int}, the accuracy of \pref{alg:int} heavily depend on the evenness of the batches. 
Even if there are some good batches with a large amount of data, those extra data are not useful (cf. \pref{rem:imposs lg bt}). 
We define the following quantity to measure the information loss in the clipping step (\pref{ln:clip} in \pref{alg:int}):
\begin{definition}[Unevenness of good agents coverage]\label{def:balance}
\begin{equation}
\kappa_{\text{even}} := \max_{h\in[H]}\max_{s\in\Ccal_h}
\frac{{\sum_{j \in \Gcal}N_h^j(s, \tilde \pi_h(s))}}{\sum_{j\in\Gcal} \tilde N_h^{j,\cut_2}(s, \tilde \pi_h(s))} 
\frac{m(1-\alpha) N_h^{\Gcal, \cut_1}(s,\tilde \pi_h(s))}{\sum_{j\in\Gcal} \tilde N_h^{j,\cut_2}(s, \tilde \pi_h(s))} 
\end{equation}
where 
$
\tilde N_h^{j,\cut_2}(s, \tilde \pi_h(s)) = \max\left(N_h^{\Gcal, \cut_2}(s,\tilde \pi_h(s)), N_h^j(s, \tilde \pi_h(s))\right)
$
\end{definition}
Intuitively, 
$\kappa_{\text{even}}$ describes the evenness of good agent coverage. It measures both
how many data in large batches are cut off by the clip step and
the unevenness of the batches after clipping.
We includes $N_h^{\Gcal, \cut_1}(s,\tilde \pi_h(s))$ and $N_h^{\Gcal, \cut_2}(s,\tilde \pi_h(s))$, instead of the true clipping threshold, meaning $\kappa_{\text{even}}$ serves as an upper bound of the actual unevenness resulting from running the algorithm.
For example, suppose $\alpha m > 1$:
if for any $s,a,h,j$, $N_h^{j}(s,a) = n$, then $\kappa_{\text{even}} = 1$;
if for any $s,a,h$, there is one good data batch with size $L m$ for some $L > 1$ while the others have size $1$,
then $N_h^{\Gcal, \cut_1}(s,a) = N_h^{\Gcal, \cut_2}(s,a) = 1$ and
$\kappa_{\text{even}} = \frac{Lm+(1-\alpha)m-1}{(1-\alpha)m}\frac{(1-\alpha)m}{(1-\alpha)m}\approx L+1$, meaning $\kappa_{\text{even}}$ increases as the batches become less even.

Remarkably, all three quantities defined above only depend on the $(s,a,h)$ counts of the good data batches.

Given the above setup, we now present our second algorithm, \textsc{Byzan-PEVI}, a Byzantine-Robust variant of pessimistic value iteration \citep{jin2021pessimism}. Similar to the online setting, we use our \textsc{Weighted-Clique} (without perturbation) algorithm to approximate the Bellman operator and use the estimation error to design PESSIMISTIC bonus for the value iteration.
\textsc{Byzan-PEVI} (\pref{alg:byzan PEVI}) runs pessimistic value iteration (\pref{eq:pevi 1}-\pref{eq:pevi 4}) and calls \textsc{Weighted-Clique} as a subroutine to robustly estimate the Bellman operator using offline dataset $D$:
\begin{align}
&\bar Q_h(\cdot, \cdot) = \left(\hat \Bell_h \hat V_{h+1}\right)(\cdot, \cdot) - \Gamma_h(\cdot,\cdot)\label{eq:pevi 1}
, \quad
\hat Q_h(\cdot, \cdot) = \min\left\{\bar Q_{h}(\cdot,\cdot), H-h+1\right\}^+\\
&\hat \pi_h(\cdot) = \argmax_{a} \hat Q_h(\cdot, a), \quad
\hat V_h(\cdot) = \max_{a} \hat Q_h(\cdot, a) \label{eq:pevi 4}
\end{align}

\begin{algorithm}
\caption{\textsc{Byzan-PEVI}} \label{alg:byzan PEVI}
\begin{algorithmic}[1]
\REQUIRE $D := \bigcup_{j\in[m]} D_j := \bigcup_{h \in [H]}D_j^h := \bigcup_{h \in [H]} \left\{\left(s_h^{j,k}, a_h^{j,k}, r_h^{j,k},{s_h^{\prime}}^{j,
k}\right)\right\}_{k=1}^{K_j}$, $\alpha$, $\delta$
\STATE $\delta' \gets \frac{\delta}{H|\Scal||\Acal|m}$
\STATE $\hat V_{H+1}(\cdot) \gets 0$
\FOR{$h = H, H-1, \ldots, 1$}
\STATE $\sigma \gets H-h+1$
\FOR{$(s,a) \in \mathcal{S} \times \mathcal{A}$}
\FOR{$j\in [m]$}
\STATE $n_j\gets \sum_{k \in [K_j]}\one\left\{(s_h^{j,k}, a_h^{j,k}) = (s, a)\right\}$
\STATE 
$x_j\gets \frac{1}{N_h^j(s,a)}\sum_{(s,a, r,s') \in D_h^j}\left(r + \hat V_{h+1}(s')\right)$
\ENDFOR
\IF{$|j \in [m]: n_j > 0| \ge 2\alpha m + 1$}
\STATE 
$
\left(\hat \Bell_h \hat V_{h+1}\right)(s,a), \Gamma_h(s,a) 
\gets 
\textsc{Weighted-Clique}\left(\left\{\left(
x_j,n_j\right)\right\}_{j=1}^m, \sigma, \alpha, \delta'\right)
$
\ELSE
\STATE $\left(\hat \Bell_h \hat V_{h+1}\right)(s,a) \gets 0$, $\Gamma_h(s,a) \gets H-h+1$
\ENDIF
\ENDFOR
\STATE Compute $\bar Q_h, \hat Q_h, \hat \pi_h, \hat V_h$ as in \pref{eq:pevi 1}-\pref{eq:pevi 4}.
\ENDFOR
\RETURN $\left\{\hat \pi_h\right\}_{h=1}^H$
\end{algorithmic}
\end{algorithm}

\begin{theorem}\label{thm:off-line VI}
Given any deterministic comparator policy $\tilde \pi$, under \pref{def:byzan offline}, \pref{def:cover}, \pref{def:rel con num} and \pref{def:balance}:
for any $\delta$, $\alpha < \frac{1}{3}$, with probability at least $1-\delta$, \pref{alg:byzan PEVI} outputs a policy $\hat \pi$ with:
\begin{align}
\subopt\left(\hat\pi, \tilde\pi\right) \le& 2H p^{\Gcal,0}
+ O\left(\sqrt{\kappa\kappa_{\text{even}}}H^2\sqrt{|\Scal|}
\frac{1+\sqrt{m}\alpha}{\sqrt{\sum_{j\in\Gcal}K_j}}
\sqrt{\log\frac{H|\Scal||\Acal|m}{\delta}}
\right).
\end{align}
\end{theorem}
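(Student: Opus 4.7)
The plan is to adapt the standard pessimistic value iteration (PEVI) template (cf.\ \citep{jin2021pessimism}) by plugging in \textsc{Weighted-Clique} as the Bellman-operator estimator and then carrying the coverage parameters $\kappa$, $\kappa_{\text{even}}$, $p^{\Gcal,0}$ through the bonus bookkeeping. The main difficulty is not the value-iteration part, which is essentially mechanical, but the last accounting step where the two-term bonus from \pref{thm:weighted int} must be converted into the advertised rate.

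First I would establish a uniform Bellman concentration event: for every $(s,a,h)$ on which at least $2\alpha m + 1$ good batches register a visit,
\[
\bigl|(\hat{\mathcal{B}}_h \hat V_{h+1})(s,a) - (\mathcal{B}_h \hat V_{h+1})(s,a)\bigr| \le \Gamma_h(s,a).
\]
Conditional on the visit counts, the samples $r_h^{j,k} + \hat V_{h+1}({s_h^{\prime}}^{j,k})$ from each good batch are i.i.d.\ with mean $(\mathcal{B}_h \hat V_{h+1})(s,a)$ and $(H-h+1)$-sub-Gaussian, so \pref{thm:weighted int} applies pointwise. Because $\hat V_{h+1}$ is itself a function of the offline data, I would remove the dependence by discretizing $V:\Scal\to[0,H]$ to an $\epsilon$-net of size $(H/\epsilon)^{|\Scal|}$ and absorbing the $|\Scal|\log(H/\epsilon)$ overhead into $\delta'=\delta/(H|\Scal||\Acal|m)$; a union bound across the net and across $(s,a,h)$ then yields the inequality uniformly. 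On undercovered tuples the algorithm hardcodes $\Gamma_h = H-h+1$, which is trivially a valid (though uninformative) upper bound.

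Next, on this event, I would run the standard pessimism induction: backward in $h$, establish $\hat V_h(s) \le V_h^{\tilde\pi}(s)$ for $s\in\Ccal_h$ using the chain $(\hat{\mathcal B}_h\hat V_{h+1})(s,\tilde\pi_h(s)) - \Gamma_h(s,\tilde\pi_h(s)) \le (\mathcal B_h \hat V_{h+1})(s,\tilde\pi_h(s)) \le (\mathcal B_h V_{h+1}^{\tilde\pi})(s,\tilde\pi_h(s)) = Q_h^{\tilde\pi}(s,\tilde\pi_h(s))$, with the first step coming from the concentration event and the second from the inductive hypothesis plus monotonicity of $\mathcal{B}_h$. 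Combining this pessimism with the extended value-difference identity (as used, e.g., in \citep{jin2021pessimism}) and telescoping along the $\tilde\pi$-trajectory gives
\[
\subopt(\hat\pi,\tilde\pi) \le 2H p^{\Gcal,0} + 2\sum_{h=1}^H \EE_{d_h^{\tilde\pi}}\bigl[\Gamma_h(s,\tilde\pi_h(s))\,\one\{s\in\Ccal_h\}\bigr],
\]
where the first term collects the $\tilde\pi$-mass on undercovered tuples (each worth at most $H$) and the second term collects the bonus along the covered portion.

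The remaining and most delicate step is to reduce the covered-bonus sum to the target rate. The two halves of $\Gamma_h(s,\tilde\pi_h(s))$ on $\Ccal_h$ behave like $H/\sqrt{\sum_j \tilde N_h^{j,\cut_2}(s,\tilde\pi_h(s))}$ and $\alpha m H\sqrt{N_h^{\Gcal,\cut_1}(s,\tilde\pi_h(s))}/\sum_j \tilde N_h^{j,\cut_2}(s,\tilde\pi_h(s))$, using $\ncut \le N_h^{\Gcal,\cut_1}$ in the second. For the Hoeffding-type first half I would substitute $d_h^{\tilde\pi}(s)\le \kappa\cdot\sum_{j\in\Gcal}N_h^j(s,\tilde\pi_h(s))/\sum_{j\in\Gcal}K_j$ from \pref{def:rel con num}, trade $\sum_{j\in\Gcal} N_h^j$ for $\sum_{j\in\Gcal}\tilde N_h^{j,\cut_2}$ via the first factor of $\kappa_{\text{even}}$ in \pref{def:balance}, and then apply Cauchy--Schwarz over the at-most-$H|\Scal|$ covered $(s,h)$ pairs to extract $\sqrt{\kappa\kappa_{\text{even}}}\,H^2\sqrt{|\Scal|}/\sqrt{\sum_{j\in\Gcal}K_j}$. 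For the Byzantine-bias half, the second factor of $\kappa_{\text{even}}$ (which relates $m(1-\alpha)N_h^{\Gcal,\cut_1}$ to $\sum_{j\in\Gcal}\tilde N_h^{j,\cut_2}$) lets me rewrite it as $\alpha\sqrt{m}$ times the first-half expression, producing the $(1+\sqrt{m}\,\alpha)$ inflation. The hard part is executing the Cauchy--Schwarz in exactly the right order so that both halves collapse to the same scaling and the $\alpha$ dependence enters additively rather than multiplicatively; the bookkeeping around $\kappa_{\text{even}}$ and the clipped counts $\tilde N_h^{j,\cut_2}$ is where one has to be most careful, since all of this must hold without ever invoking the actual (random) clipping threshold $\ncut$ chosen inside \textsc{Weighted-Clique}.
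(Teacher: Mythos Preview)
Your overall plan matches the paper's proof: invoke \pref{thm:weighted int} for Bellman concentration, apply the PEVI suboptimality lemma (stated in the paper as \pref{lem:subopt pess VI}) to reduce to a bonus sum along $\tilde\pi$, split into undercovered ($p^{\Gcal,0}$) and covered parts, and then unwind the covered bonus via $\kappa_{\text{even}}$, $\kappa$, and Cauchy--Schwarz exactly as you describe in your last paragraph.

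Two points need correction. First, the $\epsilon$-net is unnecessary and, as you wrote it, would inflate the log term by an extra $|\Scal|\log(H/\epsilon)$ that the stated bound does not have. In the offline model of \pref{def:byzan offline} the step-$h$ tuples are independent of the step-$(h{+}1),\dots,H$ data, and $\hat V_{h+1}$ is measurable with respect to the latter only; the paper therefore simply conditions on $\hat V_{h+1}$ and applies \pref{thm:weighted int} pointwise, then integrates out, with no covering argument. Second, your pessimism induction is mis-stated: $\hat V_h(s)=\max_a \hat Q_h(s,a)$ need not be bounded by $V_h^{\tilde\pi}(s)$ for a non-optimal comparator $\tilde\pi$, and your inductive step $(\Bell_h\hat V_{h+1})\le(\Bell_h V_{h+1}^{\tilde\pi})$ already assumes the pointwise inequality you are trying to establish. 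The correct argument---which you in fact cite as the ``extended value-difference identity'' from \citep{jin2021pessimism}---proves $\subopt(\hat\pi,\tilde\pi)\le 2\sum_h \EE_{d^{\tilde\pi}}[\Gamma_h]$ directly from $0\le (\Bell_h\hat V_{h+1})(s,a)-\hat Q_h(s,a)\le 2\Gamma_h(s,a)$, without ever comparing $\hat V_h$ to $V_h^{\tilde\pi}$. Replacing your chain with that lemma, and replacing the net with conditioning, recovers exactly the paper's proof.
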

\begin{remark}
Compared to \citep{zhang2021corruption}, there is no non-diminishing term in the bound. 
Meaning the suboptimality gap vanishes as the good agents collect more data.
To the best of our knowledge, this is the first result for Byzantine-robust offline RL.
\end{remark}
\begin{remark}[Offline v.s. online RL]
Our offline RL results are more involved and notation heavy due to the nature of the problem. In the offline RL setting, learner has no control over the data generating process, and each data source can be arbitrarily different. The agent can only passively rely on the robust mean estimator we designed and the pessimism principle to learn as well as the data permits. In contrast, in the online setting, the learner has complete control over the clean agents' data collection process. Our algorithm \textsc{Byzan-UCBVI} enables the server to realize its full potential and obtain a tighter and cleaner sample complexity guarantee.
\end{remark}

\section{Conclusion\label{sec:conclusion}}
To summarize, in this work, we first present \textsc{Weighted-Clique}, a robust mean estimation algorithm for learning from uneven batches that can be of independent interest. 
Building upon \textsc{Weighted-Clique}, we propose byzantine-robust online (\textsc{Byzan-UCBVI}) and the first byzantine-robust offline (\textsc{Byzan-PEVI}) reinforcement learning algorithms in distributed setting.
Several questions remain open:
(1) Can we provide a complete characterization of the information-theoretical lower bound for robust mean estimation from uneven batches?
(2) Can we extend our RL  algorithms to the function approximation setting?

\bibliography{reference}

\begin{thebibliography}{55}
\providecommand{\natexlab}[1]{#1}
\providecommand{\url}[1]{\texttt{#1}}
\expandafter\ifx\csname urlstyle\endcsname\relax
  \providecommand{\doi}[1]{doi: #1}\else
  \providecommand{\doi}{doi: \begingroup \urlstyle{rm}\Url}\fi

\bibitem[Abadi et~al.(2016)Abadi, Agarwal, Barham, Brevdo, Chen, Citro,
  Corrado, Davis, Dean, Devin, et~al.]{abadi2016tensorflow}
Mart{\'\i}n Abadi, Ashish Agarwal, Paul Barham, Eugene Brevdo, Zhifeng Chen,
  Craig Citro, Greg~S Corrado, Andy Davis, Jeffrey Dean, Matthieu Devin, et~al.
\newblock Tensorflow: Large-scale machine learning on heterogeneous distributed
  systems.
\newblock \emph{arXiv preprint arXiv:1603.04467}, 2016.

\bibitem[Agarwal et~al.(2021)Agarwal, Ganguly, and
  Aggarwal]{agarwal2021communication}
Mridul Agarwal, Bhargav Ganguly, and Vaneet Aggarwal.
\newblock Communication efficient parallel reinforcement learning.
\newblock In \emph{Uncertainty in Artificial Intelligence}, pages 247--256.
  PMLR, 2021.

\bibitem[Azar et~al.(2017)Azar, Osband, and Munos]{azar2017minimax}
Mohammad~Gheshlaghi Azar, Ian Osband, and R{\'e}mi Munos.
\newblock Minimax regret bounds for reinforcement learning.
\newblock In \emph{International Conference on Machine Learning}, pages
  263--272. PMLR, 2017.

\bibitem[Bai et~al.(2019)Bai, Xie, Jiang, and Wang]{bai2019provably}
Yu~Bai, Tengyang Xie, Nan Jiang, and Yu-Xiang Wang.
\newblock Provably efficient q-learning with low switching cost.
\newblock \emph{Advances in Neural Information Processing Systems}, 32, 2019.

\bibitem[Bazzan(2009)]{bazzan2009opportunities}
Ana~LC Bazzan.
\newblock Opportunities for multiagent systems and multiagent reinforcement
  learning in traffic control.
\newblock \emph{Autonomous Agents and Multi-Agent Systems}, 18\penalty0
  (3):\penalty0 342--375, 2009.

\bibitem[Brown et~al.(2020)Brown, Mann, Ryder, Subbiah, Kaplan, Dhariwal,
  Neelakantan, Shyam, Sastry, Askell, et~al.]{brown2020language}
Tom Brown, Benjamin Mann, Nick Ryder, Melanie Subbiah, Jared~D Kaplan, Prafulla
  Dhariwal, Arvind Neelakantan, Pranav Shyam, Girish Sastry, Amanda Askell,
  et~al.
\newblock Language models are few-shot learners.
\newblock \emph{Advances in neural information processing systems},
  33:\penalty0 1877--1901, 2020.

\bibitem[Chan et~al.(2014)Chan, Diakonikolas, Valiant, and
  Valiant]{chan2014optimal}
Siu-On Chan, Ilias Diakonikolas, Paul Valiant, and Gregory Valiant.
\newblock Optimal algorithms for testing closeness of discrete distributions.
\newblock In \emph{Proceedings of the twenty-fifth annual ACM-SIAM symposium on
  Discrete algorithms}, pages 1193--1203. SIAM, 2014.

\bibitem[Chen et~al.(2020)Chen, Li, and Moitra]{chen2020efficiently}
Sitan Chen, Jerry Li, and Ankur Moitra.
\newblock Efficiently learning structured distributions from untrusted batches.
\newblock In \emph{Proceedings of the 52nd Annual ACM SIGACT Symposium on
  Theory of Computing}, pages 960--973, 2020.

\bibitem[Chen et~al.(2021)Chen, Zhang, Giannakis, and
  Basar]{chen2021communication}
Tianyi Chen, Kaiqing Zhang, Georgios~B Giannakis, and Tamer Basar.
\newblock Communication-efficient policy gradient methods for distributed
  reinforcement learning.
\newblock \emph{IEEE Transactions on Control of Network Systems}, 2021.

\bibitem[Chen et~al.(2017)Chen, Su, and Xu]{chen2017distributed}
Yudong Chen, Lili Su, and Jiaming Xu.
\newblock Distributed statistical machine learning in adversarial settings:
  Byzantine gradient descent.
\newblock \emph{Proceedings of the ACM on Measurement and Analysis of Computing
  Systems}, 1\penalty0 (2):\penalty0 1--25, 2017.

\bibitem[Dann et~al.(2017)Dann, Lattimore, and Brunskill]{dann2017unifying}
Christoph Dann, Tor Lattimore, and Emma Brunskill.
\newblock Unifying pac and regret: Uniform pac bounds for episodic
  reinforcement learning.
\newblock \emph{Advances in Neural Information Processing Systems}, 30, 2017.

\bibitem[Diakonikolas and Kane(2019)]{diakonikolas2019recent}
Ilias Diakonikolas and Daniel~M Kane.
\newblock Recent advances in algorithmic high-dimensional robust statistics.
\newblock \emph{arXiv preprint arXiv:1911.05911}, 2019.

\bibitem[Diakonikolas et~al.(2017)Diakonikolas, Kamath, Kane, Li, Moitra, and
  Stewart]{diakonikolas2017being}
Ilias Diakonikolas, Gautam Kamath, Daniel~M Kane, Jerry Li, Ankur Moitra, and
  Alistair Stewart.
\newblock Being robust (in high dimensions) can be practical.
\newblock In \emph{International Conference on Machine Learning}, pages
  999--1008. PMLR, 2017.

\bibitem[Diakonikolas et~al.(2019)Diakonikolas, Kamath, Kane, Li, Steinhardt,
  and Stewart]{diakonikolas2019sever}
Ilias Diakonikolas, Gautam Kamath, Daniel Kane, Jerry Li, Jacob Steinhardt, and
  Alistair Stewart.
\newblock Sever: A robust meta-algorithm for stochastic optimization.
\newblock In \emph{International Conference on Machine Learning}, pages
  1596--1606. PMLR, 2019.

\bibitem[Ding et~al.(2020)Ding, Koh, Merckaert, Vanderborght, Nicotra, Heckman,
  Roncone, and Chen]{ding2020distributed}
Guohui Ding, Joewie~J Koh, Kelly Merckaert, Bram Vanderborght, Marco~M Nicotra,
  Christoffer Heckman, Alessandro Roncone, and Lijun Chen.
\newblock Distributed reinforcement learning for cooperative multi-robot object
  manipulation.
\newblock \emph{arXiv preprint arXiv:2003.09540}, 2020.

\bibitem[Dubey and Pentland(2020)]{dubey2020private}
Abhimanyu Dubey and Alex Pentland.
\newblock Private and byzantine-proof cooperative decision-making.
\newblock In \emph{AAMAS}, pages 357--365, 2020.

\bibitem[Dubey and Pentland(2021)]{dubey2021provably}
Abhimanyu Dubey and Alex Pentland.
\newblock Provably efficient cooperative multi-agent reinforcement learning
  with function approximation.
\newblock \emph{arXiv preprint arXiv:2103.04972}, 2021.

\bibitem[Espeholt et~al.(2018)Espeholt, Soyer, Munos, Simonyan, Mnih, Ward,
  Doron, Firoiu, Harley, Dunning, et~al.]{espeholt2018impala}
Lasse Espeholt, Hubert Soyer, Remi Munos, Karen Simonyan, Vlad Mnih, Tom Ward,
  Yotam Doron, Vlad Firoiu, Tim Harley, Iain Dunning, et~al.
\newblock Impala: Scalable distributed deep-rl with importance weighted
  actor-learner architectures.
\newblock In \emph{International Conference on Machine Learning}, pages
  1407--1416. PMLR, 2018.

\bibitem[Eykholt et~al.(2018)Eykholt, Evtimov, Fernandes, Li, Rahmati, Xiao,
  Prakash, Kohno, and Song]{eykholt2018robust}
Kevin Eykholt, Ivan Evtimov, Earlence Fernandes, Bo~Li, Amir Rahmati, Chaowei
  Xiao, Atul Prakash, Tadayoshi Kohno, and Dawn Song.
\newblock Robust physical-world attacks on deep learning visual classification.
\newblock In \emph{Proceedings of the IEEE conference on computer vision and
  pattern recognition}, pages 1625--1634, 2018.

\bibitem[Fan et~al.(2021)Fan, Ma, Dai, Jing, Tan, and Low]{fan2021fault}
Xiaofeng Fan, Yining Ma, Zhongxiang Dai, Wei Jing, Cheston Tan, and Bryan
  Kian~Hsiang Low.
\newblock Fault-tolerant federated reinforcement learning with theoretical
  guarantee.
\newblock \emph{Advances in Neural Information Processing Systems}, 34, 2021.

\bibitem[Freedman(1975)]{freedman1975tail}
David~A Freedman.
\newblock On tail probabilities for martingales.
\newblock \emph{the Annals of Probability}, pages 100--118, 1975.

\bibitem[Gao et~al.(2021)Gao, Xie, Du, and Yang]{gao2021provably}
Minbo Gao, Tianle Xie, Simon~S Du, and Lin~F Yang.
\newblock A provably efficient algorithm for linear markov decision process
  with low switching cost.
\newblock \emph{arXiv preprint arXiv:2101.00494}, 2021.

\bibitem[Goyal et~al.(2017)Goyal, Doll{\'a}r, Girshick, Noordhuis, Wesolowski,
  Kyrola, Tulloch, Jia, and He]{goyal2017accurate}
Priya Goyal, Piotr Doll{\'a}r, Ross Girshick, Pieter Noordhuis, Lukasz
  Wesolowski, Aapo Kyrola, Andrew Tulloch, Yangqing Jia, and Kaiming He.
\newblock Accurate, large minibatch sgd: Training imagenet in 1 hour.
\newblock \emph{arXiv preprint arXiv:1706.02677}, 2017.

\bibitem[Horgan et~al.(2018)Horgan, Quan, Budden, Barth-Maron, Hessel,
  Van~Hasselt, and Silver]{horgan2018distributed}
Dan Horgan, John Quan, David Budden, Gabriel Barth-Maron, Matteo Hessel, Hado
  Van~Hasselt, and David Silver.
\newblock Distributed prioritized experience replay.
\newblock \emph{arXiv preprint arXiv:1803.00933}, 2018.

\bibitem[Huang et~al.(2017)Huang, Papernot, Goodfellow, Duan, and
  Abbeel]{huang2017adversarial}
Sandy Huang, Nicolas Papernot, Ian Goodfellow, Yan Duan, and Pieter Abbeel.
\newblock Adversarial attacks on neural network policies.
\newblock \emph{arXiv preprint arXiv:1702.02284}, 2017.

\bibitem[Huber(1992)]{huber1992robust}
Peter~J Huber.
\newblock Robust estimation of a location parameter.
\newblock In \emph{Breakthroughs in statistics}, pages 492--518. Springer,
  1992.

\bibitem[Jadbabaie et~al.(2022)Jadbabaie, Li, Qian, and
  Tian]{jadbabaie2022byzantine}
Ali Jadbabaie, Haochuan Li, Jian Qian, and Yi~Tian.
\newblock Byzantine-robust federated linear bandits.
\newblock \emph{arXiv preprint arXiv:2204.01155}, 2022.

\bibitem[Jain and Orlitsky(2021)]{jain2021robust}
Ayush Jain and Alon Orlitsky.
\newblock Robust density estimation from batches: The best things in life are
  (nearly) free.
\newblock In \emph{International Conference on Machine Learning}, pages
  4698--4708. PMLR, 2021.

\bibitem[Jin et~al.(2018)Jin, Allen-Zhu, Bubeck, and Jordan]{jin2018q}
Chi Jin, Zeyuan Allen-Zhu, Sebastien Bubeck, and Michael~I Jordan.
\newblock Is q-learning provably efficient?
\newblock \emph{Advances in neural information processing systems}, 31, 2018.

\bibitem[Jin et~al.(2020)Jin, Yang, Wang, and Jordan]{jin2020provably}
Chi Jin, Zhuoran Yang, Zhaoran Wang, and Michael~I Jordan.
\newblock Provably efficient reinforcement learning with linear function
  approximation.
\newblock In \emph{Conference on Learning Theory}, pages 2137--2143. PMLR,
  2020.

\bibitem[Jin et~al.(2021)Jin, Yang, and Wang]{jin2021pessimism}
Ying Jin, Zhuoran Yang, and Zhaoran Wang.
\newblock Is pessimism provably efficient for offline rl?
\newblock In \emph{International Conference on Machine Learning}, pages
  5084--5096. PMLR, 2021.

\bibitem[Kretchmar(2002)]{kretchmar2002parallel}
R~Matthew Kretchmar.
\newblock Parallel reinforcement learning.
\newblock In \emph{The 6th World Conference on Systemics, Cybernetics, and
  Informatics}. Citeseer, 2002.

\bibitem[Lai et~al.(2016)Lai, Rao, and Vempala]{lai2016agnostic}
Kevin~A Lai, Anup~B Rao, and Santosh Vempala.
\newblock Agnostic estimation of mean and covariance.
\newblock In \emph{2016 IEEE 57th Annual Symposium on Foundations of Computer
  Science (FOCS)}, pages 665--674. IEEE, 2016.

\bibitem[LAMPORT et~al.(1982)LAMPORT, SHOSTAK, and PEASE]{lamport1982byzantine}
LESLIE LAMPORT, ROBERT SHOSTAK, and MARSHALL PEASE.
\newblock The byzantine generals problem.
\newblock \emph{ACM Transactions on Programming Languages and Systems},
  4\penalty0 (3):\penalty0 382--401, 1982.

\bibitem[Liu et~al.(2019)Liu, Deng, Choo, and Yang]{liu2019privacy}
Ximeng Liu, Robert~H Deng, Kim-Kwang~Raymond Choo, and Yang Yang.
\newblock Privacy-preserving reinforcement learning design for patient-centric
  dynamic treatment regimes.
\newblock \emph{IEEE Transactions on Emerging Topics in Computing}, 9\penalty0
  (1):\penalty0 456--470, 2019.

\bibitem[Lykouris et~al.(2021)Lykouris, Simchowitz, Slivkins, and
  Sun]{lykouris2021corruption}
Thodoris Lykouris, Max Simchowitz, Alex Slivkins, and Wen Sun.
\newblock Corruption-robust exploration in episodic reinforcement learning.
\newblock In \emph{Conference on Learning Theory}, pages 3242--3245. PMLR,
  2021.

\bibitem[Ma et~al.(2019)Ma, Zhang, Sun, and Zhu]{ma2019policy}
Yuzhe Ma, Xuezhou Zhang, Wen Sun, and Jerry Zhu.
\newblock Policy poisoning in batch reinforcement learning and control.
\newblock \emph{Advances in Neural Information Processing Systems}, 32, 2019.

\bibitem[Ma et~al.(2021)Ma, Sharp, Wang, Fernandes, and Zhu]{ma2021adversarial}
Yuzhe Ma, J~Sharp, Ruizhe Wang, Earlence Fernandes, and Xiaojin Zhu.
\newblock Adversarial attacks on kalman filter-based forward collision warning
  systems.
\newblock In \emph{The Thirty-Fifth AAAI Conference on Artificial
  Intelligence}, 2021.

\bibitem[Neff and Nagy(2016)]{neff2016automation}
Gina Neff and Peter Nagy.
\newblock Automation, algorithms, and politics| talking to bots: Symbiotic
  agency and the case of tay.
\newblock \emph{International Journal of Communication}, 10:\penalty0 17, 2016.

\bibitem[Paninski(2008)]{paninski2008coincidence}
Liam Paninski.
\newblock A coincidence-based test for uniformity given very sparsely sampled
  discrete data.
\newblock \emph{IEEE Transactions on Information Theory}, 54\penalty0
  (10):\penalty0 4750--4755, 2008.

\bibitem[Prasad et~al.(2018)Prasad, Suggala, Balakrishnan, and
  Ravikumar]{prasad2018robust}
Adarsh Prasad, Arun~Sai Suggala, Sivaraman Balakrishnan, and Pradeep Ravikumar.
\newblock Robust estimation via robust gradient estimation.
\newblock \emph{arXiv preprint arXiv:1802.06485}, 2018.

\bibitem[Qiao and Valiant(2017)]{qiao2017learning}
Mingda Qiao and Gregory Valiant.
\newblock Learning discrete distributions from untrusted batches.
\newblock \emph{arXiv preprint arXiv:1711.08113}, 2017.

\bibitem[Rashidinejad et~al.(2021)Rashidinejad, Zhu, Ma, Jiao, and
  Russell]{rashidinejad2021bridging}
Paria Rashidinejad, Banghua Zhu, Cong Ma, Jiantao Jiao, and Stuart Russell.
\newblock Bridging offline reinforcement learning and imitation learning: A
  tale of pessimism.
\newblock \emph{Advances in Neural Information Processing Systems}, 34, 2021.

\bibitem[Sakuma et~al.(2008)Sakuma, Kobayashi, and Wright]{sakuma2008privacy}
Jun Sakuma, Shigenobu Kobayashi, and Rebecca~N Wright.
\newblock Privacy-preserving reinforcement learning.
\newblock In \emph{Proceedings of the 25th international conference on Machine
  learning}, pages 864--871, 2008.

\bibitem[Sutton and Barto(2018)]{sutton2018reinforcement}
Richard~S Sutton and Andrew~G Barto.
\newblock \emph{Reinforcement learning: An introduction}.
\newblock MIT press, 2018.

\bibitem[Tukey(1960)]{tukey1960survey}
John~W Tukey.
\newblock A survey of sampling from contaminated distributions.
\newblock \emph{Contributions to probability and statistics}, pages 448--485,
  1960.

\bibitem[Verbraeken et~al.(2020)Verbraeken, Wolting, Katzy, Kloppenburg,
  Verbelen, and Rellermeyer]{verbraeken2020survey}
Joost Verbraeken, Matthijs Wolting, Jonathan Katzy, Jeroen Kloppenburg, Tim
  Verbelen, and Jan~S Rellermeyer.
\newblock A survey on distributed machine learning.
\newblock \emph{ACM Computing Surveys (CSUR)}, 53\penalty0 (2):\penalty0 1--33,
  2020.

\bibitem[Wang et~al.(2020)Wang, Foster, and Kakade]{wang2020statistical}
Ruosong Wang, Dean~P Foster, and Sham~M Kakade.
\newblock What are the statistical limits of offline rl with linear function
  approximation?
\newblock \emph{arXiv preprint arXiv:2010.11895}, 2020.

\bibitem[Yang and Wang(2020)]{yang2020reinforcement}
Lin Yang and Mengdi Wang.
\newblock Reinforcement learning in feature space: Matrix bandit, kernels, and
  regret bound.
\newblock In \emph{International Conference on Machine Learning}, pages
  10746--10756. PMLR, 2020.

\bibitem[Yin et~al.(2018)Yin, Chen, Kannan, and Bartlett]{yin2018byzantine}
Dong Yin, Yudong Chen, Ramchandran Kannan, and Peter Bartlett.
\newblock Byzantine-robust distributed learning: Towards optimal statistical
  rates.
\newblock In \emph{International Conference on Machine Learning}, pages
  5650--5659. PMLR, 2018.

\bibitem[Yu et~al.(2014)Yu, Wang, Zhou, Chan, and Tang]{yu2014multi}
Tao Yu, HZ~Wang, Bin Zhou, Ka~Wing Chan, and J~Tang.
\newblock Multi-agent correlated equilibrium q ($\lambda$) learning for
  coordinated smart generation control of interconnected power grids.
\newblock \emph{IEEE transactions on power systems}, 30\penalty0 (4):\penalty0
  1669--1679, 2014.

\bibitem[Zhang et~al.(2020{\natexlab{a}})Zhang, Ma, Singla, and
  Zhu]{zhang2020adaptive}
Xuezhou Zhang, Yuzhe Ma, Adish Singla, and Xiaojin Zhu.
\newblock Adaptive reward-poisoning attacks against reinforcement learning.
\newblock In \emph{International Conference on Machine Learning}, pages
  11225--11234. PMLR, 2020{\natexlab{a}}.

\bibitem[Zhang et~al.(2021{\natexlab{a}})Zhang, Chen, Zhu, and
  Sun]{zhang2021corruption}
Xuezhou Zhang, Yiding Chen, Jerry Zhu, and Wen Sun.
\newblock Corruption-robust offline reinforcement learning.
\newblock \emph{arXiv preprint arXiv:2106.06630}, 2021{\natexlab{a}}.

\bibitem[Zhang et~al.(2021{\natexlab{b}})Zhang, Chen, Zhu, and
  Sun]{zhang2021robust}
Xuezhou Zhang, Yiding Chen, Xiaojin Zhu, and Wen Sun.
\newblock Robust policy gradient against strong data corruption.
\newblock In \emph{International Conference on Machine Learning}, pages
  12391--12401. PMLR, 2021{\natexlab{b}}.

\bibitem[Zhang et~al.(2020{\natexlab{b}})Zhang, Zhou, and Ji]{zhang2020almost}
Zihan Zhang, Yuan Zhou, and Xiangyang Ji.
\newblock Almost optimal model-free reinforcement learningvia
  reference-advantage decomposition.
\newblock \emph{Advances in Neural Information Processing Systems},
  33:\penalty0 15198--15207, 2020{\natexlab{b}}.

\end{thebibliography}


\newpage
\appendix
\onecolumn
\section{More discussion on \pref{alg:int}:\textsc{Weighted-Clique} \label{sec:w-c dis}}
\subsection{Impossible result}
\begin{theorem}[impossibility result]\label{thm:impsb}
There exists a distribution $\Dcal$, s.t. given $m$ data batches
$\left\{\left\{x_j^i\right\}_{i=1}^{n_j}\right\}_{j\in[m]}$
generated under \pref{def:rb ln fm bt}, every robust mean estimation algorithm $\mathscr{A}$ suffers an error at least 
\begin{equation}
\Omega\left(\frac{1}{\sqrt{N}}\right) 
\end{equation}
even $\mathscr{A}$ knows some of the batches are clean,
where $N$ is the sum of sizes of the smallest $(1-2\alpha)m$ good batches.
\end{theorem}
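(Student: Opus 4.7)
The plan is to apply Le Cam's two-point method with a carefully chosen pair of instances, one with true mean $\mu_1$ and one with true mean $\mu_2$, such that the observed batch data has nearly identical joint distributions even though $|\mu_1 - \mu_2| \asymp 1/\sqrt{N}$. The key construction exploits the fact that the adversary, by controlling $\alpha m$ batches, can ``swap identities'' with $\alpha m$ of the largest good batches so that the two instances agree on every batch except on those with total mass $N$.

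More concretely, I would fix batch sizes $n_1 \le n_2 \le \cdots \le n_m$ and partition $[m]$ into three groups: $A$ the smallest $(1-2\alpha)m$ indices (whose sizes sum to $N$), $B$ the middle $\alpha m$ indices, and $C$ the largest $\alpha m$ indices. Take $\mathcal{D}_t = N(\mu_t, \sigma^2)$ for $t=1,2$ with $|\mu_1 - \mu_2| = \Delta$ to be tuned. Define two instances:
\begin{itemize}[leftmargin=*,itemsep=0pt]
\item Instance I: true mean $\mu_1$; good batches are $A \cup B$ drawn i.i.d.\ from $\mathcal{D}_1$; the adversary controls $C$ and draws its data i.i.d.\ from $\mathcal{D}_2$.
\item Instance II: true mean $\mu_2$; good batches are $A \cup C$ drawn i.i.d.\ from $\mathcal{D}_2$; the adversary controls $B$ and draws its data i.i.d.\ from $\mathcal{D}_1$.
\end{itemize}
Both instances satisfy \pref{def:rb ln fm bt}, and in both the ``smallest $(1-2\alpha)m$ good batches'' are exactly the batches in $A$, of total size $N$.

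Next I would compare the observation distributions $P_{\text{I}}$ and $P_{\text{II}}$. Batches in $B$ are drawn from $\mathcal{D}_1$ in both instances, batches in $C$ are drawn from $\mathcal{D}_2$ in both, so these contribute zero to the KL divergence. Only $A$ differs, giving
\[
\mathrm{KL}(P_{\text{I}} \,\|\, P_{\text{II}}) \;=\; \sum_{j \in A} n_j \cdot \mathrm{KL}(\mathcal{D}_1 \,\|\, \mathcal{D}_2) \;=\; \frac{N\Delta^2}{2\sigma^2}.
\]
Pinsker yields $\mathrm{TV}(P_{\text{I}}, P_{\text{II}}) \le \Delta\sqrt{N}/(2\sigma)$. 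Choosing $\Delta = \sigma/\sqrt{N}$ ensures $\mathrm{TV} \le 1/2$, and Le Cam's inequality forces any estimator $\mathscr{A}$ to suffer worst-case error at least $\Delta(1 - \mathrm{TV})/2 = \Omega(\sigma/\sqrt{N}) = \Omega(1/\sqrt{N})$. The ``even if $\mathscr{A}$ knows some batches are clean'' clause is handled by noting that the side-information ``$A$ is clean'' is true in \emph{both} instances, hence provides no distinguishing power.

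I expect no substantive obstacle: the construction is essentially forced by the structure of the problem, and the analysis reduces to a standard two-point Le Cam argument. The only step requiring mild care is checking that both instances are legal under \pref{def:rb ln fm bt} (in particular, that we have exactly $(1-\alpha)m$ clean batches drawn i.i.d.\ from a common sub-Gaussian distribution in each instance, which both constructions satisfy) and that the partition into $A,B,C$ with the stated cardinalities is consistent with $\alpha < 1/2$ and the integrality of $\alpha m$, which can be ensured by a standard rounding argument.
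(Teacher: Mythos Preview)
Your proposal is correct and follows essentially the same two-point swap construction as the paper: in both, the adversary's $\alpha m$ batches trade identities with the $\alpha m$ largest good batches so that the two hypotheses differ only on the $(1-2\alpha)m$ smallest good batches of total size $N$, forcing an $\Omega(1/\sqrt{N})$ error. The only cosmetic difference is that the paper instantiates $\mathcal{D}$ as $\mathrm{Bernoulli}(1/2)$ versus $\mathrm{Bernoulli}(1/2+\eta)$ and invokes a black-box distinguishability lower bound from \citep{paninski2008coincidence,chan2014optimal}, whereas you use Gaussians and carry out the KL/Pinsker/Le Cam computation explicitly; your version is slightly more self-contained but otherwise the same argument.
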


\begin{proof}[Proof of \pref{thm:impsb}]
Let $\Dcal$ be Bernoulli distribution with parameter $\frac{1}{2}$.
W.l.o.g., assume $\Gcal = [(1-\alpha)m]$, $n_1\le \cdots \le n_{(1-\alpha)m}$ and
$\Bcal = \left\{(1-\alpha)m+1, \ldots, m\right\}$. 
We assume algorithm $\mathscr{A}$ knows $[(1-2\alpha)m]$ is a subset of the good batches.

Let $\eta = \frac{1}{2\sqrt{N}} = \frac{1}{2\sqrt{\sum_{j=1}^{(1-2\alpha)m}n_j}}$.
Let the bad batches $\Bcal$ be i.i.d. samples from $\Dcal'$, a Bernoulli distribution with parameter $\frac{1}{2} + \eta$.
By Theorem 4 of \citep{paninski2008coincidence, chan2014optimal}, no algorithm can distinguish if the batches 
$\left\{x_1^i\right\}_{i=1}^{n_1},\ldots, \left\{x_{(1-2\alpha)m}^i\right\}_{i=1}^{n_{(1-2\alpha)m}}$ 
are sampled from $\Dcal$ or $\Dcal'$.
I.e. no algorithm can distinguish if 
$\left\{(1-2\alpha)m+1, \ldots, (1-\alpha)m\right\}$
are good batches or 
$\Bcal$
are good batches.
This means, given $m$ data batches
$\left\{\left\{x_j^i\right\}_{i=1}^{n_j}\right\}_{j\in[m]}$, every robust mean estimation algorithm suffers an error at least $\Omega\left(\frac{1}{\sqrt{N}}\right)$.
\end{proof}

\subsection{Adaption to good batch perturbation and distributed learning \label{sec:adp pert dist}}
Compared to \pref{alg:int}, \pref{alg:int pert} enlarges the confidence interval by $\epsilon$ on both endpoints due to the perturbation and only requires some sufficient statistics from the batches, instead of the whole dataset.
When $\ncut >0$, meaning there are at least $2\alpha m + 1$ non-empty batches, \pref{alg:int pert} runs a modified \textsc{Weighted-Clique} algorithm to calculate the mean estimation and the error upper bound (with an additional $\epsilon$ as an adjustment for $\epsilon$-cover argument in the proof of \pref{thm:byzan ucb vi}).
When $\ncut =0$, \pref{alg:int pert} returns $0$ and a trivial error upper bound.
\begin{algorithm}
\caption{\textsc{Pert-Weighted-Clique} (\textsc{PWC})} \label{alg:int pert}
\begin{algorithmic}[1]
\REQUIRE $\left\{\left(\hat x_j, n_j\right)\right\}_{j \in [m]}$, $\sigma$, $\alpha$, $\epsilon$, $\delta > 0$
\STATE $\ncut \gets \mbox{$(2\alpha m+1)$-th largest value in $\{n_j\}_{j\in[m]}$}$  
\IF{$\ncut > 0$}
\FOR{$j = 1,2,\ldots,m$}
\STATE 
$\tilde n_{j} \gets \min(n_j, \ncut)$ 
\STATE \label{ln:CI pert} 
$I_j \gets \left[\hat x_j - \frac{\sigma}{\sqrt{\tilde n_j}}\sqrt{2\log \frac{2m}{\delta}} - \epsilon,\hat x_j + \frac{\sigma}{\sqrt{\tilde n_j}}\sqrt{2\log \frac{2m}{\delta}} + \epsilon\right]$
\ENDFOR
\STATE \label{ln:cliq pert} 
$U^* \gets \argmax_{U \mbox{ s.t. }  \emptyset\neq\bigcap_{j \in U}I_j }|U|$
\RETURN \label{ln:wted mean pert}
$
\hat x \gets \frac{1}{\sum_{j \in U^*}\tilde n_j}\sum_{j \in U^*} \tilde n_j \hat x_j, \quad \operatorname{Error} \gets \mbox{RHS of \pref{eq:pert bd 5eps}}+\epsilon
$
\label{ln:mod bn}
\ELSE
\IF{$\Dcal$ is bounded between $a,b$}
\RETURN 
$
\hat x \gets 0, \quad \operatorname{Error} \gets b-a
$
\ELSE
\RETURN 
$
\hat x \gets 0, \quad \operatorname{Error} \gets \infty
$
\ENDIF
\ENDIF
\end{algorithmic}
\end{algorithm}

\section{Proof of \prettyref{thm:weighted int}}
To prove \prettyref{thm:weighted int}, we show \pref{eq:err bound} holds under some concentration event while the event happens with high probability.
We consider a slight more general setting where there could be perturbations to even good batches:
\begin{definition}[Robust mean estimation from batches]\label{def:rb ln fm bt pert}
There are $m$ data providers indexed by: $\{1, 2, \ldots, m\} =: [m]$. 
Among these providers, we denote the indexes of uncorrupted providers by $\Gcal$
and the indexes of corrupted providers by $\Bcal$,
where $\Bcal \cup \Gcal = [m]$, $\Bcal \cap \Gcal = \emptyset$,  $\left|\Bcal\right| = \alpha m$. 
Any uncorrupted providers has access to \textbf{perturbed} samples from a sub-Gaussian distribution $\Dcal$ with mean $\mu$ and variance proxy $\sigma^2$ 
(i.e. $
\EE_{X\sim \mathcal{D}}[X] = \mu
$
and
$
\EE_{X \sim \mathcal{D}}\left[\exp\left(s \left(X - \mu\right)\right)\right] \le    \exp\left({\sigma^2 s^2}/{2}\right)
$,
$
\forall s \in \RR.
$).
\textbf{
For each $j \in \Gcal$, a data batch 
$\left\{\tilde x_j^i\right\}_{i=1}^{n_j}$ 
is drawn from $\Dcal$, 
while a perturbed version 
$\left\{ x_j^i\right\}_{i=1}^{n_j}$ 
is sent to the learner, where $n_j$ can be arbitrary and $\left|\hat x_j^i - \tilde x_j^i\right| \le \epsilon$ for some $\epsilon \ge 0$.}
For $j \in \Bcal$, 
$\left\{x_j^i\right\}_{i=1}^{n_j}$
can be arbitrary.
\end{definition}
One can easily recover \pref{def:rb ln fm bt} by letting $\epsilon = 0$.
We prove the
We first define the concentration event as following:
\begin{definition}[Concentration event]
For all $j \in \Gcal$, define the event that the empirical mean of clean batches is close to the population mean as:
\begin{equation}
\Ecal_j := \left\{\left|\hat x_j - \mu\right| \le \frac{\sigma}{\sqrt{\tilde n_j}}\sqrt{2\log \frac{2m}{\delta}} + \epsilon
\right\}
\end{equation}
Define the event that the weighted average of empirical means of clean batches is close to the population mean as:
\begin{equation}
\Ecal_{{wa}}  
:=
\left\{
\left|\frac{1}{\sum_{j \in \Gcal}\tilde n_j}\sum_{j \in \Gcal}\tilde n_j \hat x_j - \mu\right| \le 
\frac{\sigma}{\sqrt{\sum_{j \in \Gcal}\tilde n_{j}}}\sqrt{2\log\frac{2}{\delta}} + \epsilon   
\right\}
\end{equation}
Let $\Ecal_{{conc}}$ be the event that events above happens together:
\begin{equation}
\Ecal_{{conc}} := \Ecal_{wa} \cap \bigcap_{j\in\Gcal}\Ecal_j
\end{equation}
\end{definition}
We can show $\Ecal_{{conc}}$ happens with high probability using Hoeffding's inequality:
\begin{lemma}\label{lem:conc good event}
$\PP\left(\Ecal_{conc}\right) \ge 1- 2\delta$.
\end{lemma}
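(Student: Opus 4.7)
The plan is to bound $\PP(\Ecal_j^c)$ for each $j \in \Gcal$ and $\PP(\Ecal_{wa}^c)$ separately by $\delta/m$ and $\delta$ respectively, then conclude by a union bound. Throughout, I will decompose each perturbed empirical mean as $\hat x_j = \tilde{\hat x}_j + \xi_j$ where $\tilde{\hat x}_j := \frac{1}{n_j}\sum_{i=1}^{n_j} \tilde x_j^i$ is the average of the (unobserved) unperturbed sub-Gaussian samples and $|\xi_j| \le \epsilon$ by the pointwise perturbation bound in \pref{def:rb ln fm bt pert}.

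For the single-batch events $\Ecal_j$, I would apply the standard sub-Gaussian Hoeffding bound to $\tilde{\hat x}_j$: since the $\tilde x_j^i$ are i.i.d. with mean $\mu$ and variance proxy $\sigma^2$, one has
\begin{equation}
\PP\!\left(|\tilde{\hat x}_j - \mu| > \tfrac{\sigma}{\sqrt{n_j}}\sqrt{2\log(2m/\delta)}\right) \le \delta/m.
\end{equation}
Since $\tilde n_j \le n_j$ the same probability bound holds with $\tilde n_j$ in the denominator (the deviation threshold only gets larger). Combining with $|\hat x_j - \tilde{\hat x}_j| \le \epsilon$ via the triangle inequality yields $\PP(\Ecal_j^c) \le \delta/m$, and a union bound over $j \in \Gcal$ (at most $m$ indices) gives $\PP(\bigcup_{j \in \Gcal} \Ecal_j^c) \le \delta$.

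For $\Ecal_{wa}$, the key calculation is identifying the variance proxy of the weighted average. Writing
\begin{equation}
\frac{1}{\sum_{j \in \Gcal}\tilde n_j}\sum_{j \in \Gcal}\tilde n_j \tilde{\hat x}_j
= \frac{1}{\sum_{j \in \Gcal}\tilde n_j}\sum_{j \in \Gcal}\sum_{i=1}^{n_j} \frac{\tilde n_j}{n_j}\,\tilde x_j^i,
\end{equation}
I see this is a sum of independent sub-Gaussians. Each summand $\frac{\tilde n_j}{n_j}\tilde x_j^i$ has mean $\frac{\tilde n_j}{n_j}\mu$ and variance proxy $(\tilde n_j/n_j)^2\sigma^2$, so the overall variance proxy is
\begin{equation}
\frac{1}{(\sum_{j \in \Gcal}\tilde n_j)^2}\sum_{j \in \Gcal} n_j\!\left(\frac{\tilde n_j}{n_j}\right)^{\!2}\!\sigma^2
= \frac{\sigma^2}{(\sum_{j \in \Gcal}\tilde n_j)^2}\sum_{j \in \Gcal}\frac{\tilde n_j^2}{n_j}
\le \frac{\sigma^2}{\sum_{j \in \Gcal}\tilde n_j},
\end{equation}
where the last inequality uses $\tilde n_j^2/n_j \le \tilde n_j$ since $\tilde n_j \le n_j$. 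Hoeffding then gives deviation at most $\tfrac{\sigma}{\sqrt{\sum_{j \in \Gcal}\tilde n_j}}\sqrt{2\log(2/\delta)}$ with probability at least $1-\delta$, and the triangle inequality (absorbing the $\epsilon$-perturbation of each $\hat x_j$ into an $\epsilon$ slack on the weighted average, since the weights sum to one) completes the bound for $\Ecal_{wa}$. A final union bound gives $\PP(\Ecal_{conc}^c) \le 2\delta$.

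\textbf{Main obstacle.} The only non-routine step is the variance-proxy computation for the weighted average: one must keep track of both weights $\tilde n_j$ (used in the estimator) and sample sizes $n_j$ (governing the actual randomness), and realize that the clipping inequality $\tilde n_j \le n_j$ is exactly what makes the variance telescope to $\sigma^2/\sum_{j \in \Gcal}\tilde n_j$. Once this is in hand, everything else is bookkeeping via triangle inequality and union bound.
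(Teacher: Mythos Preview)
Your proof is correct and follows essentially the same route as the paper's: separate the $\epsilon$-perturbation via the triangle inequality, apply sub-Gaussian Hoeffding to the unperturbed batch means (and to their weighted average), and conclude by a union bound. Your variance-proxy computation for the weighted average, using $\tilde n_j^2/n_j \le \tilde n_j$, is in fact a slightly cleaner way to reach the same $\sigma^2/\sum_{j\in\Gcal}\tilde n_j$ that the paper obtains via the MGF calculation.
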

\begin{proof}
See proof in \pref{sec:pf conc}.
\end{proof}
Under event $\Ecal_{conc}$, we can give an upper bound on the estimation error:
\begin{lemma}\label{lem:bd in good event}
Under event $\Ecal_{conc}$, if $n^\cut > 0$, \pref{alg:int pert} outputs a $\hat x$ with
\begin{equation}\label{eq:pert bd 5eps}
\left|
\hat x-  \mu
\right| 
\le 
\frac{2}{\sqrt{\sum_{j \in [m]}\tilde n_j}}
\sigma\sqrt{2\log\frac{2}{\delta}} 
+ \frac{8\alpha m\sqrt{\ncut}}{\sum_{j \in [m]}\tilde n_j}
\sigma\sqrt{2\log\frac{2m}{\delta}} + 5\epsilon
\end{equation}
\end{lemma}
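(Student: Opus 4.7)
I plan to condition on the event $\Ecal_{conc}$ and argue the bound deterministically from there. Under $\Ecal_{conc}$, every confidence interval $I_j$ with $j \in \Gcal$ contains both $\hat x_j$ (its center) and $\mu$, so the collection $\{I_j\}_{j \in \Gcal}$ has a common point $\mu$; i.e.\ $\Gcal$ is itself a clique in the intersection graph, giving $|U^*| \ge |\Gcal| = (1-\alpha)m$ and hence $|U^* \cap \Gcal| \ge (1-2\alpha)m$, $|\Gcal \setminus U^*| \le \alpha m$. A pigeonhole argument then produces a ``large good representative'' $g^* \in U^* \cap \Gcal$ with $\tilde n_{g^*} = \ncut$: by definition at least $2\alpha m + 1$ batches attain $\tilde n_j = \ncut$, so at least $\alpha m + 1$ good ones do, whereas at most $\alpha m$ good indices can be absent from $U^*$. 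Fixing any $p^* \in \bigcap_{j \in U^*} I_j$ (nonempty by the clique property) and noting that both $p^*$ and $\mu$ lie in $I_{g^*}$, I get $|p^* - \mu| \le 2\sigma\sqrt{2\log(2m/\delta)/\ncut} + 2\epsilon$.

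Writing $W = \sum_{j \in U^*}\tilde n_j$, $S = \sum_{j \in \Gcal}\tilde n_j$, and $\Sigma = \sum_{j \in [m]}\tilde n_j$, I then decompose
\begin{equation*}
\hat x - \mu \;=\; \frac{1}{W}\sum_{j \in U^* \cap \Gcal}\tilde n_j(\hat x_j - \mu) \;+\; \frac{1}{W}\sum_{j \in U^* \cap \Bcal}\tilde n_j(\hat x_j - \mu).
\end{equation*}
For the good part I split $\sum_{j\in U^*\cap\Gcal}(\cdot) = \sum_{j\in\Gcal}(\cdot) - \sum_{j\in\Gcal\setminus U^*}(\cdot)$, bounding the full-$\Gcal$ sum by $\Ecal_{wa}$ (which gives $\sigma\sqrt{S}\sqrt{2\log(2/\delta)} + \epsilon S$) and bounding the excluded good batches individually by $\Ecal_j$, using $|\Gcal \setminus U^*| \le \alpha m$ and $\tilde n_j \le \ncut$ to get a contribution of order $\sigma \alpha m \sqrt{\ncut}\sqrt{2\log(2m/\delta)} + \epsilon \alpha m \ncut$. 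For the bad part I use the triangle decomposition $\hat x_j - \mu = (\hat x_j - p^*) + (p^* - \mu)$: since $\hat x_j$ is the center of $I_j$ and $p^* \in I_j$, the first piece is at most the half-width of $I_j$, which after multiplying by $\tilde n_j$ and summing yields $\sigma \alpha m \sqrt{\ncut}\sqrt{2\log(2m/\delta)} + \epsilon \alpha m \ncut$, while the second gives $(\sum_{j\in U^* \cap \Bcal}\tilde n_j)\,|p^* - \mu| \le \alpha m \ncut \cdot \bigl(2\sigma\sqrt{2\log(2m/\delta)/\ncut} + 2\epsilon\bigr)$.

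The final step is to convert the $1/W$ prefactor into the $1/\sqrt{\Sigma}$ and $1/\Sigma$ factors appearing in \pref{eq:pert bd 5eps}. Since at most $\alpha m$ indices lie outside $U^*$ and each $\tilde n_j \le \ncut$, I have $\Sigma - W \le \alpha m \ncut$; and since at least $2\alpha m + 1$ batches attain $\tilde n_j = \ncut$, $\Sigma \ge (2\alpha m + 1)\ncut$, which together force $\alpha m \ncut \le \Sigma/2$ and hence $W \ge \Sigma/2$. Plugging this in, together with $\sqrt{S}/\Sigma \le 1/\sqrt{\Sigma}$ (because $S \le \Sigma$) and $\alpha m \ncut/\Sigma \le 1/2$, converts the dominant good contribution into $2\sigma\sqrt{2\log(2/\delta)}/\sqrt{\Sigma}$ and packages the bad contribution (good excluded + bad clique-residual + bad shift) into $8\alpha m \sqrt{\ncut}\sigma\sqrt{2\log(2m/\delta)}/\Sigma$.

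The main obstacle will be shepherding the $\epsilon$-perturbation contributions: the two Hoeffding-style events $\Ecal_{wa}, \Ecal_j$ and the two clique-based terms each inject an $\epsilon$ factor, and these must aggregate into the stated $5\epsilon$ rather than a slightly looser constant. Careful use of the strict inequality $\alpha m/(2\alpha m + 1) < 1/2$ and of the observation that $S/\Sigma$ and $\alpha m \ncut/\Sigma$ cannot simultaneously be close to $1$ should be enough to pin down the exact $5\epsilon$.
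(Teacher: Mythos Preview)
Your proposal is correct and follows essentially the same route as the paper: the same clique/pigeonhole argument for $|U^*|\ge(1-\alpha)m$ and $W\ge\Sigma/2$, the same three-way split $\sum_{U^*}=\sum_{\Gcal}-\sum_{\Gcal\setminus U^*}+\sum_{U^*\cap\Bcal}$, and your bound on bad batches via the common point $p^*$ reproduces exactly the paper's bound $|\hat x_i-\mu|\le(\tfrac{1}{\sqrt{\tilde n_i}}+\tfrac{2}{\sqrt{\ncut}})\sigma\sqrt{2\log(2m/\delta)}+3\epsilon$, which the paper obtains by going through $j^*$ instead of $p^*$.

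On the $\epsilon$ bookkeeping you flag: your heuristic about $S/\Sigma$ and $\alpha m\ncut/\Sigma$ not both being large is not quite the right lever (they can in fact both approach $1$ and $1/2$ simultaneously). The paper gets exactly $5\epsilon$ by \emph{not} coarsening $\sum_{j\in U^*\cap\Bcal}\tilde n_j$ and $\sum_{j\in\Gcal\setminus U^*}\tilde n_j$ to $\alpha m\ncut$ too early: keep the $\epsilon$ contributions as $\frac{S}{W}\epsilon+\frac{\sum_{U^*\cap\Bcal}\tilde n_j}{W}\,3\epsilon+\frac{\sum_{\Gcal\setminus U^*}\tilde n_j}{W}\,\epsilon$, merge the first with one of the three copies in the middle term using $\Gcal\cup(U^*\cap\Bcal)\subseteq[m]$ to get $\le\Sigma/W\le 2$, and bound the remaining $2\alpha m\ncut/W+\alpha m\ncut/W$ by $3$ via the sharper $W\ge(\alpha m+1)\ncut$ (your pigeonhole step) rather than $W\ge\Sigma/2$.
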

\begin{proof}
See proof in \pref{sec:pf bd w conc}.
\end{proof}
\begin{proof}[Proof of \pref{thm:weighted int}]
Consider $\epsilon = 0$, i.e. no perturbation involved. By \pref{lem:conc good event} and \pref{lem:bd in good event}, with probability at least $1-2\delta$,
\begin{equation}
\left|
\hat x-  \mu
\right| 
\le 
\frac{2}{\sqrt{\sum_{j \in [m]}\tilde n_j}}
\sigma\sqrt{2\log\frac{2}{\delta}} 
+ \frac{8\alpha m\sqrt{\ncut}}{\sum_{j \in [m]}\tilde n_j}
\sigma\sqrt{2\log\frac{2m}{\delta}}
\end{equation}
\end{proof}

\subsection{Proof of \pref{lem:conc good event}\label{sec:pf conc}}
To prove \pref{lem:conc good event},
\begin{enumerate}
\item we first show that the perturbation changes the empirical mean of batches by at most $\epsilon$;
\item we can show the concentration bound of empirical means and weighted means for the \textbf{unperturbed} samples;
\item we can conclude by using the two results above and triangular inequality.
\end{enumerate}

\paragraph{the probability of event $\bigcap_{j\in{\Gcal}}\Ecal_{j}$:}
For all $j \in \Gcal$, let $\bar x_j$ be the empirical mean of \textbf{unperturbed} samples in batch $j$:
\begin{equation}
\bar x_j := \frac{1}{n_j}\sum_{i=1}^{n_j} \tilde x_j^i    
\end{equation}
By triangular inequality:
\begin{align}\label{eq:pert mean}
\left|\bar x_j - \hat x_j\right| =  \left|\frac{1}{n_j}\sum_{i=1}^{n_j} (x_j^i - \tilde x_j^i)\right|  
\le \frac{1}{n_j}\sum_{i=1}^{n_j}\epsilon = \epsilon
\end{align}

Since $\mathcal{D}$ is sub-Gaussian distribution, we can show the concentration of unperturbed samples $\bar x_j$:
for all good batch $j \in \Gcal$, 
\begin{equation}
\PP\left(\left|\bar x_j - \mu\right| > t\right) \le 2\exp\left(-\frac{n_j t^2}{2\sigma^2}\right) \le 2\exp\left(-\frac{\tilde n_j t^2}{2\sigma^2}\right)
\end{equation}
By union bound, with probability at least $1-\delta$, $\forall j \in \Gcal$,
\begin{equation}
\left|\bar x_j - \mu\right| \le \frac{\sigma}{\sqrt{\tilde n_j}}\sqrt{2\log\frac{2\left|\Gcal\right|}{\delta}} \le \frac{\sigma}{\sqrt{\tilde n_j}}\sqrt{2\log\frac{2m}{\delta}}   
\end{equation}
By triangular inequality, with probability at least $1-\delta$, $\forall j \in \Gcal$, 
\begin{equation}
\left|\hat x_j - \mu\right|  
\le 
\left|\hat x_j - \bar x_j\right| + \left|\bar x_j - \mu\right|    
\le 
\frac{\sigma}{\sqrt{\tilde n_j}}\sqrt{2\log\frac{2m}{\delta}} + \epsilon
\end{equation}
I.e. $\PP\left(\bigcap_{j\in\Gcal}\Ecal_{j}\right) \ge 1-\delta$.

\paragraph{the probability of event $\Ecal_{wa}$:} 
We first show the weighted average of empirical mean of the \textbf{unperturbed} sample i.e., 
$\frac{1}{\sum_{j' \in \Gcal}\tilde n_{j'}}\sum_{j \in \Gcal}\tilde n_{j} \bar x_j$
is a sub-Gaussian random variable:
firstly, note that the mean of the weighted average is $\mu$, i.e.  $\EE\left[\frac{1}{\sum_{j' \in \Gcal}\tilde n_{j'}}\sum_{j \in \Gcal}\tilde n_{j} \bar x_j\right] = \mu$.
By definition, we know for good batch $j \in \Gcal$, 
$\tilde x_j^1, \ldots, \tilde x_j^{n_j}$ are i.i.d. sub-Gaussian random variable with mean $\mu$ and variance proxy $\sigma^2$, i.e.
\begin{equation}
\EE\left[\exp\left(s \left(\tilde x_j^i - \mu\right)\right)\right] \le    \exp\left(\frac{\sigma^2 s^2}{2}\right)
\quad \forall s \in \RR.
\end{equation}
Since $\bar x_j  = \frac{1}{n_j}\sum_{i=1}^{n_j}\tilde x_j^i$:
for all $s \in \RR$,
\begin{align}
&\EE\left[\exp\left(s \left(\frac{1}{\sum_{j' \in \Gcal}\tilde n_{j'}}\sum_{j \in \Gcal}\tilde n_j \bar x_j - \mu\right)\right)\right]  
=  \prod_{j \in \Gcal}  \EE\left[\exp\left(s \left(\frac{1}{\sum_{j' \in \Gcal}\tilde n_{j'}}\tilde n_j (\bar x_j - \mu)\right)\right)\right] \\
= & \prod_{j \in \Gcal} \prod_{i \in [n_j]}  \EE\left[\exp\left(\frac{s}{\sum_{j' \in \Gcal}\tilde n_{j'}}\frac{\tilde n_j}{n_j} (\tilde x_j^i - \mu)\right)\right] 
\le  \prod_{j \in \Gcal} \prod_{i \in [n_j]}  \exp\left(\frac{\sigma^2}{2}\left(\frac{s}{\sum_{j' \in \Gcal}\tilde n_{j'}}\frac{\tilde n_j}{n_j} \right)^2\right) \\
\le & \prod_{i \in \Gcal} \prod_{i \in [n_j]}  \exp\left(\frac{\sigma^2}{2}\left(\frac{s}{\sum_{j' \in \Gcal}\tilde n_{j'}} \right)^2\right) 
= \exp\left(\frac{s^2}{2}\left(\frac{\sigma}{\sqrt{\sum_{j' \in \Gcal}\tilde n_{j'}}}\right)^2 \right)
\end{align}
This means $\frac{1}{\sum_{j' \in \Gcal}\tilde n_{j'}}\sum_{j' \in \Gcal}\tilde n_{j'} \bar x_j$ is a sub-Gaussian random variable with variance proxy $\frac{\sigma^2}{\sum_{j' \in \Gcal}\tilde n_{j'}}$.
Thus $\forall t > 0$,
\begin{equation}
\PP\left(\left|\frac{1}{\sum_{j' \in \Gcal}\tilde n_{j'}}\sum_{j \in \Gcal}\tilde n_{j} \bar x_j - \mu\right| > t \right)   \le 2\exp\left(-\frac{\sum_{j' \in \Gcal}\tilde n_{j'}t^2}{2\sigma^2}\right)
\end{equation}
Thus with probability at least $1-\delta$:
\begin{equation}
\left|\frac{1}{\sum_{j' \in \Gcal}\tilde n_{j'}}\sum_{j \in \Gcal}\tilde n_{j} \bar x_j - \mu\right| \le \frac{\sigma}{\sqrt{\sum_{j' \in \Gcal}\tilde n_{j'}}}\sqrt{2\log\frac{2}{\delta}}    
\end{equation}
This means:
\begin{align}\label{eq:clean_bound purt}
&\left|\frac{1}{\sum_{j' \in \Gcal}\tilde n_{j'}}\sum_{j \in \Gcal}\tilde n_{j} \hat x_j - \mu\right|\\
\le & \left|\frac{1}{\sum_{j' \in \Gcal}\tilde n_{j'}}\sum_{j \in \Gcal}\tilde n_{j} \bar x_j - \mu\right| + 
\left|\frac{1}{\sum_{j' \in \Gcal}\tilde n_{j'}}\sum_{j \in \Gcal}\tilde n_{j} \bar x_j - \frac{1}{\sum_{j' \in \Gcal}\tilde n_{j'}}\sum_{j \in \Gcal}\tilde n_{j} \hat x_j\right|\\
\le &\frac{\sigma}{\sqrt{\sum_{j' \in \Gcal}\tilde n_{j'}}}\sqrt{2\log\frac{2}{\delta}} + \epsilon   
\end{align}
I.e. $\PP\left(\Ecal_{wa}\right) \ge 1-\delta$.

By union bound $\PP\left(\Ecal_{conc}\right) = \PP\left(\Ecal_{wa} \cap \bigcap_{j\in\Gcal}\Ecal_j\right)\ge 1- 2\delta$.

\subsection{Proof of \pref{lem:bd in good event}\label{sec:pf bd w conc}}
By \pref{lem:conc good event}, we know the weighted average of empirical mean of good batches is a proper estimation for the population mean. 
Compared to $\Gcal$, the $U^*$ returned in \pref{ln:cliq pert} in \pref{alg:int pert} may remove some good batches and include some bad batches. 
Even though, as long as we can show:
\begin{enumerate}
\item \pref{ln:cliq pert} will not remove too many good batches and will not include too many bad batches;
\item the bad batches included in $U^*$ will not be significant
\end{enumerate}
then we can show that the $\hat x$ returned in \pref{ln:wted mean pert} is a reasonable estimation for $\mu$.

\paragraph{The structure of $U^*$:}
$U^*$ is the largest subset of batches with confidence interval intersection. 
The confidence intervals of all the good batches intersect under event $\bigcap_{j\in\Gcal}\Ecal_{j}$, thus $U^*$ should at least as large as $\Gcal$, thus it is not possible to remove too many good batches.
Furthermore, we can also show that we will not lose too much information, meaning significantly reduce the total number of samples and thus later on, we can show that the statistical rate will not be affected by too much. 
We make these ideas precise below.

Under event $\bigcap_{j\in\Gcal}\Ecal_{j}$, 
\begin{equation}
\mu \in \bigcap_{j\in\Gcal}I_j,    
\end{equation}
where $I_j$ is the confidence interval defined in \pref{ln:CI pert}.
Thus $\bigcap_{j\in\Gcal}I_j \neq \emptyset$.

Because $U^*$ maximizes
\begin{equation}
\max_{U \mbox{ s.t. }  \emptyset\neq\bigcap_{j \in U}I_j }|U|,    
\end{equation}
we know $|U^*| \ge |\Gcal| = (1-\alpha)m$.
Furthermore, $U^*$ can include at most $\alpha m$ batches, this means
$U^*$ includes at least $(1-2\alpha m)$ good batches. Formally:
\begin{equation}
|U^* \cap \Gcal| = |U^* \setminus\Bcal| \ge |U^*| - |\Bcal| \ge (1-2\alpha )m.    
\end{equation}

Now we show $U^*$ is not losing too much information, i.e. 
$
\sum_{j\in U^*} \tilde n_j \ge  \frac{1}{2}\sum_{j \in [m]} \tilde n_j    
$.
By definition of $\ncut$, there are at least $2 \alpha m + 1$ batches in $[m]$ such that $\tilde n_j = \ncut$. 
Because $U^*$ removes at more $\alpha m$ batches, there are at least $\alpha m + 1$ batches in $U^*$ such that $\tilde n_j = \ncut$. I.e.
\begin{align}
\left|\left\{j \in U^* :\tilde n_j = \ncut\right\}\right| 
= &\left|\left\{j \in [m] :\tilde n_j = \ncut\right\}\right| 
- \left|\left\{j \in [m]\setminus U^* :\tilde n_j = \ncut\right\}\right|\\
\ge & \left|\left\{j \in [m] :\tilde n_j = \ncut\right\}\right| 
- \left|[m]\setminus U^*\right| \\
\ge & 2\alpha m + 1 - \alpha m = \alpha m + 1 \label{eq:ncut in U star}
\end{align}
This means the information loss $\sum_{j \in [m]\setminus \Gcal} \tilde n_j$ can be bounded by
$\sum_{j \in U^*} \tilde n_j$, formally:
\begin{align}
2\sum_{j\in U^*} \tilde n_j - \sum_{j \in [m] } \tilde n_j= & \sum_{j \in U^*}\tilde n_j + \sum_{j \in U^* }\tilde n_j - \sum_{j \in [m]\cap U^*}\tilde n_j - \sum_{j \in [m]\setminus U^*}\tilde n_j \\
= & \sum_{j \in U^*}\tilde n_j - \sum_{j \in [m]\setminus U^*}\tilde n_j \ge (\alpha m +1)\ncut - \alpha m \ncut \ge 0
\end{align}
Thus we have:
\begin{equation}\label{eq:sm info loss}
\sum_{j\in U^*} \tilde n_j \ge  \frac{1}{2}\sum_{j \in [m]} \tilde n_j.
\end{equation}

\paragraph{Bad batches in $U^*$:}
In order for a bad batch $i$ to survive in $U^*$, its confidence interval $I_i$ must intersect with each good batches' confidence interval in $U^*$.
In particular, $I_i$ must intersect with the good batch in $U^*$ with largest $\tilde n_j$. 
By definition, there are at least $\alpha m + 1$ good batches with $\tilde n_j = \ncut$. 
Because $U^*$ excludes at most $\alpha m$ good batches, there are at least one good batch (denote by $j^*$), s.t. $\tilde n_{j^*} = \ncut$.

Thus $\forall j \in U^* \cap \Bcal$, $I_i \cap I_{j^*} \neq \emptyset$.
Which means, there exists some point $x$, s.t. $x \in I_i \cap I_{j^*}$, thus
\begin{align}
&\left|\hat x_i - \hat x_{j^*}\right| \le 
\left|\hat x_i - x\right| + \left|x-\hat x_{j^*}\right|\\
\le &  \frac{\sigma}{\sqrt{\tilde n_i}}\sqrt{2\log \frac{2m}{\delta}} + \epsilon
+  \frac{\sigma}{\sqrt{\tilde n_{j^*}}}\sqrt{2\log \frac{2m}{\delta}} + \epsilon\\
=& \left(\frac{1}{\sqrt{\tilde n_i}} + \frac{1}{\sqrt{\ncut}}\right) \sigma \sqrt{2\log \frac{2m}{\delta}} 
+ 2\epsilon.
\end{align}
Furthermore, under event $\bigcap_{j\in\Gcal}\Ecal_{j}$, 
\begin{equation}
\left|\hat x_{j^*}-\mu\right| 
\le  \frac{\sigma}{\sqrt{\ncut}}\sqrt{2\log \frac{2m}{\delta}} + \epsilon
\end{equation}
By triangular inequality, $\hat x_i$ we not bee too far away from $\mu$:
\begin{equation}\label{eq:bd bc bd}
\left|\hat x_i - \mu\right| 
\le \left|\hat x_i - \hat x_{j^*}\right| + \left|\hat x_{j^*} - \mu\right|
= \left(\frac{1}{\sqrt{\tilde n_i}} + \frac{2}{\sqrt{\ncut}}\right) \sigma \sqrt{2\log \frac{2m}{\delta}} 
+ 3\epsilon
\end{equation}

\paragraph{Error decomposition:}
As mentioned earlier, we can decompose the estimation of $\hat x$ returned by \pref{alg:int pert} by: 
statistical error (with potential information loss), term $\mathscr{A}_1$ in \pref{eq:est err decom};
error coming from including bad batches, term $\mathscr{A}_2$ in \pref{eq:est err decom};
error coming from removing good batches, term $\mathscr{A}_3$ in \pref{eq:est err decom}.
Specifically:
\begin{align}
&\left|\hat x - \mu\right| 
=  \frac{1}{\sum_{j \in U^*}\tilde n_j}\left|\sum_{j \in U^*} \tilde n_j (\hat x_j  - \mu)\right|\\ 
= &\frac{1}{\sum_{j \in U^*}\tilde n_j}\left|\left(\sum_{j \in \Gcal} + \sum_{j \in U^*\cap \mathcal{B}} - \sum_{j \in \Gcal\setminus U^*}\right) \tilde n_j (\hat x_j  - \mu)\right| \\
\le & \frac{1}{\sum_{j \in U^*}\tilde n_j} 
\left(\left|\sum_{j \in \Gcal} \tilde n_j (\hat x_j  - \mu)\right|
+ \left|\sum_{j \in U^*\cap \mathcal{B}} \tilde n_j (\hat x_j  - \mu)\right|
+ \left|\sum_{j \in \Gcal\setminus U^*} \tilde n_j (\hat x_j  - \mu)\right|\right) \\
&\mbox{(this is by triangular inequality)}\\    
=: & \mathscr{A}_1 + \mathscr{A}_2 + \mathscr{A}_3 \label{eq:est err decom}
\end{align}
We can bound the first term $\mathscr{A}_1$ by \pref{eq:sm info loss} under event $\Ecal_{wa}$:
\begin{align}
\mathscr{A}_1 = &
\frac{1}{\sum_{j \in U^*}\tilde n_j} 
\left|\sum_{j \in \Gcal} \tilde n_j (\hat x_j  - \mu)\right| 
= \frac{\sum_{j \in \Gcal}\tilde n_j}{\sum_{j \in U^*}\tilde n_j} 
\left|\frac{1}{\sum_{j \in \Gcal}\tilde n_j}\sum_{j \in \Gcal} \tilde n_j (\hat x_j  - \mu)\right| \\
= &\frac{\sum_{j \in \Gcal}\tilde n_j}{\sum_{j \in U^*}\tilde n_j} 
\left|\frac{1}{\sum_{j \in \Gcal}\tilde n_j}\sum_{j \in \Gcal} \tilde n_j \hat x_j  - \mu\right|\\
\le& \frac{\sum_{j \in \Gcal}\tilde n_j}{\sum_{j \in U^*}\tilde n_j} \left(\frac{\sigma}{\sqrt{\sum_{j \in \Gcal}\tilde n_{j}}}\sqrt{2\log\frac{2}{\delta}} + \epsilon  \right) \quad \left(\mbox{By event $\Ecal_{wa}$}\right) \\
=& \frac{\sqrt{\sum_{j \in \Gcal}\tilde n_j}}{\sum_{j \in U^*}\tilde n_j} {\sigma}\sqrt{2\log\frac{2}{\delta}} + \frac{\sum_{j \in \Gcal}\tilde n_j}{\sum_{j \in U^*}\tilde n_j}\epsilon \\
\le& 2\frac{\sqrt{\sum_{j \in \Gcal}\tilde n_j}}{\sum_{j \in [m]}\tilde n_j} {\sigma}\sqrt{2\log\frac{2}{\delta}} 
+ \frac{\sum_{j \in \Gcal}\tilde n_j}{\sum_{j \in U^*}\tilde n_j}\epsilon
\quad \left(\mbox{By \pref{eq:sm info loss}}\right) \\
\le& 2\frac{\sqrt{\sum_{j \in [m]}\tilde n_j}}{\sum_{j \in [m]}\tilde n_j} {\sigma}\sqrt{2\log\frac{2}{\delta}} 
+ \frac{\sum_{j \in \Gcal}\tilde n_j}{\sum_{j \in U^*}\tilde n_j}\epsilon 
\quad \left(\mbox{By $\Gcal \subseteq [m]$}\right) \\
= & \frac{2}{\sqrt{\sum_{j \in [m]}\tilde n_j}} {\sigma}\sqrt{2\log\frac{2}{\delta}} 
+ \frac{\sum_{j \in \Gcal}\tilde n_j}{\sum_{j \in U^*}\tilde n_j}\epsilon
\end{align}
By \pref{eq:bd bc bd}, we can bound the second term $\mathscr{A}_2$ by:
\begin{align}
\mathscr{A}_2 & 
= \frac{1}{\sum_{j \in U^*}\tilde n_j} 
\left|\sum_{j \in U^*\cap \mathcal{B}} \tilde n_j (\hat x_j  - \mu)\right|
\le \frac{1}{\sum_{j \in U^*}\tilde n_j} 
\sum_{j \in U^*\cap \mathcal{B}} \tilde n_j\left| \hat x_j  - \mu\right|
\quad \left(\mbox{By triangular ineq}\right)\\
\le &\frac{1}{\sum_{j \in U^*}\tilde n_j} 
\sum_{j \in U^*\cap \mathcal{B}} \tilde n_j
\left(\left(\frac{1}{\sqrt{\tilde n_j}} + \frac{2}{\sqrt{\ncut}}\right) \sigma \sqrt{2\log \frac{2m}{\delta}} 
+ 3\epsilon\right) 
\quad \left(\mbox{By \pref{eq:bd bc bd}}\right)\\
\le &\frac{1}{\sum_{j \in U^*}\tilde n_j} 
\sum_{j \in U^*\cap \mathcal{B}} 
\left({\sqrt{\tilde n_j}} + \frac{2\tilde n_j}{\sqrt{\ncut}}\right) \sigma \sqrt{2\log \frac{2m}{\delta}} 
+ \frac{\sum_{j \in U^*\cap \mathcal{B}} \tilde n_j}{\sum_{j \in U^*}\tilde n_j} 3\epsilon \\
\le &\frac{1}{\sum_{j \in U^*}\tilde n_j} 
\sum_{j \in U^*\cap \mathcal{B}} 
3{\sqrt{\ncut}} \sigma \sqrt{2\log \frac{2m}{\delta}} 
+ \frac{\sum_{j \in U^*\cap \mathcal{B}} \tilde n_j}{\sum_{j \in U^*}\tilde n_j} 3\epsilon 
\quad \left(\mbox{By $\tilde n_j \le \ncut$}\right)\\
\le &\frac{3\alpha m{\sqrt{\ncut}}}{\sum_{j \in U^*}\tilde n_j} 
 \sigma \sqrt{2\log \frac{2m}{\delta}} 
+ \frac{\sum_{j \in U^*\cap \mathcal{B}} \tilde n_j}{\sum_{j \in U^*}\tilde n_j} 3\epsilon 
\quad \left(\mbox{$U^*$ includes at most $\alpha m$ bad batches}\right)
\end{align}

We can bound the third term $\mathscr{A}_3$ by:
\begin{align}
\mathscr{A}_3 = & \frac{1}{\sum_{j \in U^*}\tilde n_j} 
\left|\sum_{j \in \Gcal\setminus U^*} \tilde n_j (\hat x_j  - \mu)\right|
\le \frac{1}{\sum_{j \in U^*}\tilde n_j} 
\sum_{j \in \Gcal\setminus U^*} \tilde n_j\left|\hat x_j  - \mu\right|
\quad \left(\mbox{By triangular ineq}\right)\\
\le & \frac{1}{\sum_{j \in U^*}\tilde n_j} 
\sum_{j \in \Gcal\setminus U^*} \tilde n_j
\left(\frac{\sigma}{\sqrt{\tilde n_j}}\sqrt{2\log \frac{2m}{\delta}} + \epsilon\right)
\quad \left(\mbox{By event $\bigcap_{j\in\Gcal}\Ecal_j$}\right)\\
= & \frac{1}{\sum_{j \in U^*}\tilde n_j} 
\sum_{j \in \Gcal\setminus U^*} {\sigma}{\sqrt{\tilde n_j}}\sqrt{2\log \frac{2m}{\delta}} 
+ \frac{\sum_{j \in \Gcal\setminus U^*}\tilde n_j}{\sum_{j \in U^*}\tilde n_j} \epsilon \\
\le & \frac{\alpha m {\sqrt{\ncut}}}{\sum_{j \in U^*}\tilde n_j} 
 {\sigma}\sqrt{2\log \frac{2m}{\delta}} 
+ \frac{\sum_{j \in \Gcal\setminus U^*}\tilde n_j}{\sum_{j \in U^*}\tilde n_j} \epsilon\\
& \left(\mbox{Because $U^*$ excludes at most $\alpha m$ good batches and $\tilde n_j \le \ncut$}\right)
\end{align}
Note that the above upper bounds for $\mathscr{A}_2$ and $\mathscr{A}_3$ are still valid even if some of the $\tilde n_j$'s are zero. 

In conclusion, we can bound the estimation error by:
\begin{align}
\left|\hat x - \mu\right| \le & \mathscr{A}_1 + \mathscr{A}_2 + \mathscr{A}_3   \\
\le & \left(\frac{2}{\sqrt{\sum_{j \in [m]}\tilde n_j}} {\sigma}\sqrt{2\log\frac{2}{\delta}} 
+ \frac{\sum_{j \in \Gcal}\tilde n_j}{\sum_{j \in U^*}\tilde n_j}\epsilon\right) \\
&+ \left(\frac{3\alpha m{\sqrt{\ncut}}}{\sum_{j \in U^*}\tilde n_j} 
 \sigma \sqrt{2\log \frac{2m}{\delta}} 
+ \frac{\sum_{j \in U^*\cap \mathcal{B}} \tilde n_j}{\sum_{j \in U^*}\tilde n_j} 3\epsilon \right)\\
&+ \left(\frac{\alpha m {\sqrt{\ncut}}}{\sum_{j \in U^*}\tilde n_j} 
 {\sigma}\sqrt{2\log \frac{2m}{\delta}} 
+ \frac{\sum_{j \in \Gcal\setminus U^*}\tilde n_j}{\sum_{j \in U^*}\tilde n_j} \epsilon\right)\\
= & \frac{2}{\sqrt{\sum_{j \in [m]}\tilde n_j}} {\sigma}\sqrt{2\log\frac{2}{\delta}}
+ \frac{4\alpha m{\sqrt{\ncut}}}{\sum_{j \in U^*}\tilde n_j} 
 \sigma \sqrt{2\log \frac{2m}{\delta}} \\
& + \frac{\left(\sum_{j \in \Gcal}+\sum_{j \in U^*\cap \Bcal}\right)\tilde n_j}{\sum_{j \in U^*}\tilde n_j}\epsilon
+ \frac{\sum_{j \in U^*\cap \mathcal{B}} \tilde n_j}{\sum_{j \in U^*}\tilde n_j} 2\epsilon
+ \frac{\sum_{j \in \Gcal\setminus U^*}\tilde n_j}{\sum_{j \in U^*}\tilde n_j} \epsilon\\
\le & \frac{2}{\sqrt{\sum_{j \in [m]}\tilde n_j}} {\sigma}\sqrt{2\log\frac{2}{\delta}}
+ \frac{4\alpha m{\sqrt{\ncut}}}{\sum_{j \in U^*}\tilde n_j} 
 \sigma \sqrt{2\log \frac{2m}{\delta}} \\
& + \frac{\sum_{j \in [m]}\tilde n_j}{\sum_{j \in U^*}\tilde n_j}\epsilon
+ \frac{\alpha m \ncut}{\sum_{j \in U^*}\tilde n_j} 2\epsilon
+ \frac{\alpha m \ncut}{\sum_{j \in U^*}\tilde n_j} \epsilon \\
&\left(\mbox{By $\Gcal \cup \left(U^*\cap\Bcal\right) \subseteq [m]$, $|U^*\cap \Bcal|\le \alpha m$, $|\Gcal \setminus U^*|\le \alpha m$}\right)\\
\le & \frac{2}{\sqrt{\sum_{j \in [m]}\tilde n_j}} {\sigma}\sqrt{2\log\frac{2}{\delta}}
+ \frac{8\alpha m{\sqrt{\ncut}}}{\sum_{j \in [m]}\tilde n_j} 
 \sigma \sqrt{2\log \frac{2m}{\delta}} + 5\epsilon
\\
& \left(\mbox{By \pref{eq:ncut in U star} and \pref{eq:sm info loss}}\right)
\end{align}



\section{Proof of \pref{thm:byzan ucb vi}}
By following standard regret decomposition for UCB type of algorithm (see \citep{jin2020provably}), under the event that the estimation error of Bellman operator is bounded by bonus terms, we can decompose the regret by: 
\begin{enumerate}
\item the cumulative bonus term occurred in the trajectories of each good agent
\item a term that can be easier bounded by Azuma-Hoeffding's inequalities.
\end{enumerate}
By \pref{lem:bd in good event} and replacing \pref{lem:conc good event} with a variant for martingale, we can show the event mentioned above happens with high probability.
Unlike standard regret bound for tabular setting, we cannot directly use the telescope series to estimate the cumulative bonuses.
Instead, we first need to show that because each good agent is using the same policy in every episode, their trajectories have a lot of overlaps, meaning the $(s,a,h)$ counts of all good agents do not differ by too much.
Given that, we can simply the bound in \pref{lem:bd in good event} and use the telescope series.

We start by restating \pref{thm:byzan ucb vi}:
\begin{theorem}[Regret bound, \pref{thm:byzan ucb vi}]
If $\alpha \le \frac{1}{3}\left(1-\frac{1}{m}\right)$, for all $\delta<\frac{1}{4}$, with probability at least $1-3\delta$:
\begin{equation}
\sum_{k=1}^K \sum_{j \in \mathcal{G}}\left(V_1^*(s_1) - V_1^{\hat\pi^k}(s_1)\right)
=
\tilde O\left(
(1 + \alpha \sqrt{m})SH^2\sqrt{AKm\log \frac{1}{\delta}}
\right)
\end{equation}
\end{theorem}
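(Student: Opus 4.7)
The plan is to follow the optimism-based UCBVI regret analysis, adapted here in two ways: to the Byzantine setting via the \textsc{PWC} error guarantee of \pref{thm:weighted int}, and to the distributed low-switching schedule via a count-balance argument across good agents. First I would establish a single ``good event'' on which, for every synchronization episode $k$, every step $h$, and every $(s,a)$, the output of \textsc{PWC} in \pref{ln:call pwc} satisfies
\begin{equation*}
\bigl|(\hat{\mathbb{B}}_h \hat V_{h+1})(s,a) - (\mathbb{B}_h \hat V_{h+1})(s,a)\bigr| \le \Gamma_h(s,a).
\end{equation*}
Because $\hat V_{h+1}$ is data-dependent, this requires pairing the perturbation-robust bound of \pref{alg:int pert} with an $\epsilon$-cover of the value-function class at resolution $\epsilon = 1/(SAHKm)$; the choice $\delta' = \delta/(SAHKm)^{3S}$ in \pref{alg:UCBVI} is precisely what a union bound over this cover and over the $O(SAHKm)$ evaluation triples $(k,h,s,a)$ demands, while the $\epsilon$-perturbation term is absorbed by the slack already built into \pref{alg:int pert}. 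A standard backward induction on this event gives optimism $\hat V_1(s_1) \ge V_1^*(s_1)$, after which the extended value-difference lemma yields, for each good agent $j$,
\begin{equation*}
V_1^*(s_1) - V_1^{\hat\pi_k}(s_1) \le \sum_{h=1}^H 2\,\mathbb{E}_{\hat\pi_k}\bigl[\Gamma_h(s_h,a_h)\bigr] + (\text{martingale terms}).
\end{equation*}

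Summing over $k \in [K]$ and $j \in \mathcal{G}$ (all good agents execute the same $\hat\pi_k$) and bounding the martingale term by Azuma--Hoeffding at cost $\tilde O(H^2\sqrt{mK})$, the problem reduces to controlling the cumulative bonus $\sum_{k,j\in\mathcal{G},h} \Gamma_h(s_h^{j,k},a_h^{j,k})$, where \pref{thm:weighted int} gives
\begin{equation*}
\Gamma_h(s,a) = \tilde O\!\left(\frac{H}{\sqrt{\sum_{j\in[m]} \tilde n_j(s,a)}} + \frac{H\alpha m \sqrt{n^{\mathrm{cut}}(s,a)}}{\sum_{j\in[m]} \tilde n_j(s,a)}\right).
\end{equation*}

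The main obstacle is controlling this bonus sum, and the key tool is a count-balance lemma for good agents. Because each good agent runs the same policy $\hat\pi_k$ inside a synchronization epoch and requests a sync exactly when one of its own $(s,a,h)$ counts doubles, the counts at the moment of every evaluation satisfy $N_h^{j'}(s,a) \le 2 N_h^{j}(s,a) + 1$ for all $j,j' \in \mathcal{G}$; the cap $\operatorname{SyncCount}_j \le SAH\log_2 K$ in \pref{ln:sync check} ensures that Byzantine sync requests contribute only $\tilde O(mSAH\log K)$ additional syncs and therefore cannot destroy this invariant. Consequently $n^{\mathrm{cut}}(s,a) = O(N_h^{j}(s,a))$ and $\sum_{j \in [m]} \tilde n_j(s,a) = \Theta(m\, N_h^{j}(s,a))$ for any representative good agent $j$, yielding the clean per-point estimate
\begin{equation*}
\Gamma_h(s,a) = \tilde O\!\left(\frac{H(1+\alpha\sqrt{m})}{\sqrt{m\,N_h^{j}(s,a)}}\right).
\end{equation*}
Summing this along the trajectories of the $(1-\alpha)m$ good agents and applying the standard pigeonhole/telescoping bound ($\sum_{k,h} 1/\sqrt{N_h^j(s_h^{j,k},a_h^{j,k})} = \tilde O(H\sqrt{SAK})$ for each $j$) delivers the leading term $\tilde O((1+\alpha\sqrt{m})H^2 S\sqrt{AmK\log(1/\delta)})$; the extra $\sqrt{S}$ here tracks the covering entropy of the value class through $\sqrt{\log(1/\delta')}=\tilde O(\sqrt S)$, as in tabular UCBVI. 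Adding back the $\tilde O(H^2\sqrt{mK})$ martingale contribution completes the proof with probability at least $1-3\delta$. The delicate step, on which the whole argument hinges, is establishing this count-balance invariant in the face of adversarial sync requests and thereby making $n^{\mathrm{cut}}(s,a)$ comparable to $\tfrac{1}{m}\sum_{j\in[m]} \tilde n_j(s,a)$; the remaining pieces are routine adaptations of existing UCBVI and low-switching machinery.
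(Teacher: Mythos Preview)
Your overall architecture is correct and matches the paper: establish a bonus-validity event via the \textsc{PWC} guarantee plus an $\epsilon$-net over value functions (this is exactly why $\delta'=\delta/(SAHKm)^{3S}$ and $\epsilon=1/(SAHKm)$ are chosen), deduce optimism by backward induction, apply the value-difference decomposition, and reduce to a cumulative-bonus sum plus an Azuma martingale. The telescoping and the appearance of the extra $\sqrt{S}$ from $\sqrt{\log(1/\delta')}$ are also right.

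The gap is in your count-balance step. You assert that the doubling-trigger sync rule \emph{deterministically} forces $N_h^{j'}(s,a)\le 2N_h^{j}(s,a)+1$ across good agents. It does not. The doubling rule only guarantees that, for each \emph{individual} good agent, its count at most doubles between two consecutive syncs; it says nothing about how two different good agents' counts compare, because those counts are driven by independent random trajectories. A simple thought experiment shows the ratio $N_h^{j'}/N_h^{j}$ can drift arbitrarily over a sequence of syncs if one agent is repeatedly lucky on a particular $(s,a,h)$ and the other unlucky. The paper instead proves evenness \emph{probabilistically}: since every good agent runs the same policy $\hat\pi^t$ in episode $t$, each $N_h^{j,k}(s,a)$ is a sum of indicators with common conditional mean $d_h^{\hat\pi^t}(s,a)$, and a Freedman/Bernstein martingale bound shows that once the total good count exceeds $N_0=\Theta\bigl(m\log\tfrac{mKSAH}{\delta}\bigr)$, all good agents' counts lie within a factor of $2$ of each other. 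This probabilistic ``evenness'' event is precisely the third $\delta$ in the stated $1-3\delta$ probability; your deterministic argument would only account for two of them.

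The doubling sync mechanism is still needed, but for a different purpose: it ensures the \emph{total} count at the last synchronization is within a constant factor of the current total count, which is what makes the telescoping $\sum_t (N_h^{k_t}-N_h^{k_{t-1}})/\sqrt{N_h^{k_{t-1}}}=O(\sqrt{N_h^K})$ go through despite stale bonuses. The burn-in region $N_h^k(s,a)<N_0$ where evenness may fail contributes only a lower-order $O(SAH^2 N_0)$ term. So your plan is essentially complete once you replace the incorrect deterministic evenness claim with the Bernstein-based concentration argument.
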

We first give the high level idea of our proof:
\begin{enumerate}
\item We give an analysis under the intersection of three ``good events'':
\begin{itemize}
\item event $\mathcal{E}$: the estimation error of Bellman operator is upper-bounded by bonus (See \pref{sec:valid bn eve}, \pref{lem:valid bn event});
\item event $\mathcal{E}_{\mbox{even}}$: if the total count 
$\sum_{j\in\mathcal{G}}N_h^{j,k}(s,a,h)$ on some $(s,a,h)$ is large, then the counts of each agent differ by at most $2$ times (See \pref{sec:even agent eve}, \pref{lem:evenness of good agent});
\item event $\mathcal{E}_{\mbox{Azmua}}$: an error term in the regret decomposition is bounded by Azmua-Hoeffding bound.
\end{itemize}
\item Under event $\mathcal{E}$, we can decompose the regret into two terms (see \pref{sec:regret decom}, \pref{lem:regret decomp}): 
\begin{itemize}
\item a martingale with bounded difference which is controlled by Hoeffding bound under event $\mathcal{E}_{\mbox{Azmua}}$;
\item the cumulative bonus term, which can be bounded by telescoping series under event $\mathcal{E}_{\mbox{even}}$.
\end{itemize}
\end{enumerate}

We use $\bar Q_h^k$, $\hat Q_h^k$, $\hat \pi_h^k$, $\hat V_h^k$, $\hat \Bell_h^k$, $\Gamma_h^k$ to denote the variables used in the $k$-th episode. 
When a synchronization happens in episode $k$, those variables are the updated ones after the synchronization;
when there is no synchronization in episode $k$, those variables remain the same as last episode.
Let $N_h^{j,k}(s,a)$ be the counts on $(s,a,h)$ tuples in episode $k$ after the counts update.
Formally, 
We start by restating the data collection process and counts on $(s,a,h)$ tuples of each good agent $j \in \mathcal{G}$:
during the data collection process,
we allow all of the agents collect data together.
In the $k$-th episode, agent $j$ collects a multi-set of transition tuples using policy $d^{\hat \pi^k}$: $\left\{\left(s_{h}^{j,k}, a_{h}^{j,k}, r_{h}^{j,k}, s_{h+1}^{j,k}\right)\right\}_{h \in [H]}$.
\begin{equation}
D_{j,k} := \bigcup_{h \in [H]}D_{j,k}^h
:=\bigcup_{h \in [H]}\bigcup_{k' \le k}\left\{\left(s_{h}^{j,k'}, a_{h}^{j,k'}, r_{h}^{j,k'}, s_{h+1}^{j,k'}\right)\right\}
\end{equation}
$N_h^{j,k}(s,a)$ is given by:
\begin{equation}
N_h^{j,k}(s,a) = \sum_{h = 1}^H\sum_{(\tilde s, \tilde a, \tilde r,\tilde s') \in D_{j,k}^h}
\mathbf{1}\left\{(s,a) = (\tilde s, \tilde a)\right\}    
\end{equation}
We give the formal definition of good events below:
\begin{definition}
\begin{align}
\mathcal{E}_{\mbox{Azmua}} := &
\left\{\sum_{k=1}^K \sum_{j \in \mathcal{G}}\sum_{h=1}^H\left(\EE_{s'\sim P_h(\cdot \mid s_h^{j,k},a_h^{j,k})}\left[\hat V_{h+1}^k (s') - V_{h+1}^{\hat \pi^k}(s')\right] 
\right.\right.\\
&
\left.\left.
- \left(\hat V_{h+1}^k (s_{h+1}^{j,k}) - V_{h+1}^{\hat\pi^k}(s_{h+1}^{j,k})\right)\right) \le \sqrt{8mKH^3\log\frac{2}{\delta}}    
\right\}
\end{align}
\begin{equation}
\mathcal{E} := \left\{
\bigcap_{(s,a,h,k,f) \in \mathcal{S} \times \mathcal{A} \times H \times K \times [0,1]^{\mathcal{S}}} \left\{\left| \left(\hat \Bell_h^k f\right)(s,a) - \left(\Bell_h f\right)(s,a) \right| \le \Gamma_h^k(s,a)\right\}
\right\}
\end{equation}
For any $(s,a,h,k) \in \mathcal{S}\times\mathcal{A}\times [H] \times [K]$,
we define the follow event:
\begin{equation}
\mathcal{E}_{\mbox{even}}(s,a, h, k) := \left\{\mbox{if $\sum_{j\in \mathcal{G}} N_h^{j,k}(s,a)
\ge 400 m\log \frac{2 mKSAH}{\delta} 
$, then $\max_{i,j \in \mathcal{G}}\frac{N_h^{j,k}(s,a)}{N_h^{i,k}(s,a)} \le 2$}\right\}    
\end{equation}
And define 
\begin{equation}
\mathcal{E}_{\mbox{even}} := \bigcap_{s,a,h,K}  \mathcal{E}_{\mbox{even}}(s,a, h, k).   
\end{equation}
\end{definition}

\begin{proof}[Proof of \pref{thm:byzan ucb vi}]
By Azuma-Hoeffding inequality:
\begin{align}
&\PP\left(\overline{\mathcal{E}_{\mbox{Azmua}}}\right)  \le \delta
\end{align}
Then by union bound: \prettyref{lem:valid bn event} and \prettyref{lem:evenness of good agent} together implies for all $0 < \delta < \frac{1}{4}$:
\begin{align}
\PP\left(\overline{\mathcal{E}} \cup \overline{\mathcal{E}_{\mbox{even}}} \cup \overline{\mathcal{E}_{\mbox{Azmua}}}\right)
\le \PP\left(\overline{\mathcal{E}}\right) + \PP\left(\overline{\mathcal{E}_{\mbox{even}}}\right) + \PP\left(\overline{\mathcal{E}_{\mbox{Azmua}}}\right)
\le 3\delta
\end{align}
which means $\mathcal{E} \cap \mathcal{E}_{\mbox{even}} \cap \mathcal{E}_{\mbox{Azmua}}$ happens with probability at least $1- 3\delta$.

We now upper bound the regret under event $\mathcal{E} \cap \mathcal{E}_{\mbox{even}}\cap \mathcal{E}_{\mbox{Azmua}}$.
By \prettyref{lem:regret decomp} we can decompose the regret by:
\begin{align}
&\sum_{k=1}^K \sum_{j \in \mathcal{G}}\left(V_1^*(s_1) - V_1^{\hat\pi^k}(s_1)\right) \\
\le & 2\sum_{k=1}^K \sum_{j \in \mathcal{G}}\sum_{h=1}^H\Gamma_h^k(s_h^{j,k},a_h^{j,k}) \\
&+ 
\sum_{k=1}^K \sum_{j \in \mathcal{G}}\sum_{h=1}^H\left(\EE_{s'\sim P_h(\cdot \mid s_h^k,a_h^k)}\left[\hat V_{h+1}^k (s') - V_{h+1}^{\hat \pi^k}(s')\right] - \left(\hat V_{h+1}^k (s_{h+1}^{j,k}) - V_{h+1}^{\hat\pi^k}(s_{h+1}^{j,k})\right)\right) \\
& \left(\mbox{Under event $\mathcal{E}$}\right)\\
\le & 2\sum_{k=1}^K \sum_{j \in \mathcal{G}}\sum_{h=1}^H\Gamma_h^k(s_h^{j,k},a_h^{j,k})
+ \sqrt{8mKH^3\log\frac{2}{\delta}}    \\
& \left(\mbox{Under event $\mathcal{E}_{\mbox{Azmua}}$}\right)
\end{align}
We only need to upper bound the cumulative bonus. 
Suppose the policy is updated at the beginning of
$k_0+1, k_1+1, k_2+1, \ldots, k_l+1$-th episodes, 
with the data collected in the first 
$k_0, k_1, k_2, \ldots, k_l$-th episodes,
with $k_1 =1$. 
To simplify the notation, we define $k_{0} = 0$, $k_{l+1} = K$.

For convenience, in the following, we use $N_h^k(s,a)$ to denote the total count on $(s,a,h)$ tuples up to episode $k$ over all good agents:
\begin{equation}
N_h^k(s,a) := \sum_{j \in \mathcal{G}}N_h^{j,k}(s,a),   
\end{equation}
where $N_h^0(s,a) = 0$.
We can rearrange the cumulative bonus by summing over $(s,a)$ pairs:
\begin{align}\label{eq:bonus sum}
\sum_{k=1}^K \sum_{j \in \mathcal{G}}
\sum_{h=1}^H\Gamma_h^k(s_h^{j,k},a_h^{j,k})
= 
\sum_{h=1}^H\sum_{(s,a) \in \mathcal{S} \times \mathcal{A}} 
\sum_{t = 1}^{l+1}\Gamma_h^{k_{t-1}+1}(s, a) 
\left(
N_h^{k_t}(s,a) - N_h^{k_{t-1}}(s,a)
\right)
\end{align}
When there are less than $(2\alpha m+1)$ agents have coverage on some $(s,a,h)$ tuple, the bonus term $\Gamma_h^k(s,a)$ is trivially set to be $H-h+1$. 
In the following, we show that under event 
$\mathcal{E}_{\mbox{even}}$,
in \prettyref{eq:bonus sum},
for each $(s,a,h)$ tuple, there are at most $2N_0$ bonus term such that $\Gamma_h(s,a) = H-h+1$, where
\begin{equation}
N_0 := 400m\log \frac{2 mKSAH}{\delta}.    
\end{equation}
For any $(s,a,h)$, let $l_0(s,a,h)$ be such that:
\begin{equation}
N_h^{k_{l_0(s,a,h)-1}}(s,a) < N_0 \le N_h^{k_{l_0(s,a,h)}}(s,a).    
\end{equation}
This means when running the policy update at episode $k_{l_0(s,a,h)} + 1$, the total counts for $(s,a,h)$, i.e. $N_h^{k_{l_0(s,a,h)}}(s,a)$, is larger than $N_0$.
For any $k \ge k_{l_0(s,a,h)}$, we have
\begin{equation}
\sum_{j\in\mathcal{G}}N_h^{j,k}(s,a)    
= N_h^{k}(s,a) \ge N_h^{k_{l_0(s,a,h)}}(s,a)
\ge N_0.    
\end{equation}
By definition of $\mathcal{E}_{\mbox{even}}$, for any $k \ge k_{l_0(s,a,h)}$
\begin{equation}\max_{i,j \in \mathcal{G}}\frac{N_h^{j,k}(s,a)}{N_h^{i,k}(s,a)} \le 2
\end{equation}
this means for any $k \ge k_{l_0(s,a,h)}$,
$N_h^{j,k}(s,a) > 0, \forall j \in \mathcal{\mathcal{G}}$, meaning  all of the good agents have coverage on $(s,a,h)$, this means there are at least $(1-\alpha)m \ge 2\alpha m +1$ agents have coverage, and thus:

\begin{itemize}
\item Trivial bonus can only happens at $k \le k_{l_0(s,a,h)}$, i.e.
\begin{equation}\label{eq:bad bonus}
\Gamma_h^k(s,a) = H-h+1 \mbox{ only if }  k \le k_{l_0(s,a,h)}.    
\end{equation}
Furthermore, because in the algorithm, the agents synchronize and update their policy when or before any $(s,a,h)$ counts for a good agent doubles. 
I.e.: for all $(s,a,h,j,i) \in \mathcal{S} \times \mathcal{A} \times [H] \times \mathcal{G} \times [l]$:
\begin{equation}\label{eq:at most double}
N_h^{k_{t}}(s,a) \le 
2N_h^{k_{t-1}}(s,a)     
\end{equation}
This means
\begin{equation}\label{eq:at most 2N0}
N_h^{k_{l_0(s,a,h)}}(s,a) \le 
2N_h^{k_{l_0(s,a,h)-1}}(s,a) < 2N_0.
\end{equation}
Thus for each $(s,a,h)$ tuple, there are at most $2N_0$ bonus term such that $\Gamma_h(s,a) = H-h+1$.
\item for any $k \ge k_{l_0(s,a,h)}+1$
\begin{align}
\Gamma_h^k(s,a) = & \frac{6}{SAHKm}+
 \frac{2(H-h+1)}{\sqrt{\sum_{j \in [m]}\tilde N_h^{j,k-1}(s,a)}}
\sqrt{2\log\frac{2(SAHKm)^{3S}}{\delta}} \\
&+ \frac{8\alpha m\sqrt{N_h^{\operatorname{cut},k-1}(s,a)}}{\sum_{j \in [m]}\tilde N_h^{j,k-1}(s,a)}
(H-h+1)\sqrt{2\log\frac{2m(SAHKm)^{3S}}{\delta}} 
\end{align}
Where $N_h^{\operatorname{cut},k-1}(s,a)$ is the $(2\alpha m +1)$-largest among $\left\{N_{h}^{j,k-1}(s,a)\right\}$ and 
\begin{equation}
\tilde N_h^{j,k-1}(s,a) = \max\left(N_h^{\operatorname{cut},k-1}(s,a), N_h^{j,k-1}(s,a)\right);    
\end{equation}
For any $k -1\ge k_{l_0(s,a,h)}$,
$\max_{i,j \in \mathcal{G}}\frac{N_h^{j,k-1}(s,a)}{N_h^{i,k-1}(s,a)} \le 2$
implies $\forall j$, 
$\tilde N_h^{j,k-1}(s,a) \ge \frac{1}{2}N_h^{j,k-1}(s,a)$ 
and 
$\tilde N_h^{j,k-1}(s,a) \ge \frac{1}{2}N_h^{\operatorname{cut},k-1}(s,a)$.

This means for any $k \ge k_{l_0(s,a,h)}+1$
\begin{equation}
\frac{1}{\sqrt{\sum_{j \in [m]}\tilde N_h^{j,k-1}(s,a)}} \le 
\frac{\sqrt{2}}{\sqrt{\sum_{j \in [m]}N_h^{j,k-1}(s,a)}} = \frac{\sqrt{2}}{\sqrt{N_h^{k-1}(s,a)}}
\end{equation}
\begin{align}
\frac{m\sqrt{N_h^{\operatorname{cut},k-1}(s,a)}}{\sum_{j \in [m]}\tilde N_h^{j,k-1}(s,a)}    
= &
\frac{\sqrt{m}\sqrt{\sum_{j\in[m]}N_h^{\operatorname{cut},k-1}(s,a)}}{\sum_{j \in [m]}\tilde N_h^{j,k-1}(s,a)}\\
\le& \frac{\sqrt{m}\sqrt{2\sum_{j\in[m]}\tilde N_h^{j,k-1}(s,a)}}{\sum_{j \in [m]}\tilde N_h^{j,k-1}(s,a)}
\le 
\frac{2\sqrt{m}}{\sqrt{N_h^{k-1}(s,a)}}
\end{align}
Thus
\begin{align}\label{eq:bd bn}
\Gamma_h^k(s,a)
\le & \frac{4
+ 16\sqrt{2} \alpha \sqrt{m}}{\sqrt{N_h^{k-1}(s,a)}}
H\sqrt{\log\frac{2m(SAHKm)^{3S}}{\delta}}
+ \frac{6}{SAHKm}
\end{align}
\end{itemize}

We are know ready to bound the cumulative bonus:
\begin{align}
&\sum_{k=1}^K \sum_{j \in \mathcal{G}}
\sum_{h=1}^H\Gamma_h^k(s_h^{j,k},a_h^{j,k})
= 
\sum_{h=1}^H\sum_{(s,a) \in \mathcal{S} \times \mathcal{A}} 
\sum_{t = 1}^{l+1}\Gamma_h^{k_{t-1}+1}(s, a) 
\left(
N_h^{k_t}(s,a) - N_h^{k_{t-1}}(s,a)
\right) \\
= & \sum_{h=1}^H\sum_{(s,a) \in \mathcal{S} \times \mathcal{A}} 
\left(
\sum_{t = 1}^{l_0(s,a,h)}\Gamma_h^{k_{t-1}+1}(s, a) 
\left(
N_h^{k_t}(s,a) - N_h^{k_{t-1}}(s,a)
\right)
\right.\\
&
\left.+
\sum_{t = l_0(s,a,h)+1}^{l+1}\Gamma_h^{k_{t-1}+1}(s, a) 
\left(
N_h^{k_t}(s,a) - N_h^{k_{t-1}}(s,a)
\right)
\right) \\
\le & \sum_{h=1}^H\sum_{(s,a) \in \mathcal{S} \times \mathcal{A}} 
\left(
\sum_{t = 1}^{l_0(s,a,h)}\Gamma_h^{k_{t-1}+1}(s, a) 
\left(
N_h^{k_t}(s,a) - N_h^{k_{t-1}}(s,a)
\right)
\right.\\
&
+\sum_{t = l_0(s,a,h)+1}^{l+1}\frac{4
+ 16\sqrt{2} \alpha \sqrt{m}}{\sqrt{N_h^{k-1}(s,a)}}
H\sqrt{\log\frac{2m(SAHKm)^{3S}}{\delta}}
\left(
N_h^{k_t}(s,a) - N_h^{k_{t-1}}(s,a)
\right)\\
&\left.+
\sum_{t = l_0(s,a,h)+1}^{l+1}\frac{6}{SAHKm}
\left(
N_h^{k_t}(s,a) - N_h^{k_{t-1}}(s,a)
\right)
\right) \\
&\left(\mbox{By \pref{eq:bd bn}}\right)\\
=:& \mathscr{A}_1 + \mathscr{A}_2 + \mathscr{A}_3.
\end{align}
By \prettyref{eq:bad bonus} and \prettyref{eq:at most 2N0},
\begin{equation}
\mathscr{A}_1 \le  SAH^2N_h^{k_{l_0(s,a,h)}}(s,a)\le  2SAH^2N_0.   
\end{equation}
Because $k_{l+1} = K$,
\begin{align}
\mathscr{A}_3 \le & \frac{6}{SAHKm}\sum_{h=1}^H \sum_{(s,a)\in\Scal\x\Acal} N_h^{K}(s,a) 
= \frac{6}{SA}
\end{align}
By \prettyref{eq:at most double},
\begin{align}
&\sum_{t = l_0(s,a,h)+1}^{l+1}\frac{N_h^{k_t}(s,a) - N_h^{k_{t-1}}(s,a)}{\sqrt{N_h^{k_{t-1}}(s,a)}} 
\le (\sqrt{2}+1)\sum_{t = l_0(s,a,h)+1}^{l+1}\frac{N_h^{k_t}(s,a) - N_h^{k_{t-1}}(s,a)}{\sqrt{N_h^{k_t}(s,a)} + \sqrt{N_h^{k_{t-1}}(s,a)}}\\
=& (\sqrt{2}+1)\sum_{t = l_0(s,a,h)+1}^{l+1}\left({\sqrt{N_h^{k_t}(s,a)} - \sqrt{N_h^{k_{t-1}}(s,a)}}\right) 
\le (\sqrt{2}+1)\sqrt{N_h^{K}(s,a)}
\end{align}
By Cauchy–Schwarz inequality,
\begin{align}
\sum_{(s,a) \in \mathcal{S} \times \mathcal{A}}   \sqrt{N_h^{K}(s,a)}
\le \sqrt{\sum_{(s,a) \in \mathcal{S} \times \mathcal{A}} 1\sum_{(s,a) \in \mathcal{S} \times \mathcal{A}} {N_h^{K}(s,a)}}
=\sqrt{SAKm}
\end{align}
Thus 
\begin{align}
\mathscr{A}_2 \le & 
(\sqrt{2}+1)(4+16\sqrt{2}\alpha \sqrt{m})H^2\sqrt{SAKm}  \sqrt{\log\frac{2m(SAHKm)^{3S}}{\delta}}  \\
=&O\left((1+\alpha\sqrt{m})H^2S\sqrt{AKm}\sqrt{\log\frac{SAHKm}{\delta}} \right)
\end{align}
Thus
\begin{align}
\mathscr{A}_1 + \mathscr{A}_2 + \mathscr{A}_3
\le & O\left((1+\alpha\sqrt{m})H^2S\sqrt{AKm}\sqrt{\log\frac{SAHKm}{\delta}} \right)\\
&+ O\left(SAH^2 m\log \frac{2 mKSAH}{\delta}\right)
\end{align}

In conclusion:
\begin{align}
\sum_{k=1}^K \sum_{j \in \mathcal{G}}\left(V_1^*(s_1) - V_1^{\hat\pi^k}(s_1)\right)
\le & 2\sum_{k=1}^K \sum_{j \in \mathcal{G}}\sum_{h=1}^H\Gamma_h^k(s_h^{j,k},a_h^{j,k})
+ \sqrt{8mKH^3\log\frac{2}{\delta}}\\  
=&\tilde O\left(
(1 + \alpha \sqrt{m})SH^2\sqrt{AKm\log \frac{1}{\delta}}
\right)
\end{align}
\end{proof}

\subsection{the good event \texorpdfstring{$\mathcal{E}$}: \label{sec:valid bn eve}}
We first show that our bonus is a valid upper confidence bound for the estimated Bellman operator. 
Recall that our bonus term used in $k$-th episode 
is calculated based on the data collected in the first $k-1$-episodes. 
The bonus is given by:
\begin{itemize}
\item If $|j \in [m]: N_h^{j,k-1}(s,a) > 0| < 2\alpha m + 1$
\begin{equation}
\Gamma_h^k(s,a) = H - h + 1;    
\end{equation}
\item If $|j \in [m]: N_h^{j,k-1}(s,a) > 0| \ge 2\alpha m + 1$
\begin{align}
\Gamma_h^k(s,a) := & \frac{6}{SAHKm}+
\ \frac{2{(H-h+1)}}{\sqrt{\sum_{j \in [m]}\tilde N_h^{j,k-1}(s,a)}}
\sqrt{2\log\frac{2(SAHKm)^{3S}}{\delta}} \\
&+ \frac{8\alpha m\sqrt{N_h^{\operatorname{cut},k-1}(s,a)}}{\sum_{j \in [m]}\tilde N_h^{j,k-1}(s,a)}
{(H-h+1)}\sqrt{2\log\frac{2m(SAHKm)^{3S}}{\delta}} 
\end{align}
Where $N_h^{\operatorname{cut},k-1}(s,a)$ is the $(2\alpha m +1)$-largest among $\left\{N_{h}^{j,k-1}(s,a)\right\}$ and 
\begin{equation}
\tilde N_h^{j,k-1}(s,a) = \max\left(N_h^{\operatorname{cut},k-1}(s,a), N_h^{j,k-1}(s,a)\right).    
\end{equation}
\end{itemize}

To be precise:
\begin{lemma}[Valid bonus]\label{lem:valid bn event}
Let $\mathcal{E}$ be the following event:
\begin{equation}
\mathcal{E} = \left\{
\bigcap_{(s,a,h,k,f) \in \mathcal{S} \times \mathcal{A} \times H \times K \times [0,1]^{\mathcal{S}}} \left\{\left| \left(\hat \Bell_h^k f\right)(s,a) - \left(\Bell_h f\right)(s,a) \right| \le \Gamma_h^k(s,a)\right\}
\right\}
\end{equation}
Then, we have 
\begin{equation}
\PP\left(\mathcal{E}\right) \ge 1 - \delta
\end{equation}
\end{lemma}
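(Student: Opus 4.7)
The plan is to establish the uniform bound in event $\mathcal{E}$ via an $\epsilon$-net argument on the space of value functions, combined with the perturbation-stable bound from \pref{lem:bd in good event} and a careful union bound over the discretized function class and the $(s,a,h,k)$ tuples. The parameter choices in \pref{alg:UCBVI} ($\epsilon = 1/(SAHKm)$ and $\delta' = \delta/(SAHKm)^{3S}$) are tuned exactly for this argument.

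First I would split into two cases by the branching in \pref{alg:int pert}. If at episode $k$ the tuple $(s,a,h)$ has fewer than $2\alpha m+1$ non-empty good batches, then by construction $\Gamma_h^k(s,a) = H-h+1$ while $(\hat \Bell_h^k f)(s,a) = 0$. For any $f \in [0, H-h]^{\mathcal{S}}$ we have $(\Bell_h f)(s,a) \in [0, H-h+1]$, so the desired bound holds trivially. The substantive case is when $\ncut > 0$ and \textsc{Pert-Weighted-Clique} is actually invoked.

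Second, since $\hat V_{h+1}^k$ is itself random (it depends on data from episodes $1,\dots,k-1$), I decouple it from the batches via a covering argument. Construct an $\epsilon$-net $\mathcal{N}_\epsilon$ of $[0,H]^{\mathcal{S}}$ in $\ell_\infty$ with $\epsilon = 1/(SAHKm)$; standard volume arguments give $|\mathcal{N}_\epsilon| \le (H/\epsilon)^S \le (SAHKm)^{2S}$. For each fixed $\tilde f \in \mathcal{N}_\epsilon$ and each fixed $(s,a,h,k)$, the sequence $\{r+\tilde f(s')\}$ collected by a good agent $j$ at its visits to $(s,a,h)$ is, conditional on visiting, an i.i.d. sequence with mean $(\Bell_h \tilde f)(s,a)$ and range contained in $[0,H-h+1]$, hence sub-Gaussian with proxy $\sigma = H-h+1$. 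The martingale version of \pref{lem:conc good event} (replacing Hoeffding's inequality with Azuma-Hoeffding to accommodate the random stopping time $\tilde n_j = N_h^{j,k-1}(s,a)$, e.g.\ by union bounding over possible values of $\tilde n_j \in \{1,\ldots,K\}$ and absorbing the extra log factor) gives the concentration event for the fixed $\tilde f$ with probability at least $1 - 2\delta'$.

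Third I apply \pref{lem:bd in good event} followed by a union bound. Union bounding across the $SAHK$ tuples and the net gives total failure probability at most $2\delta' \cdot SAHK \cdot (SAHKm)^{2S} \le \delta$. Conditional on the global good event, for the actual (random) $\hat V_{h+1}^k \in [0,H-h+1]^{\mathcal{S}}$ choose $\tilde f \in \mathcal{N}_\epsilon$ with $\|\hat V_{h+1}^k - \tilde f\|_\infty \le \epsilon$; then the empirical batch means built from $\hat V_{h+1}^k$ are $\epsilon$-perturbations of those built from $\tilde f$, so \pref{lem:bd in good event} gives a $+5\epsilon$ slack on top of the statistical rate. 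Together with the additional $+\epsilon$ added in \pref{ln:mod bn} of \pref{alg:int pert}, this $6\epsilon = 6/(SAHKm)$ slack is exactly the leading constant in the definition of $\Gamma_h^k(s,a)$, while the remaining two terms match the right-hand side of \pref{eq:err bound} with $\sigma = H-h+1$ and confidence parameter $\delta'$.

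The main obstacle is handling the intertwined randomness: both the counts $N_h^{j,k-1}(s,a)$ and the function $\hat V_{h+1}^k$ are data-dependent. The $\epsilon$-net resolves the second dependence by letting the union bound range over a fixed, data-independent family $\mathcal{N}_\epsilon$. The first dependence is resolved by the Markov property, which makes visits to $(s,a,h)$ i.i.d.\ conditional on visiting, together with a union bound over the realized count; this is why the $\sqrt{2\log (2(SAHKm)^{3S}/\delta)}$ factor, rather than a tighter $\sqrt{2\log(2/\delta)}$, appears in the bonus.
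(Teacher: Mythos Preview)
Your proposal is correct and follows essentially the same route as the paper's proof: a case split on whether $\ncut>0$, an $\epsilon$-net over $[0,H]^{\mathcal{S}}$ with $\epsilon=1/(SAHKm)$, the perturbation-stable bound of \pref{lem:bd in good event} to transfer from the net point $\tilde f$ to the random $\hat V_{h+1}^k$, a martingale/stopping-time replacement for the i.i.d.\ concentration, and a union bound over $(s,a,h,k)$ and the net matched to the choice $\delta'=\delta/(SAHKm)^{3S}$.

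One technical point deserves more care than you give it. The ``martingale version of \pref{lem:conc good event}'' must cover \emph{two} events: the per-agent concentration $\Ecal_j$ and the weighted-average concentration $\Ecal_{wa}$. The first is straightforward via Azuma plus a union bound over the realized count, as you indicate. The second is more delicate because the weights $\tilde N_h^{j,k-1}(s,a)$ are themselves random and correlated across agents through the shared policy updates; the paper handles this separately (its \pref{lem:total seq dt conc}) by artificially serializing the agents' episodes into a single filtration and building stopping times that also freeze the realized count profile $(n_1,\dots,n_m)$, so that the rescaling factors $\tilde n_j/n_j$ become deterministic in the stopped martingale. Your sketch does not single this out, but the idea you state extends to it without new ingredients. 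Also, the extra $+\epsilon$ in \pref{ln:mod bn} is there to absorb $|(\Bell_h \tilde f)(s,a)-(\Bell_h f)(s,a)|\le\epsilon$ in the final triangle inequality, not an additional slack internal to \textsc{PWC}; your accounting of the $6\epsilon$ is correct but the attribution is slightly off.
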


To show that $\Ecal$ is a high probability event, we seek to utilize the result of \pref{thm:weighted int}. 
Since there are two obstacles, we need to make some modifications:
\begin{enumerate}
\item Because the transition tuples are collected sequentially, they are no longer i.i.d., which means \pref{lem:conc good event} does not hold trivially. To resolve this, we use the concentration of martingale (see \pref{lem:seq dt conc});
\item Event $\Ecal$ shows the concentration property of $\hat \Bell$ holds uniformly for infinitely many $f$'s. 
Thus a direct union bound does not apply.
Instead we need to use a cover number argument for all possible $f$'s,
which is standard (see \citep{jin2020provably}).
\end{enumerate}

\begin{proof}[Proof of \prettyref{lem:valid bn event}]
Let $\mathcal{E}'$ be the following event:
\begin{equation}\label{eq:eve cover}
\mathcal{E}' = \left\{N_h^{\operatorname{cut},k-1}(s,a) > 0\right\}.
\end{equation}
In the following, we decompose $\mathcal{E}$ by:
\begin{equation}
\mathcal{E} = \left(\mathcal{E} \cap \overline{\mathcal{E}'}\right) \cup \left(\mathcal{E} \cap \mathcal{E}'\right)
\end{equation}
and bound $\PP\left(\mathcal{E}\right)$ by law of total probability.

If $N_h^{\operatorname{cut},k-1}(s,a)=0$, 
because $\left(\hat \Bell_h^k f\right)(s,a) = 0$ and 
$\left(\Bell_h f\right)(s,a) \le H - h + 1$,
with probability $1$, $\forall (s,a,h,k,f) \in \mathcal{S} \times \mathcal{A} \times H \times K \times [0,1]^{\mathcal{S}}$,
\begin{equation}
\left| \left(\hat \Bell_h^k f\right)(s,a) - \left(\Bell_h f\right)(s,a) \right| \le \Gamma_h^k(s,a)    
\end{equation}
This means 
\begin{equation}
\PP\left(\mathcal{E} \cap \overline{\mathcal{E}'}\right) 
= \PP\left(\mathcal{E} | \overline{\mathcal{E}'}\right)  \PP\left(\overline{\mathcal{E}'}\right) = \PP\left(\overline{\mathcal{E}'}\right)
\end{equation}

If $N_h^{\operatorname{cut},k-1}(s,a) > 0$,
we use a covering number argument and union bound to bound the probability of event $\mathcal{E}$.

Consider 
$\mathcal{V}_{\epsilon} := \left\{
\frac{1}{\lceil 1/ \epsilon \rceil}, 
\frac{2}{\lceil 1/ \epsilon \rceil}, \ldots, 
\frac{H\lceil 1/ \epsilon \rceil}{\lceil 1/ \epsilon \rceil}
\right\}^{\mathcal{S}}$,
an $\epsilon$ cover of $[0,H]^{\mathcal{S}}$, in the sense of $\infty$-norm.
We can bound the cover number by
$\left|\mathcal{V}_{\epsilon}\right| \le \left(H\left(\frac{1}{\epsilon} + 1\right)\right)^{S}$.
This means $\forall f \in [0,H]^{\mathcal{S}}$, we can find an $V_f \in \mathcal{V}_\epsilon$, s.t. 
$\|f - V_f\|_\infty := \max_{x\in\mathcal{S}}|f(x) - V_f(x)| \le \epsilon$.
In another word, 
\begin{equation}
[0,H]^{\mathcal{S}} = \bigcup_{f_\epsilon \in \mathcal{V}_\epsilon}
\left\{f : \|f - f_\epsilon\|_\infty \le \epsilon\right\}.
\end{equation}

Importantly, unlike model based method without bad agents, our $\hat \Bell$ is not an linear operator, meaning we cannot trivially upper bound 
$\left| \left(\hat \Bell_h^k f\right)(s,a) - \left(\hat \Bell_h^k V_f\right)(s,a) \right| $ in the cover number argument. Instead, we need to use the continuity of error bound of our robust mean estimation \prettyref{alg:int pert}, meaning as long as each data point collected by each agent is not perturbed too much, then the estimation error bound does not increase too much.

Recall that in \prettyref{alg:UCBVI}, 
at episode $k$, if the agents decide to synchronize, then at each step $h$, given any function $f$, the clean agents will calculate empirical mean for 
\begin{equation}\label{eq:pert bell set}
\left\{r + f (s'): (s,a,r,s') \in D_h^{j,k}\right\}.
\end{equation}
Let $f_\epsilon$ be an element in $\mathcal{V}_{\epsilon}$, s.t. 
$\|f_\epsilon- f\|_\infty \le \epsilon$, this means set \prettyref{eq:pert bell set} is a perturbed version (by at most $\epsilon$) of 
\begin{equation}
\left\{r + f_\epsilon (s'): (s,a,r,s') \in D_h^{j,k}\right\}.
\end{equation}
This means given an $f_\epsilon \in \mathcal{V}_\epsilon$, for any $f$, s.t. 
$\|f - f_\epsilon\|_\infty \le \epsilon$,
\prettyref{alg:int pert} can be used to robustly estimate $\left(\Bell_h f_\epsilon\right)(s,a)$, given set \prettyref{eq:pert bell set}.
Furthermore, choosing $\epsilon = \frac{1}{SAHKm}$,
by \prettyref{lem:seq dt conc}, \prettyref{lem:total seq dt conc} and \pref{lem:bd in good event}, given any $s,a,h,k, f_\epsilon$, and 
any $f$, s.t. 
$\|f - f_\epsilon\|_\infty \le \epsilon$,
with probability at least
$1-\frac{\delta}{(SAHKm)^{3S}/(2mK)}$,
\begin{equation}
\left| \left(\hat \Bell_h^k f\right)(s,a) - \left( \Bell_h f_\epsilon\right)(s,a) \right| \le \Gamma_h^k(s,a) - \frac{1}{SAHKm}. 
\end{equation}

We can bound the $\left| \left(\hat \Bell_h^k f\right)(s,a) - \left(\Bell_h f\right)(s,a) \right|$ by:
\begin{align}
\left| \left(\hat \Bell_h^k f\right)(s,a) - \left(\Bell_h f\right)(s,a) \right| \le 
&\left| \left(\hat \Bell_h^k f\right)(s,a) - \left(\Bell_h f_\epsilon\right)(s,a) \right|+ \left| \left(\Bell_h f_\epsilon\right)(s,a) - \left(\Bell_h f\right)(s,a) \right| \\
\le & \left| \left(\hat \Bell_h^k f\right)(s,a) - \left(\Bell_h f_\epsilon\right)(s,a) \right| + \frac{1}{SAHKm}
\end{align}

Then 
\begin{align}
&\PP\left(\bigcup_{s,a,h,k,f}
\left\{\left| \left(\hat \Bell_h^k f\right)(s,a) - \left(\Bell_h f\right)(s,a) \right| > \Gamma_h^k(s,a)\right\}
\right) \\
\le &
\sum_{s,a,h,k}
\PP\left(\bigcup_{f\in[0,H]^{\mathcal{S}}}
\left\{\left| \left(\hat \Bell_h^k f\right)(s,a) - \left(\Bell_h f\right)(s,a) \right| > \Gamma_h^k(s,a)\right\}
\right) \\
\le &
\sum_{s,a,h,k}
\PP\left(\bigcup_{f_\epsilon \in \mathcal{V}_\epsilon}\bigcup_{f: \|f - f_\epsilon\|_\infty \le \epsilon}
\left\{\left| \left(\hat \Bell_h^k f\right)(s,a) - \left(\Bell_h f_\epsilon\right)(s,a) \right| + \frac{1}{SAHKm}  > \Gamma_h^k(s,a)\right\}
\right)\\
\le &
\sum_{s,a,h,k}\sum_{f_\epsilon \in \mathcal{V}_\epsilon}
\PP\left(\bigcup_{f: \|f - f_\epsilon\|_\infty \le \epsilon}
\left\{\left| \left(\hat \Bell_h^k f\right)(s,a) - \left(\Bell_h f_\epsilon\right)(s,a) \right| + \frac{1}{SAHKm}  > \Gamma_h^k(s,a)\right\}
\right)\\
\le &SAHK 
(H(1+HSAKm))^{S} 
\frac{\delta}{(SAHKm)^{3S}/(2mK)} 
\le  \delta
\end{align}
This means 
\begin{equation}
\PP\left(\mathcal{E} \cap {\mathcal{E}'}\right) 
= \PP\left(\mathcal{E} | {\mathcal{E}'}\right)  \PP\left({\mathcal{E}'}\right) 
\ge \left(1 - \delta\right)\PP\left({\mathcal{E}'}\right)
\ge \PP\left({\mathcal{E}'}\right) - \delta
\end{equation}
In conclusion,
\begin{equation}
\PP\left(\mathcal{E}\right) =   \PP\left(\mathcal{E} \cap {\mathcal{E}'}\right) + \PP\left(\mathcal{E} \cap \overline{\mathcal{E}'}\right)  
\ge \PP\left({\mathcal{E}}\right) +  \PP\left({\mathcal{E}'}\right) - \delta
= 1 - \delta.
\end{equation}
\end{proof}

\subsubsection{Concentration of estimation from good agents}
\begin{lemma}\label{lem:seq dt conc}
Let:
\begin{equation}
\left(\hat \Bell_h^{j,k} f \right)(s,a)
:=
\frac{1}{N_{h}^{j,k}(s,a)}\sum_{(s,a,r,s') \in D_h^{j,k}}r+f(s'),
\end{equation}
where we define $\frac{0}{0}=0$.
For any $f:\Scal \mapsto [H]$, 
and for any $(s,a,h,k)\in\Scal \x \Acal \x [H] \x [K]$
with probability at least $1-\delta/2$, $\Ecal_{conc-seq}(s,a,h,k)$ happens, where
\begin{equation}
\Ecal_{conc-seq}(s,a,h,k) = \bigcap_{j\in\Gcal}\Ecal_{c-seq}(s,a,h,j,k),
\end{equation}
and
\begin{align}
\Ecal_{c-seq}(s,a,h,j,k) := 
\left\{
\left|\left(\hat \Bell_h^{j,k} f \right)(s,a)  
-
\left(\Bell_h f \right)(s,a)  
\right|
\le 
\frac{H-h+1}{\sqrt{\tilde N_{h}^{j,k}(s,a)}}
\sqrt{2\log \frac{4Km}{\delta}} 
\right\}
\end{align}
\end{lemma}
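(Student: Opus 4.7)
To prove \pref{lem:seq dt conc}, the plan is to set up a martingale argument for each clean agent, combine Azuma--Hoeffding with a union bound over the (random) visit count, and then union bound over agents.

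First I would fix a good agent $j \in \Gcal$ and tuple $(s,a,h,k)$, and let $\{\Fcal_{k'}\}$ be the filtration generated by all agents' observations through episode $k'$. Because every policy update is based solely on past data, the policy $\hat\pi^{k'}$ executed in episode $k'$ is $\Fcal_{k'-1}$-measurable. Consequently, conditional on $\Fcal_{k'-1}$ and on the event $\{(s_h^{j,k'},a_h^{j,k'})=(s,a)\}$, the pair $(r_h^{j,k'},s_{h+1}^{j,k'})$ is drawn from $R_h(s,a)\otimes P_h(\cdot|s,a)$ independently of past randomness, so the centered summand $r_h^{j,k'}+f(s_{h+1}^{j,k'})-(\Bell_h f)(s,a)$ has conditional mean zero and is bounded in $[-(H-h+1),H-h+1]$ (since $r\in[0,1]$ and $f\in[0,H-h]$ in the relevant use).

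Next I would re-index by visits: let $T_1^j<T_2^j<\cdots$ be the stopping times when agent $j$ visits $(s,a)$ at step $h$, and set $Z_i^j:=r_h^{j,T_i^j}+f(s_{h+1}^{j,T_i^j})$. By the optional stopping / strong Markov argument above, the sequence $\{Z_i^j-(\Bell_h f)(s,a)\}_i$ is a bounded martingale difference sequence with sub-Gaussian parameter $H-h+1$, so for any fixed $n$, Azuma--Hoeffding yields
\begin{equation*}
\PP\!\left(\left|\frac{1}{n}\sum_{i=1}^n Z_i^j-(\Bell_h f)(s,a)\right|>\frac{H-h+1}{\sqrt n}\sqrt{2\log(2/\delta')}\right)\le \delta'.
\end{equation*}
Setting $\delta'=\delta/(2mK)$ and union bounding over $n\in\{1,\ldots,K\}$ and $j\in\Gcal$ (with $|\Gcal|\le m$) gives a total failure probability of at most $\delta/2$, and the surviving event is exactly $\Ecal_{conc-seq}(s,a,h,k)$ with $N_h^{j,k}(s,a)$ in the denominator (since the partial sum at $n=N_h^{j,k}(s,a)$ coincides with $N_h^{j,k}(s,a)\cdot(\hat\Bell_h^{j,k}f)(s,a)$ by construction). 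The replacement of $N_h^{j,k}(s,a)$ by $\tilde N_h^{j,k}(s,a)$ in the statement is harmless because the clipping in \pref{ln:clip} enforces $\tilde N_h^{j,k}(s,a)\le N_h^{j,k}(s,a)$, so $1/\sqrt{\tilde N}\ge 1/\sqrt N$ only enlarges the right-hand side; the case $N_h^{j,k}(s,a)=0$ is vacuous.

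The main technical subtlety is the re-indexing from episodes to visits: one must verify that the martingale structure is preserved when stepping over the stopping times $T_i^j$, which requires the filtration to contain the joint randomness of all agents so that the adaptively chosen policies are measurable with respect to $\Fcal_{k'-1}$. Once this is in place, handling the randomness of $N_h^{j,k}(s,a)$ by a simple union bound over its possible values is standard and avoids any stopping-time concentration machinery.
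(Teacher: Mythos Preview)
Your proposal is correct and follows essentially the same approach as the paper: both arguments fix $(s,a,h,j)$, introduce stopping times indexing the successive visits of agent $j$ to $(s,a,h)$, invoke the optional sampling theorem to obtain a bounded martingale difference sequence, apply Azuma--Hoeffding for each fixed visit count, and then union bound over the $K$ possible counts and over $j\in\Gcal$. Your observation that $\tilde N_h^{j,k}\le N_h^{j,k}$ makes the stated bound a relaxation, and your handling of the $N=0$ case, are both exactly what is needed; the only cosmetic difference is that the paper phrases the final step as conditioning on $N_h^{j,k}(s,a)=\tau$ rather than as a direct union bound over $n$.
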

\begin{proof}[Proof of \pref{lem:seq dt conc}]
We use the martingale stopping time argument in Lemma 4.3 of \citep{jin2018q}.

For each fixed $(s,a,h,j) \in \Scal \x \Acal \x [H] \x \Gcal$:
for all $t \in [K]$, define
\begin{equation}
\Fcal_t := \sigma\left(\bigcup_{t'\le t}\bigcup_{j \in [m]}\left\{\left(s_{h}^{j,t'},a_{h}^{j,t'},r_{h}^{j,t'},s_{h+1}^{j,t'}\right)\right\}_{h=1}^H\right).    
\end{equation}
Let 
\begin{equation}
X_t =  \sum_{(s,a,r,s') \in D_h^{j,t}}
\left(
r+f(s') -   \left(\Bell_h f \right)(s,a)  
\right)
\end{equation}
Then  $\left\{\left(\Fcal_t, X_t\right)\right\}_{t=1}^K$ is a martingale.
One observation is $X_{t_1} = X_{t_2}$ if agent $j$ did not visit $(s,a,h)$ in $t_1+1, t_1 + 2, \dots, t_2$-th episodes.
Thus we can use the stopping time idea to shorten the martingale sequence.

Define the following sequence of $t_i$'s: $t_0:=0$,
\begin{equation}
t_i := \min\left(\left\{t' \in [K]: t' \ge t_{i-1} \mbox{ and } (s_h^{j,t'}, a_h^{j,t'}) = (s,a)\right\}\cup \left\{K+1\right\}\right).
\end{equation}
Intuitively, $t_i$ is the episode when $(s,a, h)$ is visited by agent $j$ for the $i$-th time.
If agent $j$ visit $(s,a,h)$ for less than $i$ times, then $t_i = K+1$.
By definition, $t_i$ is a stopping time w.r.t. $\left\{\Fcal_t\right\}_{t=1}^K$.

By optional sampling theorem,
$\left\{\left(\Fcal_{t_i}, X_{t_i}\right)\right\}_{i=1}^K$ is a martingale.

By Azuma-Hoeffding's inequality: for any $\tau \le K$
\begin{equation}
\PP\left(\left|X_{t_\tau}\right| \ge \beta\right) \le 2\exp\left(-\frac{2\beta^2}{4\tau (H-h+1)^2}\right)
\end{equation}
Let $\frac{\delta}{2mK} = 2\exp\left(-\frac{2\beta^2}{4\tau (H-h+1)^2}\right)$, we get: for any $(s,a,h,j)$, for any $\tau \le K$, with probability at least $1-\frac{\delta}{2mK}$:
\begin{equation}
\left|\sum_{(s,a,r,s') \in D_h^{j,t_{\tau}}}
\left(
r+f(s') -   \left(\Bell_h f \right)(s,a)  
\right)
\right|
< \sqrt{\tau}(H-h+1)\sqrt{2\log\frac{4mK}{\delta}}.
\end{equation}
By union bound, 
for any $(s,a,h,j)$, with probability at least $1-\frac{\delta}{mK}$, for any $\tau \le K$:
\begin{equation}
\left|\sum_{(s,a,r,s') \in D_h^{j,t_{\tau}}}
\left(
r+f(s') -   \left(\Bell_h f \right)(s,a)  
\right)
\right|
< \sqrt{\tau}(H-h+1)\sqrt{2\log\frac{4mK}{\delta}}.
\end{equation}
This means for any $(s,a,h,j,k)$ and any $\tau \le k$
\begin{align}
&\PP\left(\left.\overline{\Ecal_{c-seq}(s,a,h,j,k)}\right| N_{h}^{j,k}(s,a)=\tau\right)\\
\le &
\PP\left(
\left.
\left|\left(\hat \Bell_h^{j,k} f \right)(s,a)  
-
\left(\Bell_h f \right)(s,a)  
\right|
\ge
\frac{H-h+1}{\sqrt{N_{h}^{j,k}(s,a)}}
\sqrt{2\log \frac{4Km}{\delta}} 
\right|
N_{h}^{j,k}(s,a) = \tau
\right) \\
\le & \frac{\delta}{mK}
\end{align}
Thus 
\begin{align}
\PP\left(\overline{\Ecal_{c-seq}(s,a,h,j,k)}\right) 
=&\sum_{\tau = 0}^{k}
\PP\left(\left.\overline{\Ecal_{c-seq}(s,a,h,j,k)}\right| N_{h}^{j,k}(s,a)=\tau\right)    \PP\left(N_{h}^{j,k}(s,a)=\tau\right) \\
\le & \frac{\delta}{2mK}
\end{align}
By union bound 
\begin{equation}
\PP\left({\Ecal_{conc-seq}(s,a,h,k)}\right) \ge 1-  \frac{\delta}{2} .
\end{equation}
\end{proof}

\begin{lemma}\label{lem:total seq dt conc}
Let:
\begin{equation}
\left(\hat \Bell_h^{j,k} f \right)(s,a)
:=
\frac{1}{N_{h}^{j,k}(s,a)}\sum_{(s,a,r,s') \in D_h^{j,k}}r+f(s'),
\end{equation}
\begin{equation}
\left(\hat \Bell_h^{\Gcal,k} f \right)(s,a)    
:=\frac{1}{\sum_{j\in\Gcal}\tilde N_h^{j,k}(s,a)}
\sum_{j\in\Gcal}\tilde N_h^{j,k}(s,a)\left(\hat \Bell_h^{j,k} f \right)(s,a),
\end{equation}
where we define $\frac{0}{0}=0$.
For any $f:\Scal \mapsto [H]$, with probability at least $1-\delta/2$, $\Ecal_{ct}(s,a,h,k)$ happens, 
where
\begin{align}
\Ecal_{ct}(s,a,h,k):=
\left\{
\left|\left(\hat \Bell_h^{\Gcal,k} f \right)(s,a)  
-
\left(\Bell_h f \right)(s,a)  
\right|
\le 
\frac{H-h+1}{\sqrt{\sum_{j\in\Gcal}\tilde N_{h}^{j,k}(s,a)}}
\sqrt{2\log \frac{4mK}{\delta}} 
\right\}
\end{align}
\end{lemma}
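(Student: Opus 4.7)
My plan is to adapt the optional-stopping/Azuma argument of \pref{lem:seq dt conc} to the pooled, $\tilde N$-weighted estimator $\hat\Bell_h^{\Gcal,k}f$. The key fact to exploit is that although the server updates a shared policy using data from all agents, the good agents themselves interact with independent copies of $\Mcal$. Consequently, conditioning on the realized sequence of policies together with the identities of the state-action pairs visited by each agent, the reward/next-state samples collected at visits to $(s,a,h)$ are conditionally independent across agents and visits, each with conditional mean $(\Bell_h f)(s,a)$ and range of length at most $2(H-h+1)$ by the Markov property.

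Under this conditioning the quantity to be controlled decomposes as
\[
\Bigl(\sum_{j\in\Gcal}\tilde N_h^{j,k}(s,a)\Bigr)\bigl[(\hat\Bell_h^{\Gcal,k}f)(s,a) - (\Bell_h f)(s,a)\bigr] = \sum_{j\in\Gcal}\frac{\tilde N_h^{j,k}(s,a)}{N_h^{j,k}(s,a)}\sum_{i=1}^{N_h^{j,k}(s,a)}Y_i^j,
\]
where $Y_i^j := r+f(s')-(\Bell_h f)(s,a)$ at the $i$-th visit of agent $j$, and each coefficient $\tilde N^j/N^j\in[0,1]$ since $\tilde N = \min(N,N^{\mathrm{cut}})\le N$. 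Applying Hoeffding within agent $j$, its contribution $(\tilde N^j/N^j)\sum_{i}Y_i^j$ is sub-Gaussian with variance proxy $(\tilde N^j)^2(H-h+1)^2/N^j\le \tilde N^j(H-h+1)^2$; by conditional independence across $j\in\Gcal$ the variance proxies add, giving overall variance proxy $(H-h+1)^2\sum_{j\in\Gcal}\tilde N_h^{j,k}(s,a)$.

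I will then invoke Azuma-Hoeffding for the pooled weighted sum under the conditioning, mirroring the stopping-time reduction of \pref{lem:seq dt conc}: for each fixed realization of the pooled count $\tau := \sum_j N_h^{j,k}(s,a)\in\{0,1,\ldots,mK\}$ the single-configuration failure probability is at most $\delta/(2mK)$, so a union bound over the at most $mK$ attainable pooled counts yields a conditional failure probability of at most $\delta/2$. Since this bound holds on \emph{every} realization of $(N^j,\tilde N^j)_j$, the law of total probability promotes it to the unconditional $1-\delta/2$ bound
\[
\bigl|\hat\Bell_h^{\Gcal,k}f(s,a) - \Bell_h f(s,a)\bigr| \le \frac{H-h+1}{\sqrt{\sum_{j\in\Gcal}\tilde N_h^{j,k}(s,a)}}\sqrt{2\log\tfrac{4mK}{\delta}}.
\]

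The main obstacle I expect is the clean handling of the data-dependence of the weights $\tilde N^j/N^j$ and of the global cutoff $N_h^{\mathrm{cut},k}(s,a)$, both of which are random and correlated with the rewards and next-state draws one is concentrating. Conditioning on the $\sigma$-algebra generated by the policy sequence together with the visit identities sidesteps this difficulty by making $(N^j,\tilde N^j)_j$ deterministic, while leaving exactly the randomness in the $Y_i^j$'s needed for Azuma-Hoeffding, which is still conditionally independent across visits by the Markov property.
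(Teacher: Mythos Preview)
Your proposal has a genuine gap in the conditioning step. You want to condition on the realized policy sequence together with the visit identities and then treat the samples $Y_i^j := r + f(s') - (\Bell_h f)(s,a)$ as conditionally independent with mean zero. This fails for two reasons. First, if ``visit identities'' means the full array $\{(s_{h'}^{j,t}, a_{h'}^{j,t})\}_{h',j,t}$, then you have conditioned on $s_{h+1}^{j,t}=s'$ itself, so $Y$ is no longer random apart from the reward and its conditional mean is $f(s')-\EE_{s''\sim P_h(\cdot\mid s,a)}[f(s'')]\ne 0$ in general. Second, even if you condition only on the indicators of visits to $(s,a,h)$ together with the policies $\hat\pi^1,\ldots,\hat\pi^K$, those objects are \emph{downstream} of the very $Y$'s you are concentrating: $\hat\pi^{t+1}$ is computed from all data in episodes $1,\ldots,t$, including the rewards and next states at $(s,a,h)$. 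Conditioning on future policies or visit counts therefore biases earlier $Y$'s away from mean zero (for instance, if a large $Y$ in episode $1$ causes $\hat\pi^2$ to avoid $(s,a,h)$, then knowing that $(s,a,h)$ was visited in episode $2$ shifts the conditional law of that earlier $Y$). The Markov property only guarantees $\EE[Y\mid\text{past}]=0$; it gives no such guarantee once the conditioning $\sigma$-field contains future policy decisions.

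The paper sidesteps this by never conditioning to obtain the concentration. For each fixed tuple $(n_j)_{j\in[m]}$ it sets \emph{deterministic} weights $\gamma_j := \tilde n_j/n_j \in [0,1]$, orders the good-agent episodes linearly into a single filtration, and forms the weighted partial-sum martingale $X_t$ with increments $\gamma_{\Jcal(t)}\,Y$. Stopping times $t_i$ halt the accumulation once any agent $j$ has recorded $n_j$ visits to $(s,a,h)$; optional sampling plus Azuma--Hoeffding then yield an unconditional tail bound on $|X_{t_\tau}|$ uniformly in $\tau\le mK$, with exactly the variance proxy $\sum_j \tilde n_j (H-h+1)^2$ you computed. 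On the event $\{N_h^{j,k}(s,a)=n_j\ \forall j\}$ the stopped martingale at $\tau=\sum_j n_j$ coincides with $\sum_{j\in\Gcal} \tilde N_h^{j,k}(s,a)\bigl((\hat\Bell_h^{j,k}f)(s,a)-(\Bell_h f)(s,a)\bigr)$, and one integrates over realizations of $(n_j)$ at the end. Your decomposition and variance-proxy calculation are correct; what must replace the conditioning argument is this fixed-weight martingale and stopping-time construction.
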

\begin{proof}[Proof \pref{lem:total seq dt conc}]
During the data collecting process, the agents are allowed to collect data simultaneously. 
For analysis purpose, we artificially order the data in the following sequence:
\begin{equation}\label{eq:E seq}
E^{1,1}, E^{2,1}, \ldots, E^{m,1}, E^{1,2}, \ldots, E^{m,2},
\ldots,
E^{1,K}, \ldots, E^{m,K}
\end{equation}
where $E^{j,k} := \left\{\left(s_{h}^{j,k},a_{h}^{j,k},r_{h}^{j,k},s_{h+1}^{j,k}\right)\right\}_{h=1}^H$.
Let 
\begin{equation}
\Fcal_t = \sigma\left(\bigcup_{j,k \mbox{ s.t. }m(k-1) + j \le t}E^{j,k}\right).
\end{equation}
Then $\left\{\Fcal_t\right\}_{t=0}^{mK}$ forms a valid filtration.
Let $\left\{\left\{\gamma_{j,k}\right\}_{j\in[m]}\right\}_{k\in[K]}$ be a fixed set of scalar, s.t. $0\le \gamma_{j,k}\le 1$, for all $j,k$.

For each fixed $(s,a,h) \in \Scal \x \Acal \x [H]$:
for all $t \in [mK]$, 
Let 
\begin{equation}
X_t =  \sum_{(s,a,r,s') \in \bigcup_{(j,k) \in \Gcal \x [K] \mbox{ s.t. }m(k-1) + j \le t}E^{j,k}}
\gamma_{j,k}
\left(
r+f(s') -   \left(\Bell_h f \right)(s,a)  
\right)
\end{equation}
Then  $\left\{\left(\Fcal_t, X_t\right)\right\}_{t=1}^{mK}$ is a martingale.
As we can see, if good agent $j$ did not visit $(s,a,h)$ in episode $k$, then $X_{m(k-1) + j} = X_{m(k-1) + j-1}$ a.s.
Thus we can use the stopping time idea to shorten the martingale sequence.

Define the following functions to map from sequence index to agent index and episode index:
\begin{align}
\mathcal{J}(t) := t - m\left(\lceil t/m \rceil - 1\right), \quad \mathcal{K}(t) := \lceil t/m \rceil
\end{align}
For any $n_1, \ldots, n_m$,
define the following sequence of $t_i$'s: $t_0:=0$,
\begin{align}
t_i := &\min\left(\left\{t' \in [mK]: t' \ge t_{i-1} \mbox{ and } (s_h^{\Jcal(t'), \Kcal(t')}, a_h^{\Jcal(t'), \Kcal(t')}) = (s,a) \right.\right.\\
&\left. \left.\mbox{ and for all }j\le \Jcal(t'), N_h^{j,\Kcal(t')} \le n_j;j> \Jcal(t'), N_h^{j,\Kcal(t')-1} \le n_j \right\}\cup \left\{K+1\right\}\right).
\end{align}
Intuitively, $t_i$ is the episode when $(s,a, h)$ is visited in sequence \pref{eq:E seq} for the $i$-th time.
And for all $j$, agent $j$ have not collected $n_j$ $(s,a,h)$ tuples.
If $(s,a,h)$ is visited for less than $i$ times or there exists agent $j$ visiting $(s,a,h)$ more than $n_j$ times, then $t_i = K+1$.
By definition, $t_i$ is a stopping time w.r.t. $\left\{\Fcal_t\right\}_{t=1}^{mK}$.

In particular, let $n_\cut$ be the $(2\alpha m  +1)$th-largest of all $n_j$'s and $\tilde n_j = \min(n_\cut, n_j)$.
We choose $\gamma_{j,k}:= \frac{\tilde n_j}{n_j}\le1$.

By optional sampling theorem,
$\left\{\left(\Fcal_{t_i}, X_{t_i}\right)\right\}_{i=1}^{mK}$ is a martingale.

By Azuma-Hoeffding's inequality: for any $\tau := \sum_{j\in[m]}n_j \le mK$
\begin{equation}
\PP\left(\left|X_{t_\tau}\right| \ge \beta\right) 
\le 2\exp\left(-\frac{2\beta^2}{4(H-h+1)^2\sum_{t=1}^{\tau}\gamma_{\Jcal(t), \Kcal(t)}^2}\right)
\end{equation}
Let $\frac{\delta}{2mK} = 2\exp\left(-\frac{2\beta^2}{4(H-h+1)^2\sum_{t=1}^{\tau}\gamma_{\Jcal(t), \Kcal(t)}^2}\right)$, we get: for any $(s,a,h)$, for any $\tau \le mK$, with probability at least $1-\frac{\delta}{2mK}$:
\begin{equation}
\left|X_{t_\tau}\right|
< \sqrt{\sum_{t=1}^{\tau}\gamma_{\Jcal(t), \Kcal(t)}^2}(H-h+1)\sqrt{2\log\frac{4mK}{\delta}}.
\end{equation}
By union bound, 
for any $(s,a,h)$, with probability at least $1-\frac{\delta}{2}$, for any $\tau \le mK$:
\begin{equation}
\left|X_{t_\tau}\right|
< \sqrt{\sum_{t=1}^{\tau}\gamma_{\Jcal(t), \Kcal(t)}^2}(H-h+1)\sqrt{2\log\frac{4mK}{\delta}}.
\end{equation}
This means for any $(s,a,h,k)\in\Scal \x \Acal \x [H] \x [
K]$ and any $\tau \le mk$
\begin{align}
&\PP\left(\left.\overline{\Ecal_{ct}(s,a,h,k)}\right| N_{h}^{j,k}(s,a)=n_{j}, \forall j\right)\\
\le &
\PP\left(
\left|\left(\hat \Bell_h^{\Gcal,k} f \right)(s,a)  
-
\left(\Bell_h f \right)(s,a)  
\right|
\ge
\frac{(H-h+1)\sqrt{\sum_{t=1}^{\tau}\frac{\tilde N_h^{\Jcal(t), \Kcal(t)}(s,a)}{N_h^{\Jcal(t), \Kcal(t)}(s,a)}}}{{\sum_{j\in\Gcal}\tilde N_{h}^{j,k}(s,a)}}\right.\\
&\left.\left.\cdot\sqrt{2\log \frac{4SAHmK^2}{\delta}} 
\right|
N_{h}^{j,k}(s,a)=n_{j}, \forall j
\right) \\
\le &
\PP\left(
\left|\left(\hat \Bell_h^{\Gcal,k} f \right)(s,a)  
-
\left(\Bell_h f \right)(s,a)  
\right|
\ge
\frac{(H-h+1)\sqrt{\sum_{t=1}^{\tau}\left(\frac{\tilde N_h^{\Jcal(t), \Kcal(t)}(s,a)}{N_h^{\Jcal(t), \Kcal(t)}(s,a)}\right)^2}}{{\sum_{j\in\Gcal}\tilde N_{h}^{j,k}(s,a)}}\right.\\
&\left.\left.\cdot\sqrt{2\log \frac{4SAHmK^2}{\delta}} 
\right|
N_{h}^{j,k}(s,a)=n_{j}, \forall j
\right) \\
\le & \frac{\delta}{2} \quad 
\left(\mbox{By $\gamma_{j,k} = \frac{\tilde N_h^{j,k}(s,a)}{N_h^{j,k}(s,a)}$}\right)
\end{align}
Thus 
\begin{align}
\PP\left(\overline{\Ecal_{ct}(s,a,h,k)}\right) 
=&\sum_{(n_1, \ldots, n_m) \in [K]^m}
\PP\left(\left.\overline{\Ecal_{ct}(s,a,h,k)}\right| N_{h}^{j,k}(s,a)=n_{j}, \forall j\right)  \\  &\PP\left(N_{h}^{j,k}(s,a)=n_{j}, \forall j\right) \\
\le & \frac{\delta}{2}
\end{align}
\end{proof}

\subsection{The regret decomposition for UCB style algorithm \label{sec:regret decom}}
We follow the regret decomposition strategy in \citep{jin2020provably} under event $\mathcal{E}$, i.e. the estimation error for the Bellman operator is bounded by the bonus term.

The estimated Bellman operator can be used to approximate the Q function:
\begin{lemma}\label{lem:q error}
Under event $\mathcal{E}$, for any $(s,a,h,k) \in \mathcal{S} \times \mathcal{A} \times H \times K$, and any policy $\pi'$
\begin{align}
\left|\left(\hat \Bell_h^k \hat V_{h+1}^k\right)(s,a) - Q_h^{\pi'}(s,a)  
- \EE_{s'\sim P_h(\cdot \mid s,a)}\left[\hat V_{h+1}^k (s') - V_{h+1}^{\pi'}(s')\right] \right| \le \Gamma_h^k(s,a)
\end{align}
\end{lemma}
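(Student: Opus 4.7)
The plan is to reduce the left-hand side of the inequality to the estimation error of the Bellman operator applied to the fixed function $\hat V_{h+1}^k$, and then invoke event $\mathcal{E}$ directly.

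First, I would write out the definitions. By the Bellman equation, $Q_h^{\pi'}(s,a) = \EE[R_h(s,a)] + \EE_{s'\sim P_h(\cdot|s,a)}[V_{h+1}^{\pi'}(s')]$. By the definition of the Bellman operator applied to $\hat V_{h+1}^k$, we have $(\Bell_h \hat V_{h+1}^k)(s,a) = \EE[R_h(s,a)] + \EE_{s'\sim P_h(\cdot|s,a)}[\hat V_{h+1}^k(s')]$. Subtracting gives the key identity
\begin{equation*}
(\Bell_h \hat V_{h+1}^k)(s,a) - Q_h^{\pi'}(s,a) = \EE_{s'\sim P_h(\cdot|s,a)}\bigl[\hat V_{h+1}^k(s') - V_{h+1}^{\pi'}(s')\bigr].
\end{equation*}

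Next, I would rearrange the quantity inside the absolute value in the lemma: add and subtract $(\Bell_h \hat V_{h+1}^k)(s,a)$, and use the identity above to cancel the expectation term. This leaves
\begin{equation*}
\bigl(\hat \Bell_h^k \hat V_{h+1}^k\bigr)(s,a) - Q_h^{\pi'}(s,a) - \EE_{s'\sim P_h(\cdot|s,a)}\bigl[\hat V_{h+1}^k(s') - V_{h+1}^{\pi'}(s')\bigr] = \bigl(\hat \Bell_h^k \hat V_{h+1}^k\bigr)(s,a) - \bigl(\Bell_h \hat V_{h+1}^k\bigr)(s,a).
\end{equation*}

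Finally, I would note that $\hat V_{h+1}^k$ is a deterministic function of the state (specifically, $\hat V_{h+1}^k \in [0, H-h]^{\mathcal{S}} \subseteq [0,H]^{\mathcal{S}}$ by the clipping in \pref{eq:ucb vi 1}), and therefore lies in the function class covered by event $\mathcal{E}$. Applying $\mathcal{E}$ with $f = \hat V_{h+1}^k$ immediately yields $|(\hat \Bell_h^k \hat V_{h+1}^k)(s,a) - (\Bell_h \hat V_{h+1}^k)(s,a)| \le \Gamma_h^k(s,a)$, completing the proof. There is no real obstacle here; the statement is essentially a bookkeeping consequence of the definitions, and the only subtlety is confirming that $\hat V_{h+1}^k$ lies in the range of functions quantified over in $\mathcal{E}$, which follows directly from the truncation step in the UCBVI updates.
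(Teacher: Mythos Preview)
Your proposal is correct and follows essentially the same approach as the paper: both reduce the expression inside the absolute value to $\bigl(\hat \Bell_h^k \hat V_{h+1}^k\bigr)(s,a) - \bigl(\Bell_h \hat V_{h+1}^k\bigr)(s,a)$ and then invoke event $\mathcal{E}$. The only cosmetic difference is that the paper presents this via a triangle-inequality split into two terms (the second of which vanishes by the definition of $\Bell_h$), whereas you compute the cancellation directly; the content is identical.
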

\begin{proof}[Proof of \prettyref{lem:q error}]
\begin{align}
&\left|\left(\hat \Bell_h^k \hat V_{h+1}^k\right)(s,a) - Q_h^{\pi'}(s,a)  
- \EE_{s'\sim P_h(\cdot \mid s,a)}\left[\hat V_{h+1}^k (s') - V_{h+1}^{\pi'}(s')\right] \right| \\
\le & \left|\left(\hat \Bell_h^k \hat V_{h+1}^k\right)(s,a) - \left(\Bell_h \hat V_{h+1}^k\right)(s,a)\right| \\
&+\left|\left(\Bell_h \hat V_{h+1}^k\right)(s,a) -  \left( \Bell_h  V_{h+1}^{\pi'}\right)(s,a) - 
\EE_{s'\sim P_h(\cdot \mid s,a)}\left[\hat V_{h+1}^k (s') - V_{h+1}^{\pi'}(s')\right] \right| \\
& \left(\mbox{By triangular inequality and the fact that $\left( \Bell_h  V_{h+1}^{\pi'}\right)(s,a) = Q_h^{\pi'}(s,a)$.}\right)\\
\le & \Gamma_h^k(s,a) \\
& \left(\mbox{We can bound the first term by the definition of event $\mathcal{E}$,} \right.\\
&\left.\mbox{and the second term is zero by the definition of Bellman operator.}\right)
\end{align}
\end{proof}

Under event $\mathcal{E}$ we can upper bound the value function and Q function of the optimal policy by the estimated value function and Q function of policy $\hat \pi^k$:
\begin{lemma}[Optimism]\label{lem:opt}
Under event $\mathcal{E}$, 
$\forall s,a,h,k$:
\begin{align} 
\hat Q_h^k(s,a) \ge  Q_h^*(s,a), \quad
\hat V_h^k(s) \ge  V_h^*(s) \label{eq:hat Q ge Q}
\end{align}
\end{lemma}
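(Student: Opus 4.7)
The plan is to prove the two inequalities simultaneously by backward induction on $h$, running from $H+1$ down to $1$, with the argument applied independently for each fixed episode index $k$.

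For the base case $h = H+1$, both $\hat V_{H+1}^k$ and $V_{H+1}^*$ are identically zero by initialization and by definition of the finite horizon MDP, and similarly $\hat Q_{H+1}^k \equiv 0 \equiv Q_{H+1}^*$, so both inequalities in \pref{eq:hat Q ge Q} hold trivially. For the inductive step, I would assume $\hat V_{h+1}^k(s) \ge V_{h+1}^*(s)$ for every $s$ and first establish $\hat Q_h^k(s,a) \ge Q_h^*(s,a)$. Applying \pref{lem:q error} with $\pi' = \pi^*$, under event $\mathcal{E}$ we have
\begin{equation}
\bigl(\hat \Bell_h^k \hat V_{h+1}^k\bigr)(s,a) + \Gamma_h^k(s,a) \;\ge\; Q_h^*(s,a) + \EE_{s'\sim P_h(\cdot\mid s,a)}\bigl[\hat V_{h+1}^k(s') - V_{h+1}^*(s')\bigr] \;\ge\; Q_h^*(s,a),
\end{equation}
where the second inequality uses the induction hypothesis pointwise inside the expectation. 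The left-hand side is exactly $\bar Q_h^k(s,a)$ as defined in \pref{eq:ucb vi 1}, so $\bar Q_h^k(s,a) \ge Q_h^*(s,a)$.

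To pass from $\bar Q_h^k$ to the clipped quantity $\hat Q_h^k = \min\{\bar Q_h^k, H-h+1\}^+$, I would invoke the standard observation that $Q_h^*(s,a) \in [0, H-h+1]$ (since rewards lie in $[0,1]$ and there are $H-h+1$ remaining steps), which means the clipping can only move $\bar Q_h^k$ closer to, but never below, $Q_h^*$. Thus $\hat Q_h^k(s,a) \ge Q_h^*(s,a)$ as desired. The value-function inequality then follows immediately: for any $s$,
\begin{equation}
\hat V_h^k(s) \;=\; \max_{a} \hat Q_h^k(s,a) \;\ge\; \hat Q_h^k\bigl(s, \pi_h^*(s)\bigr) \;\ge\; Q_h^*\bigl(s, \pi_h^*(s)\bigr) \;=\; V_h^*(s),
\end{equation}
completing the induction.

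There is no real obstacle here once \pref{lem:q error} is in hand; the only subtlety worth being careful about is that the clipping step does not destroy optimism, which is why the remark that $Q_h^* \in [0, H-h+1]$ is needed. Everything else is a routine backward-induction argument that mirrors the classical UCB-VI optimism proof, with the only novelty being that our $\hat\Bell_h^k$ is the robust \textsc{PWC}-based estimator rather than the empirical Bellman operator — but that difference is already absorbed into the definition of $\Gamma_h^k$ and the event $\mathcal{E}$.
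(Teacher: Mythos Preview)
Your proposal is correct and follows essentially the same backward-induction argument as the paper's proof: apply \pref{lem:q error} with $\pi' = \pi^*$, use the induction hypothesis to drop the expectation term, then invoke $Q_h^*(s,a) \in [0, H-h+1]$ to pass through the clipping, and finally deduce the $V$-inequality from the $Q$-inequality via $\hat V_h^k = \max_a \hat Q_h^k$. The only cosmetic difference is that the paper takes $h=H$ as its explicit base case (treating $\hat V_{H+1}^k = V_{H+1}^* = 0$ as given) while you start the induction at $h=H+1$; this changes nothing of substance.
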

\begin{proof}[Proof of \prettyref{lem:opt}]
We prove this by induction on $h$.
Before that, note that, for any $h,k,s$, if 
\begin{equation}
\hat Q_h^k(s,a) \ge  Q_h^*(s,a), \quad \forall a    
\end{equation}
then because $\hat \pi^k$ is chosen by maximizing $\hat Q_{h}^k(s,a)$,
we know
\begin{equation}
\hat V_h^k(s) = \max_a \hat Q_{h}^k(s,a) 
\ge \hat Q_{h}^k(s,\pi_h^*(a))  
\ge Q_{h}^*(s,\pi_h^*(a))
=V_h^*(s)
\end{equation}
This means for any $h,k,s$:
\begin{equation}\label{eq:Q 2 V}
\left\{\forall a,\hat Q_h^k(s,a) \ge  Q_h^*(s,a)\right\}   
\implies
\left\{\hat V_h^k(s) \ge V_h^*(s)\right\}
\end{equation}
We now begin our induction:
\begin{itemize}[leftmargin=5mm,itemsep=0pt]
\item For the base case, our goal is to show 
for any $s,a,k$,
in the last step $H$, 
\begin{align}
\hat Q_H^k(s,a) \ge  Q_H^*(s,a), \quad 
\hat V_H^k(s) \ge  V_H^*(s)
\end{align}
First note that $\hat V_{H+1} = V_{H+1}^*=0$.
By \prettyref{lem:q error} and choose $\pi' = \pi^*$, 
\begin{align}
\left|\left(\hat \Bell_H^k \hat V_{H+1}^k\right)(s,a) - Q_h^{*}(s,a)\right| \le \Gamma_H^k(s,a)
\end{align}
By definition of $\hat Q_H^k(s,a)$, and the fact that $Q_h^{*}(s,a)$ only contains the reward at step $H$, which is bounded by $1$:
\begin{align}
\hat Q_H^k(s,a) = \min\left(\left(\hat \Bell_H^k \hat V_{H+1}^k\right)(s,a) + \Gamma_H^k(s,a),1 \right)  
\ge Q_H^{*}(s,a)
\end{align}
By \prettyref{eq:Q 2 V}, $\hat V_H^k(s) \ge  V_H^*(s), \forall s$.
\item Suppose for any $s,a,k$, the statement holds for step $h+1$, i.e.
\begin{align} \label{eq:hat Q ge Q h+1}
\hat Q_{h+1}^k(s,a) \ge  Q_{h+1}^*(s,a), \quad
\hat V_{h+1}^k(s) \ge  V_{h+1}^*(s) 
\end{align}
our goal is to show
$\forall s,a,k$:
\begin{align} 
\hat Q_h^k(s,a) \ge  Q_h^*(s,a), \quad
\hat V_h^k(s) \ge  V_h^*(s)
\end{align}
\begin{align}
& \left(\hat \Bell_h^k \hat V_{h+1}^k\right)(s,a) + \Gamma_h^k(s,a)\\
\ge&\left(\hat \Bell_h^k \hat V_{h+1}^k\right)(s,a) +  \left|\left(\hat \Bell_h^k \hat V_{h+1}^k\right)(s,a) - Q_h^{*}(s,a)  
- \EE_{s'\sim P_h(\cdot \mid s,a)}\left[\hat V_{h+1}^k (s') - V_{h+1}^{*}(s')\right] \right| \\
& \left(\mbox{By \prettyref{lem:q error} and let $\pi' = \pi^*$}\right) \\
\ge & Q_h^{*}(s,a)  
+ \EE_{s'\sim P_h(\cdot \mid s,a)}\left[\hat V_{h+1}^k (s') - V_{h+1}^{*}(s')\right]\\
&\left(\mbox{By triangular inequality}\right)\\
\ge &  Q_h^{*}(s,a)  \\
&\left(\mbox{$\forall s,\hat V_{h+1}^k (s') \ge V_{h+1}^{*}(s')$ by \pref{eq:hat Q ge Q h+1}}\right)
\end{align}
By definition of Q function $Q_h^{*}(s,a) \le H-h+1$.
We concludes these two statements by:
\begin{align}
\hat Q_h^k(s,a) = \min\left(\left(\hat \Bell_h^k \hat V_{h+1}^k\right)(s,a) + \Gamma_h^k(s,a),H - h+1 \right)   \ge Q_h^{*}(s,a)
\end{align}
By \prettyref{eq:Q 2 V}, $\hat V_h^k(s) \ge  V_h^*(s), \forall s$.
\end{itemize}
\end{proof}

We are know ready to prove the regret decomposition lemma:
\begin{lemma}\label{lem:regret decomp}
Under good event $\mathcal{E}$:
\begin{align}
&\sum_{k=1}^K \sum_{j \in \mathcal{G}}\left(V_1^*(s_1) - V_1^{\hat\pi^k}(s_1)\right) \\
\le & 2\sum_{k=1}^K \sum_{j \in \mathcal{G}}\sum_{h=1}^H\Gamma_h^k(s_h^{j,k},a_h^{j,k}) \\
&+ 
\sum_{k=1}^K \sum_{j \in \mathcal{G}}\sum_{h=1}^H\left(\EE_{s'\sim P_h(\cdot \mid s_h^k,a_h^k)}\left[\hat V_{h+1}^k (s') - V_{h+1}^{\hat \pi^k}(s')\right] - \left(\hat V_{h+1}^k (s_{h+1}^{j,k}) - V_{h+1}^{\hat\pi^k}(s_{h+1}^{j,k})\right)\right)
\end{align}
\end{lemma}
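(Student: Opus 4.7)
The plan is to follow the standard optimism-based regret decomposition for UCB-VI algorithms, carefully adapted to the distributed setting where in episode $k$ every good agent $j \in \Gcal$ executes the same policy $\hat\pi^k$ that was computed at the last synchronization. The starting point is to invoke \pref{lem:opt}: under event $\Ecal$ we have $V_1^*(s_1) \le \hat V_1^k(s_1)$, so
\begin{equation}
V_1^*(s_1) - V_1^{\hat\pi^k}(s_1) \le \hat V_1^k(s_1) - V_1^{\hat\pi^k}(s_1) = \hat V_1^k(s_1^{j,k}) - V_1^{\hat\pi^k}(s_1^{j,k})
\end{equation}
for every $j \in \Gcal$, since all agents start each episode at $s_1$. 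It then suffices to upper bound the per-agent ``surplus'' $\hat V_h^k(s_h^{j,k}) - V_h^{\hat\pi^k}(s_h^{j,k})$ recursively in $h$ along the actual trajectory of agent $j$.

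The key recursion is obtained as follows. Because $a_h^{j,k} = \hat\pi_h^k(s_h^{j,k})$ and $\hat V_h^k(s) = \hat Q_h^k(s, \hat\pi_h^k(s))$, and since the clipping in \pref{eq:ucb vi 1}--\pref{eq:ucb vi 4} only makes $\hat Q_h^k$ smaller than $\bar Q_h^k$, we have
\begin{equation}
\hat V_h^k(s_h^{j,k}) \le \bar Q_h^k(s_h^{j,k}, a_h^{j,k}) = (\hat\Bell_h^k \hat V_{h+1}^k)(s_h^{j,k}, a_h^{j,k}) + \Gamma_h^k(s_h^{j,k}, a_h^{j,k}).
\end{equation}
Subtracting $V_h^{\hat\pi^k}(s_h^{j,k}) = Q_h^{\hat\pi^k}(s_h^{j,k}, a_h^{j,k})$ and invoking \pref{lem:q error} with $\pi' = \hat\pi^k$ yields
\begin{equation}
\hat V_h^k(s_h^{j,k}) - V_h^{\hat\pi^k}(s_h^{j,k}) \le 2\Gamma_h^k(s_h^{j,k}, a_h^{j,k}) + \EE_{s' \sim P_h(\cdot \mid s_h^{j,k}, a_h^{j,k})}\left[\hat V_{h+1}^k(s') - V_{h+1}^{\hat\pi^k}(s')\right].
\end{equation}
Adding and subtracting the realized next-step gap $\hat V_{h+1}^k(s_{h+1}^{j,k}) - V_{h+1}^{\hat\pi^k}(s_{h+1}^{j,k})$ turns the expectation into a transition-martingale-difference term plus the next-step surplus, producing a clean one-step recursion.

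The final step is to unroll this recursion from $h=1$ down to $h=H$ using the terminal condition $\hat V_{H+1}^k \equiv V_{H+1}^{\hat\pi^k} \equiv 0$, which gives
\begin{equation}
\hat V_1^k(s_1^{j,k}) - V_1^{\hat\pi^k}(s_1^{j,k}) \le \sum_{h=1}^{H} 2\Gamma_h^k(s_h^{j,k}, a_h^{j,k}) + \sum_{h=1}^H \left(\EE_{s'\sim P_h(\cdot \mid s_h^{j,k},a_h^{j,k})}[\hat V_{h+1}^k(s') - V_{h+1}^{\hat\pi^k}(s')] - (\hat V_{h+1}^k(s_{h+1}^{j,k}) - V_{h+1}^{\hat\pi^k}(s_{h+1}^{j,k}))\right),
\end{equation}
and then summing over $j \in \Gcal$ and $k \in [K]$ yields the claimed decomposition. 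The manipulations are essentially routine; the only places requiring care are the clipping (handled by $\hat Q_h^k \le \bar Q_h^k$, together with $V_h^{\hat\pi^k}(s) \ge 0$ to accommodate the nonnegative truncation) and the fact that \pref{lem:q error} must be applied with $\pi' = \hat\pi^k$ rather than $\pi^*$, which is the only place where optimism is used implicitly through the sign of the resulting inequality.
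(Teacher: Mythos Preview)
Your proposal is correct and follows essentially the same route as the paper's proof: invoke \pref{lem:opt} at $h=1$, peel off one step via $\hat V_h^k \le \bar Q_h^k$ and \pref{lem:q error} with $\pi'=\hat\pi^k$, insert the realized next-state gap to expose a martingale-difference term, unroll, and sum over $j\in\Gcal$ and $k\in[K]$. If anything, your treatment of the nonnegative truncation in $\hat Q_h^k=\min\{\bar Q_h^k,H-h+1\}^+$ is slightly more careful than the paper's, which simply writes $\hat Q_h^k\le \bar Q_h^k$ ``by definition'' without commenting on the positive-part clip.
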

\begin{proof}[Proof of \prettyref{lem:regret decomp}]
We start by showing the decomposition of regret after step $h$ in one episode of a single agent:
by \prettyref{lem:q error} and \prettyref{lem:opt}, under event $\mathcal{E}$, for any $s,k,h$
\begin{align}
&V_h^*(s) - V_h^{\hat\pi^k}(s)\le \hat V_h^{\hat \pi^k}(s) - V_h^{\hat\pi^k}(s)
\quad \left(\mbox{By \prettyref{lem:opt}}\right)
\\
= & \hat Q_h^{k}(s, \hat\pi_h^k(s)) - Q_h^{\hat \pi^k}(s, \hat\pi_h^k(s))\\
\le & \left(\hat \Bell_h^k \hat V_{h+1}^k\right)(s,a) + \Gamma_h^k(s,a)
- Q_h^{\hat \pi^k}(s, \hat\pi_h^k(s)) \quad \left(\mbox{By definition of $\hat Q_h^k$}\right)\\
\le&\left|\left(\hat \Bell_h^k \hat V_{h+1}^k\right)(s,\hat\pi_h^k(s)) - Q_h^{\hat\pi^k}(s,\hat\pi_h^k(s))  
- \EE_{s'\sim P_h(\cdot \mid s,\hat\pi_h^k(s))}\left[\hat V_{h+1}^k (s') - V_{h+1}^{\hat\pi^k}(s')\right] \right| \\
&+ \left|Q_h^{\hat\pi^k}(s,\hat\pi_h^k(s))  
+ \EE_{s'\sim P_h(\cdot \mid s,\hat\pi_h^k(s))}\left[\hat V_{h+1}^k (s') - V_{h+1}^{\hat\pi^k}(s')\right] \right| \\
& + \Gamma_h^k(s,a)- Q_h^{\hat \pi^k}(s, \hat\pi_h^k(s)) \\
&\left(\mbox{By using triangular inequality on the first term}\right)\\
\le& \Gamma_h^k(s,\hat\pi_h^k(s)) + 
Q_h^{\hat\pi^k}(s,\hat\pi_h^k(s))  
+ \EE_{s'\sim P_h(\cdot \mid s,\hat\pi_h^k(s))}\left[\hat V_{h+1}^k (s') - V_{h+1}^{\hat\pi^k}(s')\right] \\
&+ \Gamma_h^k(s,a)- Q_h^{\hat \pi^k}(s, \hat\pi_h^k(s))\\
& \left(\mbox{The first term is by using \prettyref{lem:q error} with $\pi' = \hat \pi^k$, }\right.\\
&\left.\mbox{the term inside the absolute in the second is non-negative by \prettyref{lem:opt}}\right)\\
= & 2\Gamma_h^k(s,\hat\pi_h^k(s))
+ \EE_{s'\sim P_h(\cdot \mid s,\hat\pi_h^k(s))}\left[\hat V_{h+1}^k (s') - V_{h+1}^{\hat\pi^k}(s')\right]
\end{align}

This indeed gives a recursive formula:
for any trajectory $\left\{(s_{h}^k, a_{h}^k, r_{h}^k, s_{h+1}^k)\right\}_{h\in[H]}$
\begin{align}
&\hat V_h^{\hat \pi^k}(s_h^k) - V_h^{\hat\pi^k}(s_h^k)     \\
\le& 
2\Gamma_h^k(s_h^k,\hat\pi_h^k(s_h^k))
+ \EE_{s'\sim P_h(\cdot \mid s_h^k,\hat\pi_h^k(s_h^k))}\left[\hat V_{h+1}^k (s') - V_{h+1}^{\hat\pi^k}(s')\right] \\
= & \hat V_{h+1}^k (s_h^k) - V_{h+1}^{\hat\pi^k}(s_h^k)
+ 2\Gamma_h^k(s_h^k,\hat\pi_h^k(s_h^k)) \\
&+ \left( \EE_{s'\sim P_h(\cdot \mid s_h^k,\hat\pi_h^k(s_h^k))}\left[\hat V_{h+1}^k (s') - V_{h+1}^{\hat\pi^k}(s')\right]
- \left(\hat V_{h+1}^k (s_h^k) - V_{h+1}^{\hat\pi^k}(s_h^k)\right)\right)
\end{align}

Then, we can show the regret decomposition in one episode of a single agent by recursion:

for any trajectory $\left\{(s_{h}^k, a_{h}^k, r_{h}^k, s_{h+1}^k)\right\}_{h\in[H]}$ collected by a clean agent under policy $\hat\pi^k$:
\begin{align}
& V_1^*(s_1^k) - V_1^{\hat\pi^k}(s_1^k) \le \hat V_1^k(s_1^k) - V_1^{\hat\pi^k}(s_1^k) \\
\le & \left(\hat V_{2}^k (s_2^k) - V_{2}^{\hat\pi^k}(s_2^k)\right) + 2\Gamma_1^k(s_1^k,a_1^k) \\
&+ 
\left( \EE_{s'\sim P_1(\cdot \mid s_1^k,a_1^k)}\left[\hat V_{2}^k (s') - V_{2}^{\hat\pi^k}(s')\right]
- \left(\hat V_{2}^k (s_2^k) - V_{2}^{\hat\pi^k}(s_2^k)\right)\right)\\
\le & \left(\hat V_{3}^k (s_3^k) - V_{3}^{\hat\pi^k}(s_3^k)\right) 
+ \sum_{h=1}^2 2\Gamma_h^k(s_h^k,a_h^k) \\
&+ 
\sum_{h=1}^2\left(\EE_{s'\sim P_h(\cdot \mid s_h^k,a_h^k)}\left[\hat V_{h+1}^k (s') - V_{h+1}^{\hat \pi^k}(s')\right] - \left(\hat V_{h+1}^k (s_{h+1}^k) - V_{h+1}^{\hat\pi^k}(s_{h+1}^k)\right)\right)\\
\le& \cdots\\
\le & \sum_{h=1}^H 2\Gamma_h^k(s_h^k,a_h^k) \\
&+ 
\sum_{h=1}^H\left(\EE_{s'\sim P_h(\cdot \mid s_h^k,a_h^k)}\left[\hat V_{h+1}^k (s') - V_{h+1}^{\hat \pi^k}(s')\right] - \left(\hat V_{h+1}^k (s_{h+1}^k) - V_{h+1}^{\hat\pi^k}(s_{h+1}^k)\right)\right)
\end{align}
Now we are ready to show the total regret decomposition.
For each episode, we can make the regret decomposition w.r.t. any trajectory collected by a clean agent following policy $\hat \pi^k$.
For convenience, we specialize the trajectories to be exactly the ones that are collected by the good agents and are used to calculate the bonus terms.
The purpose is, in the future, when we bound the regret, we need to bound the cumulative bonus used in the trajectory. 
By decomposing the regret w.r.t. the trajectory collected in the algorithm, 
it is naturally guaranteed that the $(s,a,h)$ tuples that collected a lot by the good agents have lower bonus.
This is because with more data collected, we can narrow down the confidence interval and design small but still valid bonus terms.

Because in our MDP definition, the MDP has a deterministic initial distribution, meaning the good agents always have the same starting state:
\begin{align}
&\sum_{k=1}^K \sum_{j \in \mathcal{G}}\left(V_1^*(s_1) - V_1^{\hat\pi^k}(s_1)\right)
= \sum_{k=1}^K \sum_{j \in \mathcal{G}}\left(V_1^*(s_1^{j,k}) - V_1^{\hat\pi^k}(s_1^{j,k})\right) \\
\le & 2\sum_{k=1}^K \sum_{j \in \mathcal{G}}\sum_{h=1}^H\Gamma_h^k(s_h^{j,k},a_h^{j,k}) \\
&+ 
\sum_{k=1}^K \sum_{j \in \mathcal{G}}\sum_{h=1}^H\left(\EE_{s'\sim P_h(\cdot \mid s_h^{j,k},a_h^{j,k})}\left[\hat V_{h+1}^k (s') - V_{h+1}^{\hat \pi^k}(s')\right] - \left(\hat V_{h+1}^k (s_{h+1}^{j,k}) - V_{h+1}^{\hat\pi^k}(s_{h+1}^{j,k})\right)\right)
\end{align}
\end{proof}

\subsection{Evenness of clean agents\label{sec:even agent eve}}
We need at least $(2\alpha m + 1)$-agents to cover $(s,a,h)$ in order to learn the Bellman operator properly. 
In this section, we show that the agents have ``even'' coverage on the visited $(s,a,h)$ tuples in each (except a relatively small number) of the episodes. 
In the following we use 
$\tilde m := (1-\alpha)m = |\mathcal{G}|$ 
to denote the number of good agents.

Formally, we have:
\begin{lemma}[Even coverage of good agent]\label{lem:evenness of good agent}
For any $(s,a,h,k) \in \mathcal{S}\times\mathcal{A}\times [H] \times [K]$,
we define the follow event:
\begin{equation}
\mathcal{E}_{\mbox{even}}(s,a, h, k) := \left\{\mbox{if $\sum_{j\in \mathcal{G}} N_h^{j,k}(s,a)
\ge 400m\log \frac{2mKSAH}{\delta} 
$, then $\max_{i,j \in \mathcal{G}}\frac{N_h^{j,k}(s,a)}{N_h^{i,k}(s,a)} \le 2$}\right\}    
\end{equation}
then, we have:
for all $0<\delta < \frac{1}{4}$
\begin{equation}
\PP\left(\bigcap_{(s,a,h,k) \in \mathcal{S}\times\mathcal{A}\times [H] \times [K]} \mathcal{E}_{\mbox{even}}(s,a, h, k)\right) \ge 1 - 2\delta    
\end{equation}
\end{lemma}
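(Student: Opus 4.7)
The key observation is the ``homogeneous policy execution'' property of \pref{alg:UCBVI}: in every episode $k'$, all good agents play the same policy $\hat\pi^{k'}$. Thus, fixing $(s,a,h)$ and letting
\[
X_{j,k'} := \mathbf{1}\{(s_h^{j,k'}, a_h^{j,k'}) = (s,a)\},
\]
the random variables $\{X_{j,k'}\}_{j \in \mathcal{G}}$ are, conditionally on the $\sigma$-algebra $\mathcal{F}_{k'-1}$ of history before episode $k'$, i.i.d. Bernoulli with a common parameter $p_{k'} := \Pr(X_{j,k'} = 1 \mid \mathcal{F}_{k'-1})$ that does not depend on $j$. In particular, $N_h^{j,k}(s,a) = \sum_{k' \le k} X_{j,k'}$ and the conditional mean $\mu_k := \sum_{k' \le k} p_{k'}$ is common to all good agents.

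The first step is to apply a time-uniform Bernstein/Freedman-type inequality to the martingale $M_{j,k} := N_h^{j,k}(s,a) - \mu_k$, whose conditional variance is bounded by $\sum_{k' \le k} p_{k'}(1-p_{k'}) \le \mu_k$ and whose increments are bounded by $1$. This yields that, with probability at least $1 - \delta/(mKSAH)$ (after taking a union bound over the $mKSAH$ choices of $(j,s,a,h,k)$ and paying the $\log$ factor), for every $(j,s,a,h,k)$ simultaneously,
\[
|N_h^{j,k}(s,a) - \mu_k| \;\le\; C\Bigl(\sqrt{\mu_k \log\tfrac{2mKSAH}{\delta}} + \log\tfrac{2mKSAH}{\delta}\Bigr),
\]
for an absolute constant $C$. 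I would choose the numerical constant $400$ in the event so that the subsequent arithmetic closes cleanly.

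The second step is an ``amplification from the sum'' argument: suppose $\sum_{j\in\mathcal{G}} N_h^{j,k}(s,a) \ge 400 m \log\tfrac{2mKSAH}{\delta}$. Summing the concentration bound over $j \in \mathcal{G}$ (there are $\tilde m = (1-\alpha)m$ of them) and rearranging shows that $\mu_k$ must itself be at least a large multiple of $\log\tfrac{2mKSAH}{\delta}$. Plugging this lower bound back into the per-agent concentration inequality converts the additive error into a multiplicative one: for every $j \in \mathcal{G}$,
\[
\tfrac{3}{4}\mu_k \;\le\; N_h^{j,k}(s,a) \;\le\; \tfrac{5}{4}\mu_k.
\]
Taking the ratio of any two good agents' counts then gives at most $5/3 < 2$, which is exactly the statement of $\mathcal{E}_{\mathrm{even}}(s,a,h,k)$. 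Combining with the at-most $\delta$ failure probability established in step one, and using that $4\delta \le 1$ only requires $\delta < 1/4$, gives $\mathbb{P}(\mathcal{E}_{\mathrm{even}}) \ge 1 - 2\delta$.

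The main obstacle is obtaining a concentration bound that is \emph{time-uniform} in $k$ and \emph{multiplicative} once $\mu_k$ is large, since the policy $\hat\pi^{k'}$ (and hence the parameter $p_{k'}$) depends on the past trajectories of all agents, including agent $j$ itself. A clean way to handle this is via a stopping-time / optional-sampling argument analogous to the one already used in \pref{lem:seq dt conc}, applied to the Bernoulli martingale $\{M_{j,k}\}$ with its natural filtration, together with a standard Bernstein tail bound; the homogeneity of the policy across good agents is what makes the single common $\mu_k$ well defined in the first place.
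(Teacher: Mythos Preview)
Your proposal is correct and follows essentially the same strategy as the paper: exploit the homogeneous-policy property so that all good agents share the common conditional mean $\mu_k=\sum_{k'\le k} d_h^{\hat\pi^{k'}}(s,a)$, apply a time-uniform Freedman/Bernstein bound to each $N_h^{j,k}(s,a)-\mu_k$, and then argue that once the total count exceeds $400m\log\frac{2mKSAH}{\delta}$ the additive deviation becomes multiplicatively small, forcing all pairwise ratios below $2$.

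The only structural difference is where the centering happens. The paper first proves a separate concentration for the aggregate $\sum_{j\in\mathcal G}N_h^{j,k}(s,a)$ around $|\mathcal G|\mu_k$ (\pref{lem:Nj ave conc}), combines it with the per-agent bound via the triangle inequality to obtain concentration of each count around the \emph{empirical} mean $\frac{1}{|\mathcal G|}\sum_j N_h^{j,k}(s,a)$ (\pref{lem:concen of cts}), and only then runs the ratio argument. You instead keep everything centered at $\mu_k$ and recover the largeness of $\mu_k$ by summing the per-agent bounds. Your route is slightly more direct and avoids the extra lemma; the paper's route has the minor advantage that the final deviation bound is phrased purely in terms of observable quantities. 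Either way the arithmetic closes with the constant $400$, and the restriction $\delta<1/4$ enters through the Freedman-style corollary (\pref{corr:bern var}) rather than through the ``$4\delta\le 1$'' step you mention.
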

\begin{remark}[Intuition of the good event]
The event $\mathcal{E}_{\mbox{even}}(s,a, h, k)$ characterizes that: 
if in any episode $k$, a $(s,a,h)$ tuple gets enough coverage from the clean agents, then the coverage of each agent are very close.
\end{remark}
See proof of \prettyref{lem:evenness of good agent} in \prettyref{sec:pf even}.

\subsubsection{Proof of \prettyref{lem:evenness of good agent}\label{sec:pf even}}
Proof of \prettyref{lem:evenness of good agent} depends on the concentration of $N_h^{j,k}(s,a)$:
\begin{lemma}[Concentration of counts around empirical mean]\label{lem:concen of cts}
For all $0<\delta < \frac{1}{4}$
\begin{align}
&\PP\left(\bigcup_{s,a,h,k,j}
 \left\{ \left|N_h^{j,k}(s,a) - \frac{1}{|\Gcal|}\sum_{j \in \mathcal{G}}N_h^{j,k}(s,a)\right| 
\right.\right. \\
&> \left.\left.
18\log \frac{2SAHmK}{\delta} +
4\sqrt{\log\frac{2SAHmK}{\delta}}
\sqrt{\frac{1}{|\Gcal|}\sum_{j\in\Gcal} N_h^{j,k}(s,a)}
\right\}\right) < 2\delta   
\end{align}
\end{lemma}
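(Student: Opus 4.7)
The plan is to apply a Freedman/Bernstein-type martingale concentration inequality, exploiting the key structural property of \textsc{Byzan-UCBVI} that all good agents execute the \emph{same} policy in each episode. Fix $(s,a,h)$ and let $p_{k'} := d_h^{\hat\pi^{k'}}(s,a)$, which is measurable with respect to the history up through the most recent synchronization. For each $j \in \Gcal$, the indicator $X_{j,k'} := \one\{(s_h^{j,k'}, a_h^{j,k'}) = (s,a)\}$ satisfies $\EE[X_{j,k'} \mid \Fcal_{k'-1}] = p_{k'}$, and the $X_{j,k'}$ are independent across $j \in \Gcal$ given $\Fcal_{k'-1}$ (since each good agent interacts with an independent copy of $\Mcal$ under the common policy $\hat\pi^{k'}$). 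Let $P := \sum_{k' \le k} p_{k'}$ and $\bar N := \frac{1}{\tilde m} \sum_{j \in \Gcal} N_h^{j,k}(s,a)$, where $\tilde m = |\Gcal|$.

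First, for each fixed $j \in \Gcal$ I would apply Freedman's inequality to the bounded martingale difference sequence $Y_{j,k'} = X_{j,k'} - p_{k'}$, whose conditional variance at time $k'$ is $p_{k'}(1-p_{k'}) \le p_{k'}$. This gives, with probability at least $1 - \delta'$,
\begin{equation*}
\bigl|N_h^{j,k}(s,a) - P\bigr| \le \tfrac{2}{3}\log\tfrac{2}{\delta'} + \sqrt{2 P \log\tfrac{2}{\delta'}}.
\end{equation*}
Second, I would apply the same concentration to the pooled process $\sum_{j \in \Gcal} X_{j,k'}$ (conditional mean $\tilde m p_{k'}$, conditional variance at most $\tilde m p_{k'}$), obtaining with probability at least $1-\delta'$,
\begin{equation*}
\bigl|\tilde m \bar N - \tilde m P\bigr| \le \tfrac{2}{3}\log\tfrac{2}{\delta'} + \sqrt{2 \tilde m P \log\tfrac{2}{\delta'}},
\end{equation*}
which, after dividing through by $\tilde m$, gives a strictly tighter bound on $|\bar N - P|$ than the per-agent one.

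The final step is to combine the two tail bounds via the triangle inequality $|N_h^{j,k}(s,a) - \bar N| \le |N_h^{j,k}(s,a) - P| + |\bar N - P|$ and then to eliminate the unknown quantity $P$ in favor of the observed empirical mean $\bar N$. The latter is done by treating the second Freedman bound as a quadratic inequality in $\sqrt{P}$: once $|\bar N - P|$ is small, one obtains $P \le 2\bar N + c \log(1/\delta')$ for an absolute constant $c$. Plugging this back in, the $\sqrt{P\log(1/\delta')}$ term becomes $\sqrt{\bar N \log(1/\delta')}$ up to lower-order $\log(1/\delta')$ terms, and adjusting the constants gives the claimed coefficients $18$ and $4$. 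A union bound over $(s,a,h,k,j) \in \Scal\times\Acal\times[H]\times[K]\times[m]$ with $\delta' = \delta/(SAHmK)$ produces the desired $2\delta$ failure probability, where the factor of $2$ accounts for invoking Freedman twice.

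The main obstacle is the adaptivity of $\hat\pi^{k'}$: the probability $p_{k'}$ is itself random and depends on all prior data from all $m$ agents (including potentially corrupted ones), so I must carefully set up the filtration $\Fcal_{k'-1}$ to include everything available right before episode $k'$, verify that $p_{k'}$ is $\Fcal_{k'-1}$-measurable, and check that the good agents' trajectories in episode $k'$ are conditionally independent given $\Fcal_{k'-1}$. A secondary technical nuisance is the inversion from a $P$-dependent to an $\bar N$-dependent bound, which requires solving a quadratic and absorbing the resulting logarithmic slack into the $18\log(\cdot)$ constant.
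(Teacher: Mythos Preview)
Your proposal is correct and follows essentially the same route as the paper: the paper also centers both the individual counts $N_h^{j,k}(s,a)$ and the pooled sum $\sum_{j\in\Gcal}N_h^{j,k}(s,a)$ around the common predictable mean $P=\sum_{t\le k} d_h^{\hat\pi^t}(s,a)$ via a Freedman-type martingale bound, combines them by the triangle inequality, and then inverts the pooled bound to replace $P$ by the empirical average $\bar N$. The only point you leave implicit is that Freedman's inequality is stated for a \emph{fixed} variance level $b$, so obtaining a bound featuring the random $P$ uniformly in $k$ requires an additional grid/union-bound step (the paper packages this as a corollary of Freedman); this is routine and is absorbed into the logarithmic constant.
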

\begin{proof}[Proof of \pref{lem:concen of cts}]
See \pref{sec:concen of cts}.
\end{proof}

\begin{proof}[Proof of \prettyref{lem:evenness of good agent}]
Let 
\begin{equation}
N_0:= 400m\log \frac{2mKSAH}{\delta} 
\end{equation}
For any $(s,a,h,k) \in \mathcal{S} \times \mathcal{A} \times [H] \times [K]$, define events:
\begin{align}
\mathcal{E}_1(s,a,h,k) := & \left\{\sum_{j\in\mathcal{G}} N_h^{j,k}(s,a)\ge N_0\right\}    \\
\mathcal{E}_2(s,a,h,k) := & \left\{\max_{i,j \in \mathcal{G}}\frac{N_h^{j,k}(s,a)}{N_h^{i,k}(s,a)} \le 2\right\}    
\end{align}
Recall:
\begin{equation}
\mathcal{E}_{\mbox{even}}(s,a, h, k) := \left\{\mbox{if $\sum_{j\in \mathcal{G}} N_h^{j,k}(s,a)
\ge 400m\log \frac{2mKSAH}{\delta} 
$, then $\max_{i,j \in \mathcal{G}}\frac{N_h^{j,k}(s,a)}{N_h^{i,k}(s,a)} \le 2$}\right\}    
\end{equation}
Then we can rewrite even $\mathcal{E}_{\mbox{even}}(s,a, h, k)$ as:
\begin{equation}
\mathcal{E}_{\mbox{even}}(s,a, h, k) = \overline{\mathcal{E}_1(s,a,h,k)} \cup \mathcal{E}_2(s,a,h,k)
\end{equation}
We first show that of there are two $N_h^{j,k}$'s, whose ratio exceed $2$, 
then there must be some $N_h^{j,k}$ that deviates a lot from the empirical mean of $N_h^{j,k}$'s:
\begin{align}
&\overline{\mathcal{E}_2(s,a,h,k)}
= \left\{\max_{i,j \in \mathcal{G}}\frac{N_h^{j,k}(s,a)}{N_h^{i,k}(s,a)} > 2\right\}  \\
\subseteq &\bigcup_{i \in \mathcal{G}} \left\{N_h^{i,k}(s,a) > \frac{498}{400}\frac{1}{|\Gcal|}\sum_{j \in \mathcal{G}}N_h^{j,k}(s,a)\right\}
\cup \bigcup_{i \in \mathcal{G}} \left\{N_h^{i,k}(s,a) < \frac{302}{400}\frac{1}{|\Gcal|}\sum_{j \in \mathcal{G}}N_h^{j,k}(s,a)\right\} \\
= &\bigcup_{i \in \mathcal{G}} \left\{N_h^{i,k}(s,a) - \frac{1}{|\Gcal|}\sum_{j \in \mathcal{G}}N_h^{j,k}(s,a)> \frac{98}{400}\frac{1}{|\Gcal|}\sum_{j \in \mathcal{G}}N_h^{j,k}(s,a)\right\} \\
&\cup \bigcup_{i \in \mathcal{G}} \left\{N_h^{i,k}(s,a) - \frac{1}{|\Gcal|}\sum_{j \in \mathcal{G}}N_h^{j,k}(s,a)< -\frac{98}{400}\frac{1}{|\Gcal|}\sum_{j \in \mathcal{G}}N_h^{j,k}(s,a)\right\} \\
=& \bigcup_{i \in \mathcal{G}} \left\{\left|N_h^{i,k}(s,a) - \frac{1}{|\Gcal|}\sum_{j \in \mathcal{G}}N_h^{j,k}(s,a)\right|> \frac{98}{400}\frac{1}{|\Gcal|}\sum_{j \in \mathcal{G}}N_h^{j,k}(s,a)\right\} \label{eq:deviate from mean}
\end{align}
To show that $\mathcal{E}_{\mbox{even}}(s,a, h, k)$ happens w.h.p.:
\begin{align}
&\PP\left(\bigcup_{s,a,h,k}\overline{\mathcal{E}_{\mbox{even}}(s,a, h, k)}\right)  
= \PP\left(\bigcup_{s,a,h,k}\overline{\overline{\mathcal{E}_1(s,a,h,k)} \cup \mathcal{E}_2(s,a,h,k)}\right)\\
= & \PP\left(\bigcup_{s,a,h,k}{\mathcal{E}_1(s,a,h,k)} \cap \overline{\mathcal{E}_2(s,a,h,k)}\right)\\
\le & \PP\left(\exists s,a,h,k,\sum_{j\in\mathcal{G}} N_h^{j,k}(s,a)\ge N_0, \right.\\
&\left.\exists i \in \mathcal{G}, 
\left|N_h^{i,k}(s,a) - \frac{1}{|\Gcal|}\sum_{j\in\mathcal{G}} N_h^{j,k}(s,a)\right| > \frac{98}{400}\frac{1}{|\Gcal|}\sum_{j\in\mathcal{G}} N_h^{j,k}(s,a)\right) \\
& \mbox{(By \prettyref{eq:deviate from mean})} \\
\le & \PP\left(\exists s,a,h,k,i\left|N_h^{i,k}(s,a) - \frac{1}{|\Gcal|}\sum_{j\in\mathcal{G}} N_h^{j,k}(s,a)\right| \right.\\
&\left.> \frac{18}{400}\frac{1}{|\Gcal|}N_0 + 4\sqrt{\frac{1}{400}\frac{1}{|\Gcal|}N_0}\sqrt{\frac{1}{|\Gcal|}\sum_{j\in\mathcal{G}} N_h^{j,k}(s,a)}\right) \\
= & \PP\left(\exists s,a,h,k,i\left|N_h^{i,k}(s,a) - \frac{1}{|\Gcal|}\sum_{j\in\mathcal{G}} N_h^{j,k}(s,a)\right| \right.\\
&\left.> 18\log \frac{2 mKSAH}{\delta} 
+ 4\sqrt{\log \frac{2 mKSAH}{\delta}}
\sqrt{\frac{1}{|\Gcal|}\sum_{j\in\mathcal{G}} N_h^{j,k}(s,a)}
\right) \\
< &2\delta \quad \mbox{(By \prettyref{lem:concen of cts})}
\end{align}
\end{proof}
\subsubsection{Proof of \prettyref{lem:concen of cts}\label{sec:concen of cts}}
The high level ideas are:
\begin{enumerate}
\item For each $s,a,h$, 
\begin{itemize}
\item for each $j\in\mathcal{G}$, define centered $N_h^{j,k}(s,a)$ as a martingale;
\item define centered
$\sum_{j\in\mathcal{G}} N_h^{j,k}(s,a)$ as a martingale;
\end{itemize} 
\item apply a modified Bernstein type of martingale concentration bound for both centered $N_h^{j,k}(s,a)$'s and
centered $\sum_{j\in\mathcal{G}} N_h^{j,k}(s,a)$ (see \pref{lem:Nj conc} and \pref{lem:Nj ave conc});
\item \label{step:bd dist} because $N_h^{j,k}(s,a)$ and $\frac{1}{\tilde m}\sum_{j\in\mathcal{G}} N_h^{j,k}(s,a)$ have the same mean, we can use triangular inequality to show these two terms are close, and the distance is bounded by the variance term in Bernstein inequality.
\item \label{step:bd var}Bernstein on $\frac{1}{\tilde m}\sum_{j\in\mathcal{G}} N_h^{j,k}(s,a)$ also allow us to bound its variance in terms of itself.
\item We can get our result by combining \prettyref{step:bd dist} and \prettyref{step:bd var}.
\end{enumerate}
\begin{lemma}[Concentration of each $N_h^{j,k}(s,a)$]\label{lem:Nj conc}
For all $0<\delta \le1/4$, with probability at least $1-\delta$, for all 
$(s,a,h,j,k)\in\Scal\x\Acal\x[H]\x\Gcal\x[K]$:
\begin{equation}\label{eq:nj conc}
\left|N_h^{j,k}(s,a) - \sum_{t=1}^k d_h^{\hat \pi^{t}}(s,a)\right| < 3\log \frac{2SAHmK}{\delta}    + \sqrt{2\sum_{t=1}^k d_h^{\hat \pi^{t}}(s,a)\log\frac{2SAHmK}{\delta}}
\end{equation}
\end{lemma}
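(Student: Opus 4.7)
The plan is to reduce each individual count $N_h^{j,k}(s,a)$ to a bounded martingale and apply a Freedman/Bernstein-type inequality, then union bound over all $(s,a,h,j,k)$ tuples. Fix $(s,a,h,j) \in \Scal \times \Acal \times [H] \times \Gcal$. Let $\Fcal_t$ be the natural filtration generated by all agents' trajectories up through episode $t$, as in the proof of \pref{lem:total seq dt conc}. Because the server's policy sequence $\{\hat\pi^t\}$ is determined at each synchronization from data strictly before episode $t$, the policy $\hat\pi^t$ used in episode $t$ is $\Fcal_{t-1}$-measurable. Hence the indicator $Y_t := \one\{(s_h^{j,t}, a_h^{j,t}) = (s,a)\}$ satisfies $\EE[Y_t \mid \Fcal_{t-1}] = d_h^{\hat\pi^t}(s,a)$, and $X_t := Y_t - d_h^{\hat\pi^t}(s,a)$ is a martingale difference sequence bounded in $[-1,1]$ with conditional variance
\begin{equation}
\Var(X_t \mid \Fcal_{t-1}) = d_h^{\hat\pi^t}(s,a)\bigl(1 - d_h^{\hat\pi^t}(s,a)\bigr) \le d_h^{\hat\pi^t}(s,a).
\end{equation}

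Next I would apply Freedman's inequality to the martingale $M_k = \sum_{t=1}^k X_t$ with predictable quadratic variation $V_k = \sum_{t=1}^k \Var(X_t \mid \Fcal_{t-1}) \le W_k := \sum_{t=1}^k d_h^{\hat\pi^t}(s,a)$. Freedman's bound gives, for any fixed $v, t > 0$,
\begin{equation}
\PP\!\left(|M_k| \ge t,\, V_k \le v\right) \le 2\exp\!\left(-\frac{t^2}{2v + 2t/3}\right).
\end{equation}
Since $W_k$ is itself random, I would peel over dyadic scales of $W_k$ in $\{1, 2, 4, \dots, k\}$ (at most $\lceil \log_2 k\rceil \le \log_2(SAHmK)$ bins, which is already absorbed by choosing the confidence level appropriately). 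This yields: for each fixed $(s,a,h,j,k)$, with probability at least $1 - \delta/(SAHmK)$,
\begin{equation}
|M_k| \le \tfrac{2}{3}\log\tfrac{2SAHmK}{\delta} + \sqrt{2 W_k \log\tfrac{2SAHmK}{\delta}},
\end{equation}
where the leading constant $\tfrac{2}{3}$ and the $\sqrt{2}$ factor come directly from solving the Freedman tail bound, and the peeling slack is absorbed into the constant $3$ in \pref{eq:nj conc}.

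Finally I would take a union bound over all $(s,a,h,j,k) \in \Scal \times \Acal \times [H] \times \Gcal \times [K]$, bounding $|\Gcal| \le m$ to get the prefactor $SAHmK$ inside the logarithm. The main obstacle in executing this plan cleanly is the random-variance aspect: a naive Bernstein bound would require a deterministic upper bound on $V_k$, but we want the right-hand side to scale with the data-dependent quantity $W_k = \sum_{t=1}^k d_h^{\hat\pi^t}(s,a)$. The peeling (or equivalently, a time-uniform empirical Bernstein) resolves this at the cost of a $\log\log$ factor that we hide inside the constant. Verifying that the final constants match the statement ($3$ on the deterministic term and $\sqrt{2}$ on the variance term) is then a routine accounting.
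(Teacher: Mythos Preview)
Your approach is essentially the paper's: form the martingale $S_k = N_h^{j,k}(s,a) - \sum_{t\le k} d_h^{\hat\pi^t}(s,a)$, bound its conditional variance by $T_k=\sum_{t\le k} d_h^{\hat\pi^t}(s,a)(1-d_h^{\hat\pi^t}(s,a))\le W_k$, apply Freedman's inequality, and union bound over indices. The only substantive difference is in how the random variance is handled. You propose dyadic peeling over $W_k$; the paper instead proves a small discretization lemma (\pref{lem:un bd prob}) and packages the whole step as \pref{corr:bern var}, which partitions the range of $T_n$ on a \emph{uniform} grid with spacing $\epsilon=\tfrac12\log\frac{2N}{\delta}\ge 1$ (this is where the hypothesis $\delta\le 1/4<2e^{-2}$ is used), so that $\lceil 1/\epsilon\rceil=1$ and the union over grid points costs exactly a factor $N$. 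This is what yields the clean constants $3$ and $\sqrt{2}$ in \pref{eq:nj conc} with $\log\frac{2SAHmK}{\delta}$ and no extra slack. Your dyadic peeling would instead introduce an additive $O(\log\log K)$ inside the logarithm (or a multiplicative $\log K$ on the failure probability), which cannot literally be ``hidden inside the constant $3$'' as you suggest; the bound would read $\log\frac{cSAHmK\log K}{\delta}$ rather than $\log\frac{2SAHmK}{\delta}$. That is the one imprecision in your plan; otherwise the argument is correct and matches the paper's route.
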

\begin{proof}[Proof of \pref{lem:Nj conc}]
See \pref{sec:Nj conc}
\end{proof}
\begin{lemma}[Concentration of each $\frac{1}{\tilde m}\sum_{j\in\mathcal{G}} N_h^{j,k}(s,a)$]\label{lem:Nj ave conc}
For all $0<\delta \le1/4$,
with probability at least $1-\delta$, for all $(s,a,h,k) \in \Scal\x\Acal\x[H]\x[K]$:
\begin{equation}\label{eq:nj ave conc}
\left|
\sum_{j\in\Gcal} N_h^{j,k}(s,a)- 
|\Gcal|\sum_{t=1}^k d_h^{\hat \pi^t}(s,a)
\right|
< 
3\log \frac{2SAHmK}{\delta} 
+ \sqrt{2|\Gcal|\sum_{t=1}^k d_h^{\hat \pi^{t}}(s,a)\log\frac{2SAHmK}{\delta}}    
\end{equation}
\end{lemma}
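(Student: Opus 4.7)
}
The plan is to write $\sum_{j\in\Gcal}N_h^{j,k}(s,a) - |\Gcal|\sum_{t=1}^k d_h^{\hat\pi^t}(s,a)$ as the value at a deterministic time of a martingale with $[-1,1]$-bounded increments and controlled cumulative conditional variance, apply a Bernstein-type martingale (Freedman) inequality, and union-bound over $(s,a,h,k)$.

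First I would fix $(s,a,h)\in\Scal\x\Acal\x[H]$ and enumerate the (agent, episode) pairs lexicographically (episode first, then agent) as positions $\tau=1,\ldots,mK$, with $(j(\tau),k(\tau))$ denoting the pair at position $\tau$. Let $\Fcal_\tau$ be the $\sigma$-algebra generated by all trajectories collected at positions $\le \tau$. The structural observation that makes the argument work is that the policy $\hat\pi^k$ executed by every good agent in episode $k$ is $\Fcal_{m(k-1)}$-measurable: it is produced by the server from data strictly preceding episode $k$, updated only at the most recent synchronization. Consequently, conditional on $\Fcal_{\tau-1}$ for $\tau=m(k-1)+j$ with $j\in\Gcal$, the indicator $\mathbf{1}\{(s_h^{j,k},a_h^{j,k})=(s,a)\}$ is Bernoulli with parameter $d_h^{\hat\pi^k}(s,a)$, and across $j\in\Gcal$ the within-episode indicators are independent because each good agent rolls out an independent copy of $\Mcal$ under the common policy $\hat\pi^k$.

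Next I would set
\[
X_\tau := \mathbf{1}\{j(\tau)\in\Gcal\}\bigl(\mathbf{1}\{(s_h^{j(\tau),k(\tau)},a_h^{j(\tau),k(\tau)})=(s,a)\} - d_h^{\hat\pi^{k(\tau)}}(s,a)\bigr).
\]
Then $|X_\tau|\le 1$, $\EE[X_\tau\mid\Fcal_{\tau-1}]=0$, and the cumulative conditional variance at $\tau=mk$ is at most $|\Gcal|\sum_{t=1}^k d_h^{\hat\pi^t}(s,a)(1-d_h^{\hat\pi^t}(s,a))\le |\Gcal|\sum_{t=1}^k d_h^{\hat\pi^t}(s,a)$. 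Applying Freedman's inequality to $M_\tau:=\sum_{\tau'\le\tau}X_{\tau'}$ at $\tau=mk$ yields, for each fixed $(s,a,h,k)$ and any $\delta'>0$, with probability at least $1-\delta'$,
\[
\left|\sum_{j\in\Gcal}N_h^{j,k}(s,a)-|\Gcal|\textstyle\sum_{t=1}^k d_h^{\hat\pi^t}(s,a)\right|
\le \sqrt{2|\Gcal|\textstyle\sum_{t=1}^k d_h^{\hat\pi^t}(s,a)\log\tfrac{2}{\delta'}}+\tfrac{2}{3}\log\tfrac{2}{\delta'}.
\]
Finally I would take $\delta'=\delta/(SAHmK)$ and union-bound over $(s,a,h,k)\in\Scal\x\Acal\x[H]\x[K]$, so that $\log(2/\delta')=\log\frac{2SAHmK}{\delta}$ and the $\tfrac{2}{3}$ is absorbed into the $3$ in \pref{eq:nj ave conc}.

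The main obstacle is the filtration bookkeeping: verifying that $\hat\pi^k$ is genuinely $\Fcal_{m(k-1)}$-measurable under the asynchronous, low-switching communication protocol of \pref{alg:UCBVI}, so that the conditional-independence-within-an-episode property holds and the $X_\tau$'s form a valid MDS with the stated variance envelope. A minor secondary subtlety is that the cumulative conditional variance is itself random, which is handled either by the standard random-variance form of Freedman or by a short peeling argument over dyadic levels of $\sum_t d_h^{\hat\pi^t}(s,a)$; both absorb only into the log factor. Once these are in place, the remainder is the direct many-agents analogue of \pref{lem:Nj conc}.
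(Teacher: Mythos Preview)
Your proposal is correct and follows essentially the same approach as the paper: impose the lexicographic (episode, agent) ordering, build the centered-count martingale with $[-1,1]$ increments and cumulative conditional variance bounded by $|\Gcal|\sum_{t\le k} d_h^{\hat\pi^t}(s,a)$, apply a Freedman-type inequality, and union-bound. The only cosmetic differences are that the paper packages the ``random-variance Freedman'' step into a standalone corollary (their \pref{corr:bern var}, obtained from Freedman via a partitioning/peeling lemma), and then union-bounds only over $(s,a,h)$ while obtaining uniformity over $k$ from the ``for some $n$'' clause of that corollary; your union bound over all of $(s,a,h,k)$ yields the same $\log\frac{2SAHmK}{\delta}$ factor.
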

\begin{proof}[Proof of \pref{lem:Nj ave conc}]
See \pref{sec:Nj ave conc}
\end{proof}
\begin{proof}[Proof of \prettyref{lem:concen of cts}]
Let $\Ecal_{N}$ the intersection of the events in \pref{lem:Nj conc} and \pref{lem:Nj ave conc}. 
Then by \pref{lem:Nj conc} and \pref{lem:Nj ave conc}, $\Ecal_{N}$ happens with probability at least $1-2\delta$.
By \pref{eq:nj ave conc}, 
\begin{equation}\label{eq:sum d bd}
\sqrt{\sum_{t=1}^k d_h^{\hat \pi^{t}}(s,a)}
\le 
4\sqrt{\log \frac{2SAHmK}{\delta} }
+
\sqrt{\frac{1}{|\Gcal|}\sum_{j\in\Gcal} N_h^{j,k}(s,a)}
\end{equation}
By \pref{eq:nj conc} and \pref{eq:nj ave conc}, for all $s,a,h,j,k$
\begin{align}
&  \left|
\frac{1}{|\Gcal|}\sum_{j'\in\Gcal} N_h^{j',k}(s,a)- 
N_h^{j,k}(s,a)
\right|   \\
\le & 
\left|N_h^{j,k}(s,a) - \sum_{t=1}^k d_h^{\hat \pi^{t}}(s,a)\right|
+ 
\left|
\frac{1}{|\Gcal|}\sum_{j'\in\Gcal} N_h^{j',k}(s,a)- 
\sum_{t=1}^k d_h^{\hat \pi^t}(s,a)
\right|\\
\le & 
6\log \frac{2SAHmK}{\delta}    + 2\sqrt{2\sum_{t=1}^k d_h^{\hat \pi^{t}}(s,a)\log\frac{2SAHmK}{\delta}}\\
\le & 6\log \frac{2SAHmK}{\delta} +
2\sqrt{2\log\frac{2SAHmK}{\delta}}
\left(4\sqrt{\log \frac{2SAHmK}{\delta} }
+
\sqrt{\frac{1}{|\Gcal|}\sum_{j\in\Gcal} N_h^{j,k}(s,a)}\right)\\
\le & 18\log \frac{2SAHmK}{\delta} +
4\sqrt{\log\frac{2SAHmK}{\delta}}
\sqrt{\frac{1}{|\Gcal|}\sum_{j\in\Gcal} N_h^{j,k}(s,a)}
\end{align}
\end{proof}

\subsubsection{Proof of \pref{lem:Nj conc}\label{sec:Nj conc}}
\begin{proof}[Proof of \pref{lem:Nj conc}]
For each fixed $(s,a,h,j) \in \Scal \x \Acal \x [H] \x \Gcal$:
for all $t \in [K]$, define
\begin{equation}
\Fcal_k := \sigma\left(\bigcup_{t\le k}\bigcup_{j \in [m]}\left\{\left(s_{h}^{j,t},a_{h}^{j,t},r_{h}^{j,t},s_{h+1}^{j,t}\right)\right\}_{h=1}^H\right).    
\end{equation}
Let 
\begin{align}
S_h^{j,k}(s,a) = & N_h^{j,k}(s,a) - \sum_{t=1}^k d_h^{\hat \pi^{t}}(s,a)\\
T_h^{j,k}(s,a) = &\sum_{t=1}^k d_h^{\hat \pi^{t}}(s,a)\left(1-d_h^{\hat \pi^{t}}(s,a)\right)
\end{align}
Then  
$\left\{\left(\Fcal_k, S_h^{j,k}(s,a)\right)\right\}_{t=k}^K$ 
is a martingale.
Since $d_h^{\hat \pi^{k}}(s,a)$ depends on $\hat \pi^k$, which is calculated use data in the first $k-1$ episodes, then $d_h^{\hat \pi^{k}}(s,a) \in \Fcal_{k-1}$.
By \pref{corr:bern var},
\begin{align}
&\PP\left(\bigcup_{k=1}^K\left\{|S_h^{j,k}(s,a)| \ge 3\log \frac{2SAHmK}{\delta}    + \sqrt{2\sum_{t=1}^k d_h^{\hat \pi^{t}}(s,a)\log\frac{2SAHmK}{\delta}}\right\}
\right)\\
\le & \PP\left(\bigcup_{k=1}^K\left\{|S_h^{j,k}(s,a)| \ge 3\log \frac{2SAHmK}{\delta}    + \sqrt{2T_h^{j,k}(s,a)\log\frac{2SAHmK}{\delta}}\right\}
\right)
    \le \frac{\delta}{SAHm}    
\end{align}
By union bound, with probability at least $1-\delta$, for all 
$(s,a,h,j,k)\in\Scal\x\Acal\x[H]\x\Gcal\x[K]$:
\begin{equation}
|S_h^{j,k}(s,a)| < 3\log \frac{2SAHmK}{\delta}    + \sqrt{2\sum_{t=1}^k d_h^{\hat \pi^{t}}(s,a)\log\frac{2SAHmK}{\delta}}
\end{equation}
\end{proof}

\subsubsection{Proof of \pref{lem:Nj ave conc}\label{sec:Nj ave conc}}
\begin{proof}[Proof of \pref{lem:Nj ave conc}]
During the data collecting process, the agents are allowed to collect data simultaneously. 
For analysis purpose, we artificially order the data in the following sequence:
\begin{equation}\label{eq:E seq Njs}
E^{1,1}, E^{2,1}, \ldots, E^{m,1}, E^{1,2}, \ldots, E^{m,2},
\ldots,
E^{1,K}, \ldots, E^{m,K}
\end{equation}
where $E^{j,k} := \left\{\left(s_{h}^{j,k},a_{h}^{j,k},r_{h}^{j,k},s_{h+1}^{j,k}\right)\right\}_{h=1}^H$.
Let 
\begin{equation}
\Fcal_t = \sigma\left(\bigcup_{j,k \mbox{ s.t. }m(k-1) + j \le t}E^{j,k}\right).
\end{equation}
Then $\left\{\Fcal_t\right\}_{t=0}^{mK}$ forms a valid filtration.
Define the following functions to map from sequence index to agent index and episode index:
\begin{align}
\mathcal{J}(t) := t - m\left(\lceil t/m \rceil - 1\right), \quad \mathcal{K}(t) := \lceil t/m \rceil
\end{align}

For each fixed $(s,a,h) \in \Scal \x \Acal \x [H]$, 
for all $t \in [mK]$, we define $S_h^{\Gcal,t}(s,a)$ as the (centered) total counts of $(s,a,h)$ collected by all good agents up to time $t$.
The $t$-th term in \pref{eq:E seq Njs} could be in the center of an episode, meaning some agents have not collected their trajectories yet. 
So we need to treat the agents differently:
Let 
\begin{align}
S_h^{\Gcal,t}(s,a) = &
\sum_{j\in \Gcal, j \le \Jcal(t)}
\left(
N_h^{j,\Kcal(t)}(s,a) - \sum_{t=1}^{\Kcal(t)} d_h^{\hat \pi^{t}}(s,a)
\right)\\
&+
\sum_{j\in \Gcal, j > \Jcal(t)}
\left(
N_h^{j,\Kcal(t)-1}(s,a) - \sum_{t=1}^{\Kcal(t)-1} d_h^{\hat \pi^{t}}(s,a)
\right)
\end{align}
Then  $\left\{\left(\Fcal_t, S_h^{\Gcal,t}(s,a)\right)\right\}_{t=1}^{mK}$ is a martingale.
Similar to \pref{lem:Nj conc}, define
\begin{align}
T_h^{\Gcal,t}(s,a) = &
\sum_{j\in \Gcal, j \le \Jcal(t)}
\sum_{t=1}^{\Kcal(t)} d_h^{\hat \pi^{t}}(s,a)\left(1-d_h^{\hat \pi^{t}}(s,a)\right)\\
&+
\sum_{j\in \Gcal, j > \Jcal(t)}
\sum_{t=1}^{\Kcal(t)-1} d_h^{\hat \pi^{t}}(s,a)\left(1-d_h^{\hat \pi^{t}}(s,a)\right)
\end{align}
Then by \pref{corr:bern var},
\begin{align}
&\PP\left(\bigcup_{k\in[K]}
\left\{
\left|
\sum_{j\in\Gcal} N_h^{j,k}(s,a)- 
|\Gcal|\sum_{t=1}^k d_h^{\hat \pi^t}(s,a)
\right|
\ge 
3\log \frac{2SAHmK}{\delta}    \right.\right.\\
& \left.\left.+ \sqrt{2|\Gcal|\sum_{t=1}^k d_h^{\hat \pi^{t}}(s,a)\log\frac{2SAHmK}{\delta}}
\right\}
\right)  \\
\le &\PP\left(\bigcup_{k\in[K]}
\left\{
\left|
\sum_{j\in\Gcal} N_h^{j,k}(s,a)- 
|\Gcal|\sum_{t=1}^k d_h^{\hat \pi^t}(s,a)
\right|
\ge 
3\log \frac{2SAHmK}{\delta}    \right.\right.\\
& \left.\left.+ \sqrt{2|\Gcal|\sum_{t=1}^k d_h^{\hat \pi^{t}}(s,a)\left(1-d_h^{\hat \pi^{t}}(s,a)\right)\log\frac{2SAHmK}{\delta}}
\right\}
\right)  \\
\le & \PP\left(\bigcup_{k=1}^{mK}\left\{|S_h^{\Gcal,k}(s,a)| \ge 3\log \frac{2SAHmK}{\delta}    + \sqrt{2T_h^{\Gcal,k}(s,a)\log\frac{2SAHmK}{\delta}}\right\}
\right) \\
\le & \frac{\delta}{SAH}  
\end{align}
By union bound, with probability at least $1-\delta$, for all $(s,a,h,k) \in \Scal\x\Acal\x[H]\x[K]$:
\begin{equation}
\left|
\sum_{j\in\Gcal} N_h^{j,k}(s,a)- 
|\Gcal|\sum_{t=1}^k d_h^{\hat \pi^t}(s,a)
\right|
< 
3\log \frac{2SAHmK}{\delta} 
+ \sqrt{2|\Gcal|\sum_{t=1}^k d_h^{\hat \pi^{t}}(s,a)\log\frac{2SAHmK}{\delta}}    
\end{equation}
\end{proof}

\section{Proof of \prettyref{thm:off-line VI}\label{sec:proof offline vi}}
By the following lemma, we can upper bound the suboptimality by the cumulative bonuses:
\begin{lemma}\label{lem:subopt pess VI}[Suboptimality for Pessimistic Value Iteration, Lemma 3.2 in \citep{zhang2021corruption} and Theorem 4.2 in \citep{jin2021pessimism}]\label{lem:pess_opt}
	Under the event $\mathcal{E}$ that the $\Gamma_h(s,a)$ satisfies the required property of bounding the Bellman error, i.e.
	$
		|\hat Q_h(s,a)-(\Bell\hat V_{h+1})(s,a)| \leq \Gamma_h(s,a), \forall h\in[H], (s,a) \in \mathcal{S} \times \mathcal{A}
	$
then against any comparator policy $\tilde\pi$, it achieves
	\begin{align}\label{eq:pess_opt}
		\subopt(\hat\pi,\tilde\pi)\leq 2\sum_{h=1}^H\EE_{d^{\tilde\pi}}[\Gamma_h(s_h,a_h)]
	\end{align}
\end{lemma}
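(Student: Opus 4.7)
The plan is to combine an extended value-difference identity with the pessimism encoded in the algorithm. First I would derive, for any function pair $\{\hat Q_h,\hat V_h\}_{h=1}^H$ and any deterministic policy $\pi$, the identity
\begin{equation*}
\hat V_1(s_1)-V_1^{\pi}(s_1)\;=\;\sum_{h=1}^H \EE_{\pi}\!\bigl[\hat V_h(s_h)-\hat Q_h(s_h,\pi_h(s_h))\bigr]\;-\;\sum_{h=1}^H \EE_{\pi}\!\bigl[\iota_h(s_h,\pi_h(s_h))\bigr],
\end{equation*}
where $\iota_h(s,a):=(\Bell_h\hat V_{h+1})(s,a)-\hat Q_h(s,a)$. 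This comes from telescoping the one-step recursion obtained by adding and subtracting $\hat Q_h(s,\pi_h(s))$ and $(\Bell_h\hat V_{h+1})(s,\pi_h(s))$ inside $\hat V_h(s)-V_h^{\pi}(s)$, with base case $\hat V_{H+1}=V_{H+1}^{\pi}=0$.

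Next I would instantiate this identity once at $\pi=\hat\pi$ and once at $\pi=\tilde\pi$ and subtract. Because $\hat\pi_h$ is greedy with respect to $\hat Q_h$, the greediness-gap sum vanishes along $\hat\pi$ (we have $\hat V_h(s)=\hat Q_h(s,\hat\pi_h(s))$); along $\tilde\pi$ the same gap is nonnegative since $\hat V_h(s)=\max_a\hat Q_h(s,a)\ge \hat Q_h(s,\tilde\pi_h(s))$, and it appears with a minus sign after subtraction, so it can be dropped when taking an upper bound. The residue is
\begin{equation*}
\subopt(\hat\pi,\tilde\pi)\;\le\;\sum_{h=1}^H\EE_{d^{\tilde\pi}}\!\bigl[\iota_h(s_h,a_h)\bigr]\;-\;\sum_{h=1}^H\EE_{d^{\hat\pi}}\!\bigl[\iota_h(s_h,a_h)\bigr].
\end{equation*}

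The final step is to use event $\mathcal{E}$ together with the pessimistic update $\bar Q_h=\hat{\Bell}_h\hat V_{h+1}-\Gamma_h$ followed by the clipping in \pref{eq:pevi 1} to show $\iota_h(s,a)\in[0,2\Gamma_h(s,a)]$ pointwise. Pessimism ($\iota_h\ge 0$) follows because $\mathcal{E}$ yields $\hat{\Bell}_h\hat V_{h+1}\le \Bell_h\hat V_{h+1}+\Gamma_h$, so $\bar Q_h\le \Bell_h\hat V_{h+1}$; clipping can only tighten this inequality because $\Bell_h\hat V_{h+1}\in[0,H-h+1]$. The upper bound $\iota_h\le 2\Gamma_h$ combines the other half of $\mathcal{E}$ with the already-subtracted bonus. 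Plugged into the previous display, the $\EE_{d^{\hat\pi}}$ sum is nonnegative and enters with a minus sign, so it can be dropped, while the $\EE_{d^{\tilde\pi}}$ sum is bounded by $2\sum_{h}\EE_{d^{\tilde\pi}}[\Gamma_h(s_h,a_h)]$, which is exactly the claim. The one subtle point worth care is the two-sided accounting that converts the one-sided bonus subtraction into the interval $[0,2\Gamma_h]$ (in particular, checking that neither the $\max(\cdot,0)$ nor the $\min(\cdot,H-h+1)$ clipping can enlarge $\iota_h$); everything else is routine telescoping and monotonicity.
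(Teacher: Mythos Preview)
Your argument is correct and is precisely the standard proof from \citet{jin2021pessimism}: the extended value-difference (telescoping) identity, the greediness/comparator split, and the two-sided control $\iota_h\in[0,2\Gamma_h]$ via pessimism and the bonus. The paper does not supply its own proof of this lemma---it simply imports the result from \citet{zhang2021corruption} and \citet{jin2021pessimism}---so there is nothing to compare against beyond noting that your reconstruction matches the cited source.

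One small point of bookkeeping: the lemma as stated in the paper phrases event $\mathcal{E}$ as $|\hat Q_h-(\Bell_h\hat V_{h+1})|\le\Gamma_h$, whereas your proof (correctly, and consistently with how the lemma is \emph{applied} in the proof of \pref{thm:off-line VI}) treats $\mathcal{E}$ as the model-error event $|(\hat\Bell_h\hat V_{h+1})-(\Bell_h\hat V_{h+1})|\le\Gamma_h$ and then uses the pessimistic update $\bar Q_h=\hat\Bell_h\hat V_{h+1}-\Gamma_h$ plus clipping to derive $0\le\iota_h\le 2\Gamma_h$. This is the right reading; the literal condition on $\hat Q_h$ alone would not yield $\iota_h\ge 0$ without invoking the algorithm's construction, and indeed the paper verifies the model-error form before invoking the lemma. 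Your handling of the clipping (checking that neither $\max(\cdot,0)$ nor $\min(\cdot,H{-}h{+}1)$ can enlarge $\iota_h$ under $\mathcal{E}$) is the only place that needs care, and you flagged it appropriately.
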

Recall that for all $(s,a,h)\in\Scal\x\Acal\x[H]$,
\begin{equation}
N_h^j(s,a) := \sum_{k \in [K_j]}\one\left\{(s_h^{j,k}, a_h^{j,k}) = (s, a)\right\}, \quad \forall j \in [m].
\end{equation}
and  
$N_h^{\operatorname{cut}}(s,a)$ is the $(2\alpha m +1)$-largest among $\left\{N_h^j(s,a)\right\}_{j\in[m]}$.
$N_h^{\Gcal, \cut_1}(s,a)$ is
the $(\alpha m+1)$-th largest of $\left\{N_h^j(s,a)\right\}_{j\in\Gcal}$
and $N_h^{\Gcal, \cut_2}(s,a)$ is the $(2\alpha m+1)$-th largest of $\left\{N_h^j(s,a)\right\}_{j\in\Gcal}$.
The bonuses are given by:
\begin{itemize}[leftmargin=5mm,itemsep=0pt]
\item If $N_h^{\operatorname{cut}}(s,a)=0$
\begin{equation}
\Gamma_h(s,a) = H - h + 1;    
\end{equation}
\item If $N_h^{\operatorname{cut}}(s,a)>0$
\begin{align}
\Gamma_h(s,a) := & 
\frac{2{(H-h+1)}}{\sqrt{\sum_{j \in [m]}\tilde N_h^{j}(s,a)}}
\sqrt{2\log\frac{2SAH}{\delta}} \\
&+ \frac{8\alpha m\sqrt{N_h^{\operatorname{cut}}(s,a)}}{\sum_{j \in [m]}\tilde N_h^{j}(s,a)}
{(H-h+1)}\sqrt{2\log\frac{2mSAH}{\delta}} 
\end{align}
Where 
\begin{equation}
\tilde N_h^{j}(s,a) = \max\left(N_h^{\cut}(s,a), N_h^{j}(s,a)\right).    
\end{equation}
\end{itemize}
\begin{proof}[Proof of \prettyref{thm:off-line VI}]
We first show that with probability at least $1-\delta$,
\begin{equation}\label{eq:pess bd}
|(\hat\Bell_h\hat V_{h+1})(s,a)-(\Bell_h\hat V_{h+1})(s,a)| \leq \Gamma_h(s,a), \quad
\forall (s,a) \in \mathcal{S}\times \mathcal{A}, \forall h\in[H]
\end{equation}
where $\Gamma_h(s,a)$ is defined in \prettyref{eq:pess_opt}.
\begin{itemize}[leftmargin=5mm,itemsep=0pt]
\item if $N_h^{\cut}(s,a) = 0$, by definition, $(\hat \Bell_h\hat V_{h+1})(s,a) = 0$. By definition of $\hat V_h$ and $\Bell_h$, $(\Bell_h\hat V_{h+1})(s,a) \in [0, H-h+1]$, thus \prettyref{eq:pess bd} holds;
\item if $N_h^{\cut}(s,a) > 0$, for any fixed $h \in [H]$, $(s,a) \in \mathcal{S} \times \mathcal{A}$, $f: \mathcal{S} \to [0,H]$.
Because $(\hat\Bell_h f)(s,a)$ is bounded and thus sub-Gaussian, we can use \prettyref{thm:weighted int} to upper bound $|(\hat\Bell_h f)(s,a)-(\Bell_h f)(s,a)|$:
\begin{align}
\PP\left( \left|(\hat\Bell_h f)(s,a)-(\Bell_h f)(s,a)\right|
\ge \Gamma_h(s,a) \right) \le \frac{\delta}{HSA}
\end{align}

Thus 
\begin{align}
&\PP\left(|(\hat\Bell_h\hat V_{h+1})(s,a)-(\Bell_h\hat V_{h+1})(s,a)| 
\ge \Gamma_h(s,a)\right) \\
= & 
\int_{[0,H]^\Scal} \PP\left( \left.|(\hat\Bell_h\hat V_{h+1})(s,a)-(\Bell_h\hat V_{h+1})(s,a)| 
\ge \Gamma_h(s,a) \right| \hat V_{h+1}(\cdot)\right) d \PP(\hat V_{h+1}(\cdot)) \\
\le & \frac{\delta}{HSA}
\end{align}
By union bound, with probability at least $1-\delta$,
\begin{equation}
|(\hat\Bell_h\hat V_{h+1})(s,a)-(\Bell_h\hat V_{h+1})(s,a)| \leq \Gamma_h(s,a), \quad
\forall (s,a) \in \mathcal{S}\times \mathcal{A}, \forall h\in[H]
\end{equation}
\end{itemize}


Then, by \prettyref{lem:subopt pess VI}, with probability at least $1-\delta$,
\begin{align}\label{eq:pess_opt in proof}
&\subopt(\hat\pi,\tilde\pi)\leq 2\sum_{h=1}^H\EE_{d^{\tilde\pi}}[\Gamma_h(s_h,a_h)] \\
=& 2\sum_{h=1}^H\EE_{d^{\tilde\pi}}\left[\Gamma_h(s_h,a_h)\one{\left\{N_h^{\Gcal,\cut_2}(s_h,a_h) = 0\right\}}\right] \\
&+ 2\sum_{h=1}^H\EE_{d^{\tilde\pi}}\left[\Gamma_h(s_h,a_h)\one\left\{N_h^{\Gcal,\cut_2}(s_h,a_h) > 0\right\}\right] \\
=:&\mathscr{A}_1 + \mathscr{A}_2.
\end{align}
By definition of $p^{\Gcal,0}$ in \pref{def:cover},
\begin{equation}
\mathscr{A}_1 \le 2H p^{\Gcal,0}   
\end{equation}
\begin{align}
\mathscr{A}_2 = & 2\sum_{h=1}^H\EE_{d^{\tilde\pi}}\left[\Gamma_h(s_h,a_h)\one\left\{N_h^{\Gcal,\cut_2}(s_h,a_h) > 0\right\}\right]  \\
\le & 2\sum_{h=1}^H\EE_{d^{\tilde\pi}}\left[\left(\frac{2{(H-h+1)}}{\sqrt{\sum_{j \in \Gcal}\tilde N_h^{j}(s,a)}}
\sqrt{2\log\frac{2SAH}{\delta}}\right.\right. \\
&\left.\left.+ \frac{8\alpha m\sqrt{N_h^{\operatorname{cut}}(s,a)}}{\sum_{j \in \Gcal}\tilde N_h^{j}(s,a)}
{(H-h+1)}\sqrt{2\log\frac{2mSAH}{\delta}} \right)\one\left\{N_h^{\Gcal,\cut_2}(s_h,a_h) > 0\right\}\right].
\end{align}
By the definition of $\kappa_{\text{even}}$ in \pref{def:balance}: for $a = \tilde \pi(s)$,
\begin{align}
\frac{1}{\sqrt{\sum_{j \in \Gcal}\tilde N_h^{j}(s,a)}}
= & \frac{\sqrt{\sum_{j \in \Gcal}N_h^{j}(s,a)}}{\sqrt{\sum_{j \in \Gcal}\tilde N_h^{j}(s,a)}}
\frac{1}{\sqrt{\sum_{j \in \Gcal}N_h^{j}(s,a)}} \\
\le & \frac{\sqrt{\sum_{j \in \Gcal}N_h^{j}(s,a)}}{\sqrt{\sum_{j \in \Gcal}\tilde N_h^{j,\cut_2}(s,a)}}
\frac{1}{\sqrt{\sum_{j \in \Gcal}N_h^{j}(s,a)}}\\
\le &\frac{\sqrt{\kappa_{\text{even}}}}{\sqrt{\sum_{j \in \Gcal}N_h^{j}(s,a)}}
\end{align}
and 
\begin{align}
\frac{ m\sqrt{N_h^{\operatorname{cut}}(s,a)}}{\sum_{j \in \Gcal}\tilde N_h^{j}(s,a)}  
&\le\frac{1}{\sqrt{1-\alpha}}
\sqrt{
\frac{\sum_{j \in \Gcal}N_h^{j}(s,a)}{\sum_{j \in \Gcal}\tilde N_h^{j}(s,a)} 
\frac{ m(1-\alpha){N_h^{\operatorname{cut}}(s,a)}}{\sum_{j \in \Gcal}\tilde N_h^{j}(s,a)} 
}
\frac{\sqrt{m}}{\sqrt{\sum_{j \in \Gcal}N_h^{j}(s,a)}}\\
&\le
\sqrt{
\frac{\sum_{j \in \Gcal}N_h^{j}(s,a)}{\sum_{j \in \Gcal}\tilde N_h^{j,\cut_2}(s,a)} 
\frac{ m(1-\alpha){N_h^{\Gcal,\cut_1}(s,a)}}{\sum_{j \in \Gcal}\tilde N_h^{j,\cut_2}(s,a)} 
}
\frac{\sqrt{2m}}{\sqrt{\sum_{j \in \Gcal}N_h^{j}(s,a)}} \\
\le & \frac{\sqrt{2\kappa_{\text{even}}m}}{\sqrt{\sum_{j \in \Gcal}N_h^{j}(s,a)}}
\end{align}
Thus
\begin{align}
\mathscr{A}_2 \le &
2\sum_{h=1}^H\EE_{d^{\tilde\pi}}\Bigg[\left(\frac{2}{\sqrt{\sum_{j \in \Gcal}\tilde N_h^{j}(s,a)}}
+ \frac{8\alpha m\sqrt{N_h^{\operatorname{cut}}(s,a)}}{\sum_{j \in \Gcal}\tilde N_h^{j}(s,a)}
 \right)H\sqrt{2\log\frac{2mSAH}{\delta}}
\\
&
\one\left\{N_h^{\Gcal,\cut_2}(s_h,a_h) > 0\right\}\Bigg] \\
\le &
2\sum_{h=1}^H\EE_{d^{\tilde\pi}}
\left[
\frac{\left(2+8\alpha\sqrt{2m}\right)\sqrt{\kappa_{\text{even}}}}{\sqrt{\sum_{j \in \Gcal}N_h^{j}(s,a)}}
H\sqrt{2\log\frac{2mSAH}{\delta}}
\one\left\{N_h^{\Gcal,\cut_2}(s_h,a_h) > 0\right\}
\right]\\
\le &
2\left(2+8\alpha\sqrt{2m}\right)\sqrt{\kappa_{\text{even}}}
H\sqrt{2\log\frac{2mSAH}{\delta}}
\sum_{h=1}^H\EE_{d^{\tilde\pi}}
\left[
\frac{\one\left\{N_h^{\Gcal,\cut_2}(s_h,a_h) > 0\right\}}{\sqrt{\sum_{j \in \Gcal}N_h^{j}(s,a)}}
\right]
\end{align}
Recall that $\Ccal_h = \left\{s | N_h^{\Gcal, \cut_2}(s,\tilde \pi(s)) >0\right\} $.
By Cauchy–Schwarz inequality and the definition of $\kappa$ in \pref{def:rel con num},
\begin{align}
\EE_{d^{\tilde\pi}}
\left[
\frac{\one\left\{N_h^{\Gcal,\cut_2}(s_h,a_h) > 0\right\}}{\sqrt{\sum_{j \in \Gcal}N_h^{j}(s,a)}}
\right]
\le &
\sqrt{
\EE_{d^{\tilde\pi}}
\left[
\frac{\one\left\{N_h^{\Gcal,\cut_2}(s_h,a_h) > 0\right\}}{{\sum_{j \in \Gcal}N_h^{j}(s,a)}}
\right]} \\
= &
\sqrt{
\sum_{s \in \Ccal_h}
\frac{d_h^{\tilde \pi}(s)}{{\sum_{j \in \Gcal}N_h^{j}(s,a)}}
}\\
= &
\sqrt{
\sum_{s \in \Ccal_h}
\frac{d_h^{\tilde \pi}(s)}{{\sum_{j \in \Gcal}N_h^{j}(s,a)}/\sum_{j\in\Gcal}K_j}
\frac{1}{\sum_{j\in\Gcal}K_j}
}\\
\le & \sqrt{
\sum_{s \in \Ccal_h}
\frac{\kappa}{\sum_{j\in\Gcal}K_j}
}
\le 
\sqrt{
\frac{\kappa S}{\sum_{j\in\Gcal}K_j}
}
\end{align}
In conclusion,
\begin{align}
\subopt(\hat \pi, \tilde \pi)
\le & \mathscr{A}_1 + \mathscr{A}_2\\
\le & 2H p^{\Gcal,0} 
+ 2\left(2+8\alpha\sqrt{2m}\right)\frac{\sqrt{\kappa\kappa_{\text{even}}S}}{\sqrt{\sum_{j\in\Gcal}K_j}}
H^2\sqrt{2\log\frac{2mSAH}{\delta}} \\
= & 2H p^{\Gcal,0}
+ O\left(
\sqrt{\kappa\kappa_{\text{even}}}H^2\sqrt{S}
\frac{1+\alpha\sqrt{m}}{\sqrt{\sum_{j\in\Gcal}K_j}}
\sqrt{\log\frac{mSAH}{\delta}}
\right)
\end{align}

\end{proof}

\section{Useful inequalities}
\begin{theorem}[Bernstein type of bound for martingale, Theorem 1.6 of \citep{freedman1975tail}]\label{thm:bernstein martingale}
Let $(\Omega, \mathcal{F}, P)$ be a probability triple.
Let $\mathcal{F}_0\subset\mathcal{F}_1\subset \cdots$ be an increasing sequence of sub-$\sigma$-fields of $\mathcal{F}$. 
Let $X_1, X_2, \ldots$ be random variables on $(\Omega, \mathcal{F}, P)$, such that $X_n$ is $\mathcal{F}_n$ measurable. 
Let $V_n = \VV\left[X_n | \mathcal{F}_{n-1}\right]$.
Assume $|X_n| \le 1$ and $\EE[X_n | \mathcal{F}_{n-1}]$ = 0.
Let
\begin{align}
&S_n = X_1 + \cdots + X_n\\
&T_n = V_1 + \cdots + V_n,
\end{align}
where $S_0 = T_0 = 0$.
Then, for any $a>0$, $b>0$,
\begin{equation}
\PP\left(|S_n| \ge a \mbox{  and  } T_n \le b \mbox{  for some $n$}   \right) 
\le 2\exp\left(-\frac{a^2}{2(a+b)}\right).
\end{equation}
\end{theorem}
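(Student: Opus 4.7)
My plan is to follow the classical exponential supermartingale (Chernoff) method due to Freedman. I will first prove the one-sided statement $\PP(S_n\ge a\text{ and }T_n\le b\text{ for some }n)\le \exp(-a^2/(2(a+b)))$ and then apply the same argument to $\{-X_n\}$ and take a union bound to get the factor of $2$. Throughout I work with a fixed parameter $\lambda>0$ to be tuned at the end.

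The first step is a one-step exponential moment bound. Using the elementary inequality $e^x\le 1+x+(e^\lambda-1-\lambda)x^2/\lambda^2$ valid for $x\le\lambda$ (or equivalently $e^{\lambda x}\le 1+\lambda x+(e^\lambda-1-\lambda)x^2$ for $x\le 1$), together with $|X_n|\le 1$ and $\EE[X_n\mid\mathcal{F}_{n-1}]=0$, I obtain
\begin{equation}
\EE\!\left[e^{\lambda X_n}\,\big|\,\mathcal{F}_{n-1}\right]\le 1+(e^\lambda-1-\lambda)V_n\le \exp\!\bigl((e^\lambda-1-\lambda)V_n\bigr).
\end{equation}
Setting $\psi(\lambda):=e^\lambda-1-\lambda\ge 0$, this yields $\EE[\exp(\lambda X_n-\psi(\lambda)V_n)\mid \mathcal{F}_{n-1}]\le 1$.

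The second step is to form the process $M_n:=\exp(\lambda S_n-\psi(\lambda)T_n)$ with $M_0=1$. The previous display shows $\EE[M_n\mid\mathcal{F}_{n-1}]\le M_{n-1}$, so $M_n$ is a nonnegative $(\mathcal{F}_n)$-supermartingale. Introduce the stopping time $\tau:=\inf\{n\ge 1:S_n\ge a\text{ and }T_n\le b\}$ (with $\tau=\infty$ if no such $n$ exists). On the event $\{\tau<\infty\}$ we have $S_\tau\ge a$ and $T_\tau\le b$, so
\begin{equation}
M_\tau\ge \exp\!\bigl(\lambda a-\psi(\lambda)b\bigr)\quad\text{on }\{\tau<\infty\}.
\end{equation}
Applying optional stopping to the bounded stopping time $\tau\wedge N$, then taking $N\to\infty$ together with Fatou's lemma and nonnegativity, gives $\EE[M_\tau\mathbf{1}\{\tau<\infty\}]\le 1$, so by Markov
\begin{equation}
\PP(\tau<\infty)\le \exp\!\bigl(-\lambda a+\psi(\lambda)b\bigr).
\end{equation}

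The final step is to optimize $\lambda>0$. Choosing $\lambda=\log(1+a/b)$ and using the standard Bennett-to-Bernstein simplification $(1+u)\log(1+u)-u\ge u^2/(2+2u/3)$, one obtains the exponent $-a^2/(2(a+b))$ after a short calculation, yielding
\begin{equation}
\PP\bigl(S_n\ge a\text{ and }T_n\le b\text{ for some }n\bigr)\le \exp\!\left(-\tfrac{a^2}{2(a+b)}\right).
\end{equation}
Applying the same argument to $-X_n$ (which satisfies the same hypotheses with the same $V_n$) and union-bounding yields the two-sided claim. The main technical care is in the supermartingale step: because $n$ ranges over all natural numbers, I must apply optional stopping at $\tau\wedge N$ and pass to the limit rather than invoke optional stopping directly at $\tau$; everything else is the sharp exponential bound on $\EE[e^{\lambda X_n}\mid\mathcal{F}_{n-1}]$ combined with Markov and a one-variable optimization.
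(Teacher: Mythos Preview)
The paper does not prove this theorem; it merely quotes it as Theorem~1.6 of \citep{freedman1975tail} in the ``Useful inequalities'' appendix and then invokes it to derive Corollary~\ref{corr:bern var}. So there is no paper proof to compare against.

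Your proposal is the classical Freedman exponential-supermartingale argument and is correct. The one-step bound $\EE[e^{\lambda X_n}\mid\mathcal{F}_{n-1}]\le e^{\psi(\lambda)V_n}$ is valid because the function $x\mapsto (e^{\lambda x}-1-\lambda x)/x^2$ is increasing on $(-\infty,1]$, so with $|X_n|\le 1$ and $\EE[X_n\mid\mathcal{F}_{n-1}]=0$ one gets exactly your inequality with $V_n=\EE[X_n^2\mid\mathcal{F}_{n-1}]$. The supermartingale/optional-stopping step is handled properly (stopping at $\tau\wedge N$, Fatou). At the optimization step, the Bennett form yields the exponent $-b[(1+u)\log(1+u)-u]$ with $u=a/b$, and since $(1+u)\log(1+u)-u\ge u^2/(2+2u/3)\ge u^2/(2+2u)$ you indeed recover the weaker but stated Freedman exponent $-a^2/(2(a+b))$. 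The two-sided claim then follows by symmetry and a union bound, exactly as you say.
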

By union bound and partition, we can get a more useful version of \pref{thm:bernstein martingale}.

We first present a result, which shows:
given,
\begin{equation}
\PP\left(X \ge t, Y\le t \right) \le \delta(t)
\end{equation}
We can bound 
$
\PP\left(X \ge Y \right)    
$
up to some error.
\begin{lemma}\label{lem:un bd prob}
Let $\{A_n\}_{n=1}^N$ and $\{B_n\}_{n=1}^N$ be two sequences of random variables. 
We don't make any assumption about the independence. 
Assume 
\begin{itemize}
\item $\forall n$, $0 \le B_n \le n M$ almost surely;
\item $\forall \delta > 0$, $f_\delta : \RR_+ \mapsto \RR_+$, $f_{\delta}(\cdot)$ monotonic increasing, 
\end{itemize}
If for all $t > 0$,
\begin{equation}\label{eq:prob under t}
\P\left(\bigcup_{n=1}^N \left\{|A_n| \ge f_{\delta}(t), B_n \le t\right\}\right) \le \delta
\end{equation}
Then for any $\epsilon > 0$,
\begin{equation}
\P\left(\bigcup_{n=1}^N \left\{|A_n| \ge f_{\delta}(B_n+ \epsilon) \right\}\right) \le NM\lceil 1/ \epsilon \rceil\delta
\end{equation}
\end{lemma}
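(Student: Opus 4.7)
}
The plan is a straightforward peeling argument on the range of $B_n$. Since $B_n \le nM \le NM$ almost surely for all $n \in [N]$, the (random) value $B_n$ always lies in $[0, NM]$, and we can discretize this interval at scale $\epsilon$. Set $K := \lceil NM/\epsilon \rceil$ and, for $k = 0, 1, \ldots, K-1$, define the ``shell'' event
\[
\mathcal{S}_k \;:=\; \bigl\{\exists\, n \in [N]:\; |A_n| \ge f_\delta(B_n + \epsilon) \text{ and } B_n \in [k\epsilon,\, (k+1)\epsilon]\bigr\}.
\]
Because every realization of $B_n$ lies in $[0, NM]$, the target event $\bigcup_n\{|A_n|\ge f_\delta(B_n+\epsilon)\}$ is covered by $\bigcup_{k=0}^{K-1} \mathcal{S}_k$.

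The key observation is monotonicity: on $\mathcal{S}_k$, we have $B_n + \epsilon \ge (k+1)\epsilon$, so by the assumed monotonicity of $f_\delta$,
\[
|A_n| \;\ge\; f_\delta(B_n + \epsilon) \;\ge\; f_\delta\bigl((k+1)\epsilon\bigr),
\]
while at the same time $B_n \le (k+1)\epsilon$. Hence $\mathcal{S}_k$ is contained in the event $\bigcup_n \{|A_n| \ge f_\delta(t), B_n \le t\}$ with the deterministic choice $t = (k+1)\epsilon > 0$. Applying hypothesis \pref{eq:prob under t} then gives $\P(\mathcal{S}_k) \le \delta$ for each $k$.

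Finishing by a union bound,
\[
\P\Bigl(\bigcup_{n=1}^N \{|A_n| \ge f_\delta(B_n + \epsilon)\}\Bigr) \;\le\; \sum_{k=0}^{K-1} \P(\mathcal{S}_k) \;\le\; K\delta \;=\; \lceil NM/\epsilon \rceil\,\delta \;\le\; NM\lceil 1/\epsilon \rceil\,\delta,
\]
where the last inequality uses $\lceil NM/\epsilon\rceil \le NM\lceil 1/\epsilon\rceil$ (integer $NM$ case; the general case follows since $NM \ge 1$ or else the bound is trivial). There is no real obstacle: the only subtlety is making sure the discretization is applied using the upper endpoint $(k+1)\epsilon$ so that both the threshold inside $f_\delta$ and the constraint $B_n \le t$ match the hypothesis simultaneously, which is precisely why the extra shift by $\epsilon$ in $f_\delta(B_n+\epsilon)$ appears in the conclusion.
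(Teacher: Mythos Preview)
Your proposal is correct and follows essentially the same peeling/discretization argument as the paper: both cover the range of $B_n$ by a grid of width at most $\epsilon$, use monotonicity of $f_\delta$ to replace the random threshold by the upper grid point, and finish with a union bound over grid cells. The only cosmetic difference is that the paper uses grid width $1/\lceil 1/\epsilon\rceil$ (yielding $NM\lceil 1/\epsilon\rceil$ cells directly), whereas you use width $\epsilon$ and then bound $\lceil NM/\epsilon\rceil \le NM\lceil 1/\epsilon\rceil$; both are fine for the intended application where $M=1$.
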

\begin{proof}
See proof in \pref{sec:un bd prob}.
\end{proof}

\begin{corollary}\label{corr:bern var}
Under the assumption of \prettyref{thm:bernstein martingale}, suppose 
$X_n$ terminate at $n = N$.
Then, for all $0 < \delta < {2}{\exp(-2)}$, 
\begin{align}
& 
\PP\left(\bigcup_{n=1}^N\left\{|S_n| \ge 3\log \frac{2N}{\delta}    + \sqrt{2T_n\log\frac{2N}{\delta}}\right\}
\right)
\le \delta
\end{align}
\end{corollary}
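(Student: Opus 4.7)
} The plan is to combine Freedman's inequality (\pref{thm:bernstein martingale}) with the peeling lemma \pref{lem:un bd prob} in order to replace the deterministic variance bound $b$ in Freedman's tail by the actual (random) quadratic variation $T_n$. Freedman's theorem directly gives a tail bound on the event $\{|S_n|\ge a \text{ and } T_n\le b\}$, but what we want is a bound phrased purely in terms of $S_n$ and $T_n$ with no auxiliary $b$. The peeling lemma is precisely the tool that converts the former into the latter, at the cost of a union bound over a discretization of $b$.

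First I would choose $a$ as a function of $b$ so that Freedman's exponential becomes at most $\delta'/2$ for the appropriate $\delta'$. Setting $L=\log(2/\delta')$ and taking $a = 2L+\sqrt{2bL}$, a short algebraic check shows $a^2 - 2(a+b)L = 2L\sqrt{2bL}\ge 0$, hence $\exp(-a^2/(2(a+b)))\le \exp(-L)=\delta'/2$, and Freedman yields
\begin{equation}
\PP\!\left(\bigcup_{n=1}^N\{|S_n|\ge 2L+\sqrt{2bL},\; T_n\le b\}\right)\le \delta'.
\end{equation}
Define $f_{\delta'}(t):=2\log(2/\delta')+\sqrt{2t\log(2/\delta')}$, which is monotone increasing in $t$, so the hypothesis of \pref{lem:un bd prob} is satisfied with $A_n=S_n$ and $B_n=T_n$. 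Since $|X_n|\le 1$ implies $V_n\le 1$ and hence $T_n\le n$, we may take $M=1$ in the lemma.

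Next I would invoke \pref{lem:un bd prob} with $\epsilon=1$, which gives
\begin{equation}
\PP\!\left(\bigcup_{n=1}^N\{|S_n|\ge f_{\delta'}(T_n+1)\}\right)\le N\delta'.
\end{equation}
Choosing $\delta'=\delta/N$ makes the total failure probability exactly $\delta$, and the threshold becomes $f_{\delta/N}(T_n+1)=2\log(2N/\delta)+\sqrt{2(T_n+1)\log(2N/\delta)}$. Using $\sqrt{x+y}\le \sqrt{x}+\sqrt{y}$, this is at most $2\log(2N/\delta)+\sqrt{2T_n\log(2N/\delta)}+\sqrt{2\log(2N/\delta)}$.

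Finally, to arrive at the clean form $3\log(2N/\delta)+\sqrt{2T_n\log(2N/\delta)}$ stated in the corollary, I would use the hypothesis $\delta<2/e^{2}$: since $N\ge 1$, this gives $\log(2N/\delta)\ge 2$, hence $\sqrt{2\log(2N/\delta)}\le \log(2N/\delta)$. Absorbing the $\sqrt{2\log(2N/\delta)}$ term into an extra factor of $\log(2N/\delta)$ yields the advertised bound. The only delicate point in the argument, and the main obstacle, is the bookkeeping of the peeling step: one must simultaneously pick $\epsilon$, rescale $\delta'$ to $\delta/N$, and verify that the slack $\sqrt{2\log(2N/\delta)}$ incurred by replacing $T_n+1$ by $T_n$ is indeed dominated by the constant-order term, which is where the restriction $\delta<2e^{-2}$ enters.
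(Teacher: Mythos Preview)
Your proposal is correct and follows essentially the same route as the paper: apply Freedman's inequality for each fixed variance level $b$, then invoke the peeling lemma (\pref{lem:un bd prob}) with $A_n=S_n$, $B_n=T_n$, $M=1$ to replace $b$ by the random $T_n$, and finally absorb the $\epsilon$-slack using the hypothesis $\delta<2e^{-2}$. The only cosmetic differences are that the paper solves exactly for $a$ (obtaining $a=\log\tfrac{2N}{\delta}+\sqrt{\log^2\tfrac{2N}{\delta}+2b\log\tfrac{2N}{\delta}}$) and takes $\epsilon=\tfrac12\log\tfrac{2N}{\delta}$, whereas you use the cleaner upper bound $a=2L+\sqrt{2bL}$ and $\epsilon=1$; both choices lead to the same final constant $3$ and both require $\log(2N/\delta)\ge 2$ at the last step.
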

\begin{proof}[Proof of \prettyref{corr:bern var}]
Let 
$\frac{\delta}{N} = 2\exp\left(-\frac{a^2}{2(a+b)}\right)$
then 
\begin{equation}
a = \log \frac{2N}{\delta}    + \sqrt{\log^2\frac{2N}{\delta} + 2b\log\frac{2N}{\delta}}
\end{equation}
by \prettyref{thm:bernstein martingale}, For all $b > 0$,
\begin{equation}
\PP\left(|S_n| \ge \log \frac{2N}{\delta}    + \sqrt{\log^2\frac{2N}{\delta} + 2b\log\frac{2N}{\delta}},
\mbox{ and }
T_n \le b \mbox{  for some $n$}
\right)    
\le \delta/N
\end{equation}
In \prettyref{lem:un bd prob}, let:
\begin{itemize}
\item $A_n = S_n$, $B_n = T_n$, $M=1$
\item $\epsilon = \frac{1}{2}\log\frac{2N}{\delta}$
\item $f_\delta(x) = \log \frac{2N}{\delta}    + \sqrt{\log^2\frac{2N}{\delta} + 2x\log\frac{2N}{\delta}}$
\end{itemize}
Because $0 < \delta < 2\exp(-2)$, $\epsilon \ge 1$.
then, we get:
\begin{align}
& \PP\left(\bigcup_{n=1}^N\left\{|S_n| \ge 3\log \frac{2N}{\delta}    + \sqrt{2T_n\log\frac{2N}{\delta}}\right\}
\right) \\
\le & \PP\left(\bigcup_{n=1}^N\left\{|S_n| \ge \log \frac{2N}{\delta}    + \sqrt{2\log^2\frac{2N}{\delta} + 2T_n\log\frac{2N}{\delta}}\right\}
\right)  \\
\le & N\lceil 1/ \epsilon \rceil\frac{\delta}{N} \le \delta
\end{align}
\end{proof}

\subsection{Proof for \prettyref{lem:un bd prob} \label{sec:un bd prob}}
\begin{proof}[Proof of \prettyref{lem:un bd prob}]
For discrete random variable, we can just conditioning on each possible value of $B_n$ and use a union bound. 
Here, because $B_n$ can be continuous random variable, we divide the range of $B_n$ into intervals.
And upper bound the target by law of total probability.

For all $n$, let:
\begin{equation}
0 < \frac{1}{\lceil 1/ \epsilon \rceil} < \frac{2}{\lceil 1/ \epsilon \rceil} 
< \cdots < 
\frac{nM\lceil 1/ \epsilon \rceil}{\lceil 1/ \epsilon \rceil} = nM
\end{equation}
Be a partition of interval $[0, nM]$.
Let $I_i := \left[\frac{i-1}{\lceil 1/ \epsilon \rceil}, \frac{i}{\lceil 1/ \epsilon \rceil}\right],i = 1, \ldots, nM\lceil 1/ \epsilon \rceil$ be a set of intervals.
Note that, $\bigcup_{i = 1}^{nM\lceil 1/ \epsilon \rceil}I_i = [0,nM]$.

Then
\begin{align}
&\bigcup_{n=1}^N \left\{|A_n| \ge f_{\delta}(B_n + \epsilon)\right\} =     
\bigcup_{n=1}^N \bigcup_{i=1}^{nM\lceil 1/ \epsilon \rceil}\left\{|A_n| \ge f_{\delta}(B_n+ \epsilon) , B_n \in I_i\right\}\\
= & \bigcup_{n=1}^N \bigcup_{i=1}^{nM\lceil 1/ \epsilon \rceil}\left\{|A_n| \ge f_{\delta}(B_n+ \epsilon) , 
\frac{i-1}{\lceil 1/ \epsilon \rceil} \le B_n 
\le \frac{i}{\lceil 1/ \epsilon \rceil}\right\} \\
\subseteq & \bigcup_{n=1}^N \bigcup_{i=1}^{nM\lceil 1/ \epsilon \rceil}\left\{|A_n| \ge f_{\delta}(\frac{i}{\lceil 1/ \epsilon \rceil}) , 
B_n \le \frac{i}{\lceil 1/ \epsilon \rceil}\right\} \\
\subseteq & \bigcup_{n=1}^N \bigcup_{i=1}^{NM\lceil 1/ \epsilon \rceil}\left\{|A_n| \ge f_{\delta}(\frac{i}{\lceil 1/ \epsilon \rceil}) , 
B_n \le \frac{i}{\lceil 1/ \epsilon \rceil}\right\} \\
= & \bigcup_{i=1}^{NM\lceil 1/ \epsilon \rceil}\bigcup_{n=1}^N \left\{|A_n| \ge f_{\delta}(\frac{i}{\lceil 1/ \epsilon \rceil}) , 
B_n \le \frac{i}{\lceil 1/ \epsilon \rceil}\right\}
\end{align}
Thus 
\begin{align}
\P\left(\bigcup_{n=1}^N \left\{|A_n| \ge f_{\delta}(B_n+ \epsilon) \right\}\right) 
\le &\sum_{i=1}^{NM\lceil 1/ \epsilon \rceil} \PP\left(\bigcup_{n=1}^N \left\{|A_n| \ge f_{\delta}(\frac{i}{\lceil 1/ \epsilon \rceil}) , 
B_n \le \frac{i}{\lceil 1/ \epsilon \rceil}\right\}\right)\\
\le& NM\lceil 1/ \epsilon \rceil\delta \quad \left(\mbox{By \pref{eq:prob under t}}\right)
\end{align}
\end{proof}

\end{document}